\def\eqref#1{equation~\ref{#1}}
\def\1{\bm{1}}
\DeclareMathAlphabet{\mathsfit}{\encodingdefault}{\sfdefault}{m}{sl}
\SetMathAlphabet{\mathsfit}{bold}{\encodingdefault}{\sfdefault}{bx}{n}
\theoremstyle{plain}
\newtheorem{theorem}{Theorem}[section]
\newtheorem{lemma}[theorem]{Lemma}
\newtheorem{corollary}[theorem]{Corollary}
\theoremstyle{definition}
\newtheorem{assumption}[theorem]{Assumption}
\theoremstyle{remark}
\title{Difficult Examples Hurt Unsupervised Contrastive Learning: A Theoretical Perspective}
\author{
  {\bf Yi-Ge Zhang}$^{1}$\thanks{Equal contribution} \qquad
  {\bf Jingyi Cui}$^{1}$\footnotemark[1] \qquad
  {\bf Qiran Li}$^{2}$\thanks{Work was done when he was an undergraduate intern at Peking University} \qquad
  {\bf Yisen Wang}$^{1,3}$\thanks{Corresponding author: Yisen Wang (yisen.wang@pku.edu.cn)} \\
  $^1$State Key Lab of General AI, 
  School of Intelligence Science and Technology, Peking University \\
  $^2$School of Engineering, Hong Kong University of Science and Technology  \\
  $^3$Institute for Artificial Intelligence, Peking University \\
}
\begin{document}

\maketitle

\begin{abstract}
Unsupervised contrastive learning has shown significant performance improvements in recent years, often approaching or even rivaling supervised learning in various tasks. However, its learning mechanism is fundamentally different from supervised learning. Previous works have shown that difficult examples (well-recognized in supervised learning as examples around the decision boundary),  which are essential in supervised learning, contribute minimally in unsupervised settings. In this paper, perhaps surprisingly, we find that the direct removal of difficult examples, although reduces the sample size, can boost the downstream classification performance of contrastive learning. To uncover the reasons behind this, we develop a theoretical framework modeling the similarity between different pairs of samples. Guided by this framework, we conduct a thorough theoretical analysis revealing that the presence of difficult examples negatively affects the generalization of contrastive learning. Furthermore, we demonstrate that the removal of these examples, and techniques such as margin tuning and temperature scaling can enhance its generalization bounds, thereby improving performance.
Empirically, we propose a simple and efficient mechanism for selecting difficult examples and validate the effectiveness of the aforementioned methods, which substantiates the reliability of our proposed theoretical framework.
\end{abstract}

\section{Introduction}

\begin{wrapfigure}{r}{0.3\textwidth}
    \centering
    \vspace{-3mm}
    \includegraphics[width=0.3\textwidth]{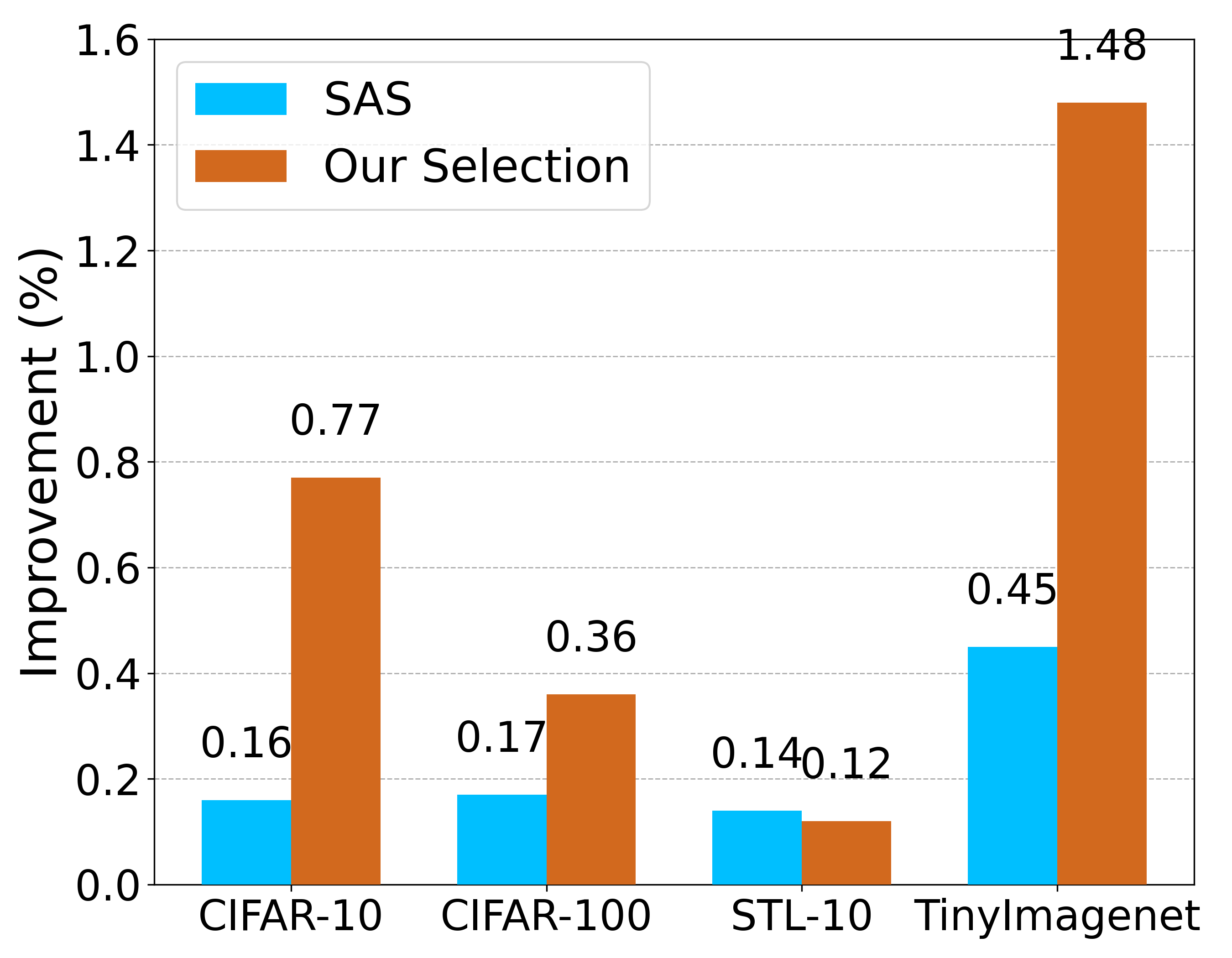}
    \vspace{-8mm}
    \caption{Excluding difficult examples improves unsupervised contrastive learning.}\label{fig::intro}
    \vspace{-4mm}
\end{wrapfigure}

Contrastive learning has demonstrated exceptional empirical performance in the realm of unsupervised representation learning, effectively learning high-quality representations of high-dimensional data using substantial volumes of unlabeled data by aligning an anchor point with its augmented views in the embedding space \citep{CaronMMGBJ20, chen2020simple, chen2020improved, chen2021empirical, he2020momentum}. Unsupervised contrastive learning may own quite different working mechanisms from supervised learning, as discussed in \cite{joshi2023data}. For example, difficult examples (also known as difficult-to-learn examples in \cite{joshi2023data}), which contribute the most to supervised learning, contribute the least or even negatively to contrastive learning performance. They show that on image datasets such as CIFAR-100 and STL-10, excluding 20\%-40\% of the examples does not negatively impact downstream task performance. More surprisingly, their results showed, but somehow failed to notice, that excluding these samples on certain datasets like STL-10 can lead to performance improvements in downstream tasks.

Taking a step further beyond their study, we find that this surprising result is not just a specialty of a certain dataset, but a universal phenomenon across multiple datasets.
Specifically, we run SimCLR on the original CIFAR-10, CIFAR-100, STL-10, and TinyImagenet datasets, the SAS core subsets \citep{joshi2023data} selected with a deliberately tuned size, and a subset selected by a sample removal mechanism to be proposed in this paper.
In Figure \ref{fig::intro}, we report the gains of linear probing accuracy by using the subsets compared with the original datasets. 
We see that on all these benchmark datasets, excluding a certain fraction of examples results in comparable and even better downstream performance. This result is somewhat anti-intuitive because deep learning models trained with more samples, benefiting from lower sample error, usually perform better. Yet our observation indicates that difficult examples can hurt unsupervised contrastive learning performances.
This observation naturally raises a question:
\begin{quote}
    \emph{What is the mechanism behind difficult examples impacting the learning process of unsupervised contrastive learning?}
\end{quote}

To comprehensively characterize such impact, we first develop a theoretical framework, i.e., the similarity graph, to describe the similarity between different sample pairs. Specifically, pairs containing difficult samples,  termed as difficult pairs, exhibit higher similarities than other different-class pairs. Based on this similarity graph, we derive the linear probing error bounds of contrastive learning models trained with and without difficult samples, proving that the presence of difficult examples negatively affects performance. Next, we prove that the most straightforward idea of directly removing difficult examples improves the generalization bounds. Further, we also theoretically demonstrate that commonly used techniques such as margin tuning \citep{zhou2023zero} and temperature scaling \citep{khaertdinov2022temp,kukleva2023temperature,zhang2021temperature} mitigate the negative effects of difficult examples by modifying the similarity between sample pairs from different perspectives, thereby improving the generalization bounds. Experimentally, we propose a simple but effective mechanism for selecting difficult samples that does not rely on pre-trained models. The performance improvements achieved by addressing difficult samples through the aforementioned methods align with our theoretical analysis of the generalization bounds. 

The contributions of this paper are summarized as follows:
\begin{itemize}
        \item We find that removing certain training examples boosts the performance of unsupervised contrastive learning is a universal empirical phenomenon on multiple benchmark datasets. Through a mixing-image experiment, we conjecture that the removal of difficult examples is the cause.
	\item We design a theoretical framework that models the similarity between different pairs of samples to characterize how difficult samples in contrastive learning affect the generalization of downstream tasks. Based on this framework, we theoretically prove that the existence of difficult samples hurts contrastive learning performances.
	\item We theoretically analyze how possible solutions, i.e. directly removing difficult samples, margin tuning, and temperature scaling, can address the issue of difficult examples by improving the generalization bounds in different ways.
	\item In experiments, we propose a simple and efficient mechanism for selecting difficult examples and validate the effectiveness of the aforementioned methods, which substantiates the reliability of our proposed theoretical framework.
\end{itemize}

\section{Difficult Examples Hurt: A Mixing Image Experiment}

We start this section by revealing that difficult examples do hurt contrastive learning performances through a proof-of-concept toy experiment. 



The concept of difficult examples is primarily motivated by supervised learning, where the learning difficulty depends on factors such as data quality \citep{li2019gradient,shin2020strategies}, sample neighbors \citep{chen2020measuring}, and category distribution \citep{cui2019class,santiago2021low}. Difficult samples (also termed as difficult-to-learn examples or hard samples) usually lie around the Bayes decision boundary and therefore render high loss or large gradient norm during training \citep{joshi2023data}, e.g., a cat that looks like a dog or a blurry image. In the purely unsupervised learning paradigm, as labels and decision boundaries are absent, we use sample similarity to represent difficult samples in a pair-wise manner, i.e., difficult sample pairs are inter-class sample pairs with high similarity, which leads to their susceptibility to being wrongly clustered during self-supervised pre-training.
It is somewhat related to hard negative samples, a pure unsupervised learning concept defined as highly similar negative samples to the anchor point, but is different in nature. (See Appendix \ref{sec::relatedworks} for more discussions.)

\begin{wrapfigure}{r}{0.28\textwidth}
    \centering
    \vspace{-9mm}
    \includegraphics[width=0.30\textwidth]{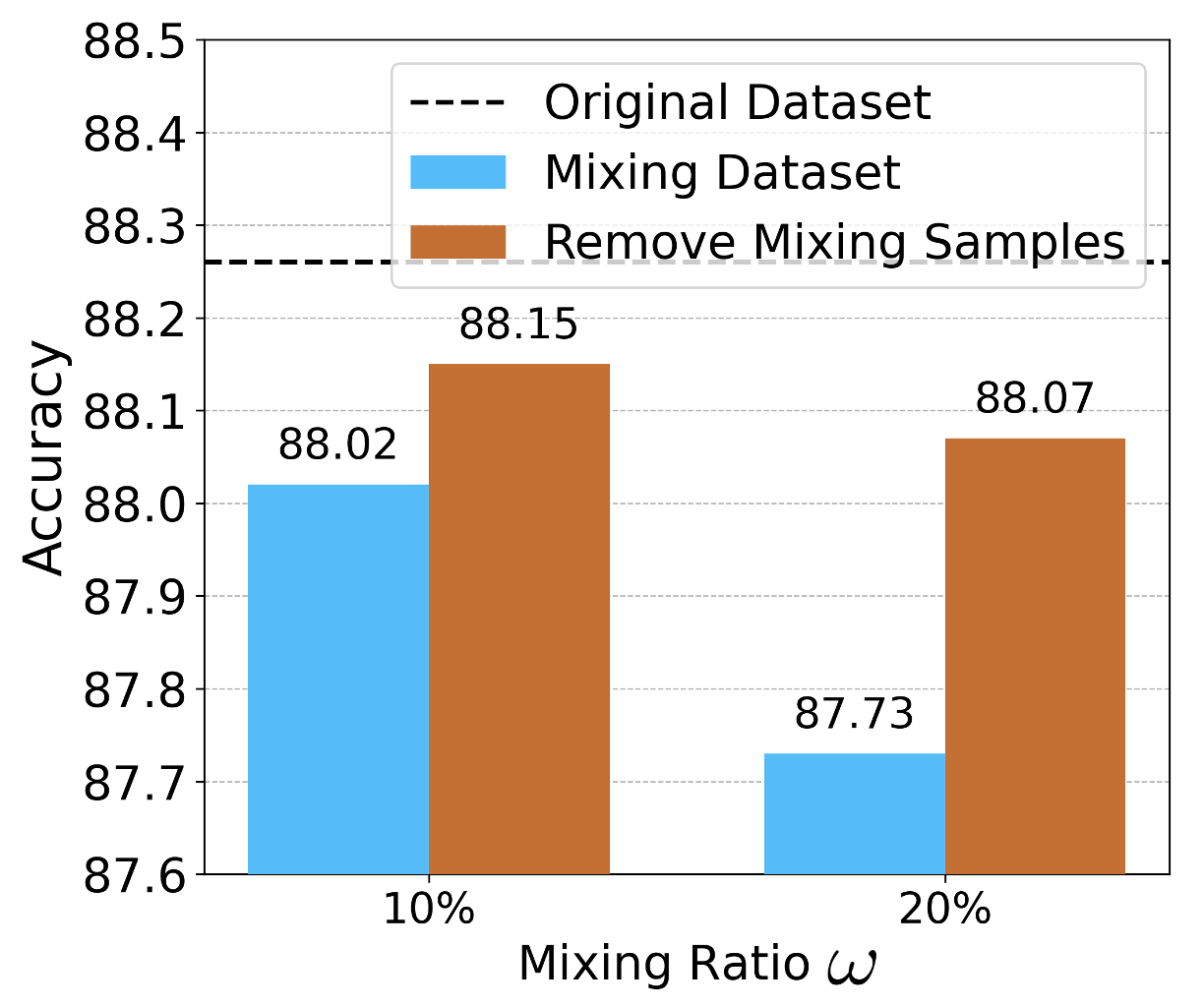}
    \vspace{-8mm}
    \caption{Excluding (mixed) difficult examples improves performance.}
    \vspace{-6mm}
    \label{fig::mix_remove}
\end{wrapfigure}

In real datasets, as difficult examples rely on the specific classifier trained in the supervised learning manner, we can not precisely know the ground truth difficult examples. Therefore, we in turn add additional difficult examples and observe the effects of these examples. Specifically, we generate a  mixing-image dataset containing more difficult samples by mixing a $\omega$ fraction of images on CIFAR-10 dataset at the pixel level (these samples lie around the class difficult), termed as $\omega$-Mixed CIFAR-10 datasets. Then, we train the representative contrastive learning algorithm SimCLR ~\citep{chen2020simple}  on the original, 10\%-, and 20\%-Mixed CIFAR-10 datasets using ResNet18 model. We report the linear probing accuracy in Figure \ref{fig::mix_remove}.

Compared with the model trained on the original dataset, we find that with the mixed difficult examples included in the training dataset, the performance of contrastive learning drops. This result indicates that the (mixed) difficult samples significantly negatively impact contrastive learning. As the mixing ratio $\omega$ increases, the performance drops, indicating that more difficult examples lead to worse contrastive learning performances.

Moreover, we show that removing the (mixed) difficult samples can boost performance. Specifically, we compare performance on the Mixed CIFAR-10 datasets with that on the datasets removing the mixed examples. As shown in Figure \ref{fig::mix_remove}, despite being trained with a smaller sample size, models trained on datasets removing the mixed examples perform better than the ones trained with the mixed examples, which further verifies that difficult examples hurt unsupervised contrastive learning, and removal of these difficult examples can boost learning performance.

\section{Theoretical Characterization of Why Difficult Examples Hurt Contrastive Learning}

In this section, to explain why difficult examples negatively impact the performance of contrastive learning, we provide theoretical evidence on generalization bounds.
In Section \ref{sec::prelim} we present the necessary preliminaries that lay the foundation for our theoretical analysis. In Section \ref{sec::toymodel}, we introduce the similarity graph describing difficult examples. In Section \ref{sec::boundwhard}, we respectively derive error bounds of contrastive learning with and without difficult examples. 



\subsection{Preliminaries}\label{sec::prelim}

\textbf{Notations.} 
Given a natural data $\bar{x} \in \bar{\mathcal{X}}:= \mathbb{R}^d$, we denote the distribution of its augmentations by $\mathcal{A}(\cdot|\bar{x})$ and the set of all augmented data by $\mathcal{X}$, which is assumed to be finite but exponentially large.
For mathematical simplicity, we assume class-balanced data with $n$ denoting the number of augmented samples per class and $r+1$ denoting the number of classes, hence $|\mathcal{X}| = n(r+1)$. Let $n_d$ represent the number of difficult examples per class and $\mathbb{D}_d$ the set of difficult examples. In addition, we denote $k$ as the feature dimension in contrastive representation learning.

\textbf{Similarity Graph (Augmentation Graph).}
As described in \cite{haochen2021provable}, an augmentation graph $\mathcal{G}$ represents the distribution of augmented samples, where the edge weight $w_{xx'}$ signifies the joint probability of generating augmented views $x$ and $x'$ from the same natural data, i.e., $w_{xx'}:= \mathbb{E}_{\bar{x} \sim \bar{\mathcal{P}}} [\mathcal{A}(x|\bar{x}) \mathcal{A}(x'|\bar{x})]$, where $\bar{\mathcal{P}}$ denotes the distribution of natural data. The total probability across all pairs of augmented data sums up to 1, i.e., $\sum_{x,x' \in \mathcal{X}} w_{xx'} = 1$. 
The adjacency matrix of the augmentation graph is denoted as $\boldsymbol{A} = (w_{xx'})_{x, x' \in \mathcal{X}}$, and the normalized adjacency matrix is $\bar{\boldsymbol{A}} = \boldsymbol{D}^{-1/2} \boldsymbol{A} \boldsymbol{D}^{-1/2}$, where $D := \mathrm{diag}(w_x)_{x \in \mathcal{X}}$, and $w_x := \sum_{x'\in\mathcal{X}}w_{xx'}$. The concept of augmentation graph is further extended to describe similarities beyond image augmentation, such as cross-domain images \citep{shen2022connect}, multi-modal data \citep{zhang2023generalization}, and labeled examples \citep{cui2023rethinking}.

\textbf{Contrastive losses.}
For theoretical analysis, we consider the spectral contrastive loss $\mathcal{L}(f)$ proposed by \cite{haochen2021provable} as a good performance proxy for the widely used InfoNCE loss
\begin{align}
	\mathcal{L}_{\mathrm{Spec}} (f) &:= -2 \cdot \mathbb{E}_{x,x^+}[f(x)^\top f(x^+)] 
     + \mathbb{E}_{x,x'}\Big[\big(f(x)^\top f(x')\big)^2\Big],
\end{align}
where $x$, $x^+$, and $x'$ represent the anchor, positive sample, and negative sample, respectively.
As proved in \cite{equal2,johnson2022contrastive,  equal3}, the spectral contrastive loss and the InfoNCE loss share the same population minimum with variant kernel derivations. Further, the spectral contrastive loss is theoretically shown to be equivalent to the matrix factorization loss. For $F = (u_x)_{x \in \mathcal{X}}$, where $u_x = w_x^{1/2}f(x)$, the matrix factorization loss is:
\begin{align}
	\mathcal{L}_{\mathrm{mf}}(F) := \|\bar{\boldsymbol{A}} - FF^\top\|_F^2 = \mathcal{L}_{\mathrm{Spec}}(f) + const.
\end{align}




\subsection{Modeling of Difficult Examples}\label{sec::toymodel}

We start by introducing a similarity graph, 
to describe the relationships between various samples. 
In contrastive learning, examples are used in a pairwise manner, so we define difficult sample pairs as sample pairs that include at least one difficult sample. As difficult examples lie around the decision boundary, they should have higher augmentation similarity to examples from different classes. 
Therefore, it is natural for us to define the difficult pairs as different-class sample pairs with higher similarity. Correspondingly, easy pairs are defined as different-class sample pairs containing no difficult samples, or different-class sample pairs with lower similarity.




Specifically, we define the augmentation similarity between a sample and itself as $1$.
Then we assume the similarity between same-class samples is $\alpha$ (Figure \ref{fig::alpha}), the similarity between a sample (conceptually far away from the class boundary) and all samples from other classes is $\beta$ (Figure \ref{fig::beta}), and the similarity between different-class boundary samples (conceptually close to the class boundary) is $\gamma$ (Figure \ref{fig::gamma}). 
Naturally, we have $\beta < \gamma < \alpha < 1$.

\begin{figure*}[!htb]
    \centering
    \vspace{-5mm}
    \subfigure[Similarity $\alpha$.]{
        \centering
        \includegraphics[width = 0.2\linewidth]{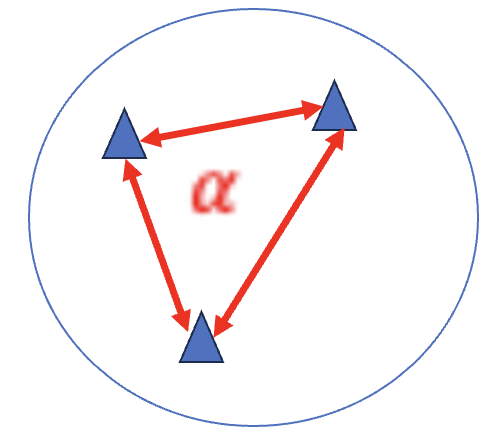}
        \label{fig::alpha}
    }
    \subfigure[Similarity $\beta$.]{
        \centering
        \includegraphics[width = 0.2\linewidth]{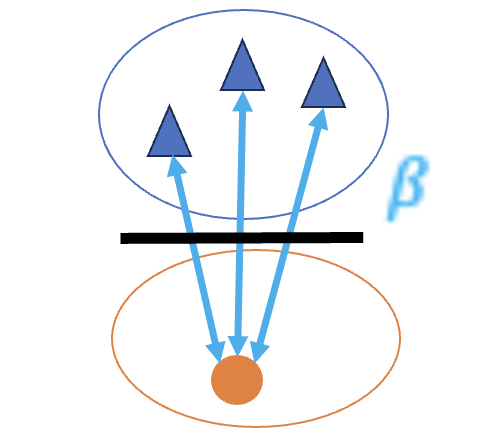}
        \label{fig::beta}
    }
    \subfigure[Similarity $\gamma$.]{
        \centering
        \includegraphics[width = 0.2\linewidth]{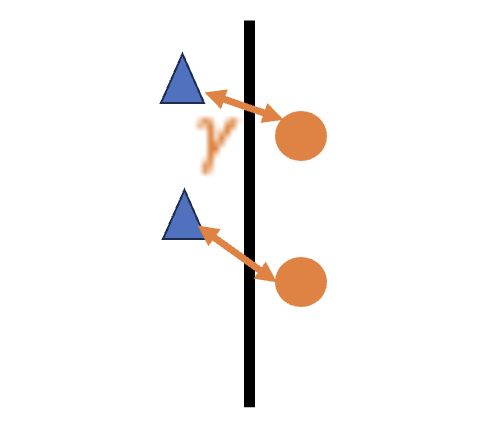}
        \label{fig::gamma}
    }
    \subfigure[Adjacency matrix.]{
        \centering
        \includegraphics[width = 0.3\linewidth]{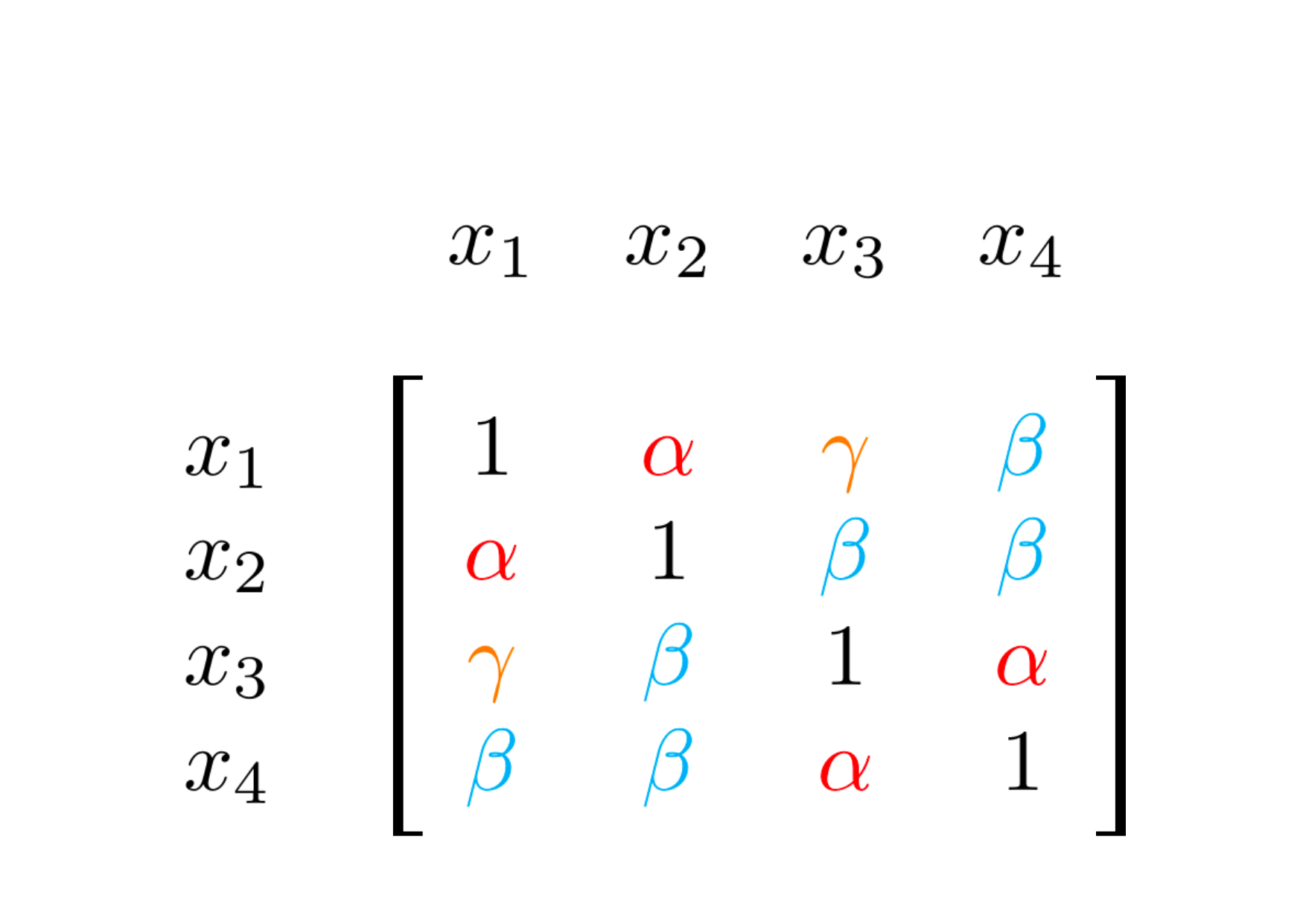}
        \label{fig::toymatrix}
    }
    
    \caption{Modeling of difficult examples. The similarity between same-class samples is $\alpha$ (a), the similarity between different-class difficult samples is $\gamma$ (c), and the similarity between other samples is $\beta$ (b). The adjacency matrix of a 4-sample subset is shown in (d). }
    
    \label{fig::toymodel}
    \vspace{-2mm}
\end{figure*}

In Figure \ref{fig::toymatrix}, we illustrate our modeling of adjacency matrix through a 4-sample subset $\mathbb{D}_4 := {x_1, x_2, x_3, x_4}$, where $x_1$ and $x_2$ belong to Class 0, and $x_3$ and $x_4$ belong to Class 1. 
We define $x_1$ and $x_3$ as difficult samples (assuming these two samples are distributed around the classification boundary as depicted in Figure \ref{fig::gamma}), i.e. $x_1, x_3 \in \mathbb{D}_d$. Conversely, we define $x_2$ and $x_4$ (assuming these samples are distributed far from the classification boundary) as easy samples, i.e. $x_2, x_4 \in \mathbb{D}_4 \setminus \mathbb{D}_d$. The relationship between each pair of samples in $\mathbb{D}_4$ can be mathematically formulated as an adjacency matrix shown in Figure \ref{fig::toymatrix}.

In addition, the above modeling could be relaxed by adding random terms to the similarity values. Specifically, for some constant $\epsilon>0$, for a similarity matrix $\boldsymbol{A}=(\tilde{a}_{ij})$, we replace $a_{ij}$ with $\tilde{a}_{ij}=a_{ij}+\epsilon \cdot \varepsilon_{ij}$ for $i\neq j$, where $a_{ij}$ takes values in $\{\alpha, \beta, \gamma\}$, $\varepsilon_{ij}=\varepsilon_{ji}$ are i.i.d. random variables with mean 0 and variance 1.
We discuss the relaxation in detail in Section \ref{sec::relaxation}.

In what follows, our theoretical analysis is based on the generalized similarity graph containing $|\mathcal{X}| = n(r+1)$ samples. The formal definition of the generalized adjacency matrix is in Appendix \ref{app::proofs}.

\subsection{Error Bounds with and without Difficult Examples}\label{sec::boundwhard}


Based on the similarity graph in Section \ref{sec::toymodel}, we derive the linear probing error bounds for contrastive learning models trained with and without difficult examples in Theorems \ref{thm::boundwo} and \ref{thm::boundwh}. 
We mention that we adopt the label recoverability (with labeling error $\delta$) and realizability assumptions from \cite{haochen2021provable}. 

{
\begin{assumption}[Labels are recoverable from augmentations]\label{ass::recoverable}
	Let $\bar{x} \sim \mathcal{P}_{\bar{\mathcal{X}}}$ and $y(\bar{x})$ be its label. Let the augmentation $x \sim \mathcal{A}(\cdot|\bar{x})$. We assume that there exists a classifier $g$ that can predict $y(\bar{x})$ given $x$ with error at most $\delta$, i.e. $g(x)=y(\bar{x})$ with probability at least $1-\delta$.
\end{assumption}

\begin{assumption}[Realizability]\label{ass::realizable}
	Let $\mathcal{F}$ be a hypothesis class containing functions from $\mathcal{X}$ to $\mathbb{R}^k$. We assume that at least one of the global minima of $\mathcal{L}_{\mathrm{Spec}}$ belongs to $\mathcal{F}$.
\end{assumption}

Assumption \ref{ass::recoverable} indicates that labels are recoverable from the augmentations, and Assumption \ref{ass::realizable} indicates that the universal minimizer of the population spectral contrastive loss can be realized by the hypothesis class. The proofs are shown in Appendix \ref{app::proofhurt}. 
}

\begin{theorem}[Error Bound without Difficult Examples]\label{thm::boundwo}
	Denote 
	$\mathcal{E}_{\mathrm{w.o.}}$
	as the linear probing error of a contrastive learning model trained on a dataset without difficult examples.
	Then 
	\begin{align}\label{eq::boundwo}
		\mathcal{E}_{\mathrm{w.o.}} \leq \frac{4\delta}{1 - \frac{1-\alpha}{(1-\alpha)+n\alpha+nr\beta}}+8\delta.
	\end{align}
\end{theorem}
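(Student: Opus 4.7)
The plan is to derive the bound by invoking the spectral linear probing framework of \cite{haochen2021provable} applied to a highly symmetric adjacency matrix. Because the subgraph on $\mathcal{X}\setminus\mathbb{D}_d$ has complete permutation symmetry within each class and between classes, the normalized adjacency $\bar{A}$ admits a closed-form spectrum, and the claimed bound reduces to plugging the $(r+2)$-th eigenvalue into their generic error bound.

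First I would write the (unnormalized) adjacency matrix restricted to $\mathcal{X}\setminus\mathbb{D}_d$ in the compact form $A = (1-\alpha)I + (\alpha-\beta)B + \beta J$, where $B$ is block-diagonal with $r+1$ blocks equal to the $n\times n$ all-ones matrix and $J$ is the $n(r+1)\times n(r+1)$ all-ones matrix. This uses only that every diagonal entry is $1$, every within-class off-diagonal entry is $\alpha$, and every between-class entry is $\beta$, as stipulated in Section \ref{sec::toymodel}. By the symmetry of the construction every row sum equals $d := 1+(n-1)\alpha + nr\beta$, so $D = dI$ (up to the global scaling needed to make $\sum_{x,x'} w_{xx'}=1$, which cancels in the normalized matrix) and hence $\bar{A} = A/d$.

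Next I would diagonalize $\bar{A}$ using the common eigenspace decomposition of $I$, $B$ and $J$. On the span of $\mathbf{1}$ we have $B\mathbf{1} = n\mathbf{1}$ and $J\mathbf{1} = n(r+1)\mathbf{1}$, giving eigenvalue $1$. On the $r$-dimensional subspace of vectors that are constant on each class block but sum to zero across blocks, $B$ acts as multiplication by $n$ while $J$ vanishes, producing eigenvalue $(1 + (n-1)\alpha - n\beta)/d$. On the $(n-1)(r+1)$-dimensional subspace of vectors that sum to zero within each class block, both $B$ and $J$ vanish, producing eigenvalue $(1-\alpha)/d$. Since $\alpha > \beta$ the middle eigenvalue strictly exceeds $(1-\alpha)/d$, and a dimension count $1 + r + (n-1)(r+1) = n(r+1)$ confirms the sorted list, so the $(r+2)$-th largest eigenvalue of $\bar{A}$ is exactly $\lambda_{r+2} = (1-\alpha)/d$.

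Finally I would invoke the linear probing error bound of \cite{haochen2021provable} (their Theorem 3.7) with $k = r+1$ classes, under the inherited label recoverability (with error $\delta$) and realizability assumptions. That bound takes the form $\mathcal{E} \le 4\delta/(1-\lambda_{k+1}) + 8\delta$. Substituting the computed eigenvalue yields $1 - \lambda_{r+2} = 1 - (1-\alpha)/d = (n\alpha + nr\beta)/((1-\alpha)+n\alpha+nr\beta)$, which, after rewriting as $1 - (1-\alpha)/((1-\alpha)+n\alpha+nr\beta)$, matches the denominator in the stated bound \eqref{eq::boundwo}. The main obstacle is purely bookkeeping: one must confirm that after sorting, the $(k+1)$-th eigenvalue is the block-internal eigenvalue $(1-\alpha)/d$ rather than the class-direction eigenvalue, which follows cleanly from the assumed ordering $\beta < \alpha$. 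Once the spectrum is correctly identified, the theorem follows by direct substitution.
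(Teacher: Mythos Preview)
Your proposal is correct and follows essentially the same route as the paper's proof: both decompose the adjacency matrix into the identity, a within-class all-ones block, and a global all-ones block (the paper writes this with Kronecker products, you with $B$ and $J$), observe that the degree matrix is a scalar multiple of the identity, read off the three eigenvalue levels from the common eigenspace structure, and then plug $\lambda_{k+1}=(1-\alpha)/d$ into the HaoChen et al.\ bound for $k>r$. The only cosmetic slip is the phrase ``with $k=r+1$ classes'': $k$ is the embedding dimension, not the number of classes, so you should say you take $k\ge r+1$ (as the paper does) so that $\lambda_{k+1}$ falls into the bottom eigenvalue block.
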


\begin{theorem}[Error Bound with Difficult Examples]\label{thm::boundwh}
	Denote 
	$\mathcal{E}_{\mathrm{w.d.}}$
	as the linear probing error of a contrastive learning model trained on a dataset with $n_d$ difficult examples per class.
	Then if $r+1 \leq k < n_d +r+1$, there holds
	\begin{align}\label{eq::boundwh}
		\mathcal{E}_{\mathrm{w.d.}} \leq \frac{4\delta}{1 - \frac{(1-\alpha)+r(\gamma-\beta)}{(1-\alpha)+n\alpha+nr\beta+n_dr(\gamma-\beta)}}+8\delta.
	\end{align}
\end{theorem}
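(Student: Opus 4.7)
The plan is to invoke the linear-probing generalization bound of \cite{haochen2021provable}, which, under the label-recoverability (with error $\delta$) and realizability assumptions adopted above, implies that a top-$k$ spectral contrastive learner satisfies $\mathcal{E} \leq \frac{4\delta}{1 - \lambda_{k+1}(\bar{\boldsymbol{A}})} + 8\delta$, where $\lambda_{k+1}(\bar{\boldsymbol{A}})$ is the $(k+1)$-th largest eigenvalue of the normalized augmentation-graph adjacency matrix. The theorem then reduces to showing that, for $k$ in the prescribed range, $\lambda_{k+1}(\bar{\boldsymbol{A}})$ is upper bounded by the fraction in \eqref{eq::boundwh}.

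First, I would write out the $N \times N$ adjacency $\boldsymbol{A}$ (with $N = n(r+1)$) and the diagonal degree matrix $\boldsymbol{D}$ for the similarity graph from Section \ref{sec::toymodel}. A direct calculation shows that $\boldsymbol{D}$ takes only two distinct values: $d_e := (1-\alpha) + n\alpha + nr\beta$ on easy samples and $d_h := d_e + n_d r(\gamma - \beta)$ on difficult samples. The quantity $d_h$ is precisely the denominator appearing in the bound, which is a strong hint that the eigenvector realizing $\lambda_{k+1}$ concentrates on the difficult samples.

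Next, I would exploit the two symmetries of $\boldsymbol{A}$ -- arbitrary permutations of classes, and arbitrary permutations of samples within each (class, difficulty) block -- to decompose $\mathbb{R}^N$ into simultaneously $\boldsymbol{A}$- and $\boldsymbol{D}$-invariant subspaces: a ``within-block'' null subspace of dimension $(n-2)(r+1)$ on which $\boldsymbol{A}$ acts as $(1-\alpha) I$ and contributes the generalized eigenvalues $(1-\alpha)/d_e$ and $(1-\alpha)/d_h$ (with multiplicities $(n-n_d-1)(r+1)$ and $(n_d-1)(r+1)$); a $2$-dimensional class-symmetric subspace spanned by $\boldsymbol{1}$ and the global difficult indicator $\boldsymbol{d}$; and a $2r$-dimensional class-antisymmetric subspace on which $\boldsymbol{A}$ acts as $r$ identical copies of a single $2\times 2$ block. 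On the latter two subspaces the generalized eigenvalue problem $\boldsymbol{A} v = \mu \boldsymbol{D} v$ reduces to two $2\times 2$ quadratics whose roots I would compute in closed form: the symmetric quadratic returns $\mu = 1$ (the top eigenvalue, matching HaoChen's convention) together with a subleading root, and the antisymmetric quadratic returns two further eigenvalues each of multiplicity $r$.

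Finally, I would enumerate all $N$ generalized eigenvalues in decreasing order under the ordering hypothesis $0 \leq \beta < \gamma < \alpha < 1$ and verify that, for every $k$ with $n_d \leq k \leq n_d + r + 1$, the $(k+1)$-th eigenvalue is dominated by $\frac{(1-\alpha) + r(\gamma-\beta)}{d_h}$. Substituting this upper bound into the HaoChen inequality completes the proof.

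The main obstacle will be the last step: carefully comparing the closed-form $2\times 2$ block roots with the ``within-block'' eigenvalues $(1-\alpha)/d_e,\,(1-\alpha)/d_h$ and with the target expression, and showing that the condition $n_d \leq k \leq n_d + r + 1$ precisely identifies the segment of the spectrum in which this uniform upper bound holds. A secondary technical point is verifying that the realizability and label-recoverability assumptions of \cite{haochen2021provable} translate cleanly to the graph of Section \ref{sec::toymodel}, so that their bound is directly applicable and the numerical constants $4$ and $8$ in \eqref{eq::boundwh} are inherited unchanged.
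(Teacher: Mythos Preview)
Your plan is sound but follows a genuinely different route from the paper's proof. The paper does not attempt to block-diagonalize $\bar{\boldsymbol{A}}$ exactly. Instead it expands $\bar{\boldsymbol{A}}$ via Kronecker products and splits it as $\bar{\boldsymbol{A}}=\bar{\boldsymbol{A}}_1+\bar{\boldsymbol{A}}_2$, where each summand is built from \emph{commuting} rank-one projections and therefore has a fully explicit spectrum (no quadratics); the target quantity $[(1-\alpha)+r(\gamma-\beta)]/d_h$ appears directly as an eigenvalue of $\bar{\boldsymbol{A}}_1$ with multiplicity $n_d$. The eigenvalue of the sum is then controlled by the Weyl--Fulton inequality $\lambda_{k+1}(\bar{\boldsymbol{A}})\le\min_{i+j=k+2}\lambda_i(\bar{\boldsymbol{A}}_1)+\lambda_j(\bar{\boldsymbol{A}}_2)$, after which Theorem~B.3 of \cite{haochen2021provable} is applied exactly as you propose. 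Your symmetry-based reduction trades that external eigenvalue inequality for an exact computation, at the cost of solving and ordering the roots of two $2\times2$ generalized eigenvalue problems and then comparing those roots to the target; in principle this yields the exact $\lambda_{k+1}$ and hence a bound at least as sharp as the stated one, so the theorem would follow \emph{a fortiori}. One heads-up on the step you flag as the main obstacle: the paper's own argument pins down the admissible range as $r+1\le k<n_d+r+1$, which does not literally coincide with the $n_d\le k\le n_d+r+1$ in the theorem statement, so when you carry out the ordering of your closed-form roots you should expect some friction in matching the stated range exactly.
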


\textbf{Discussions.} By comparing Theorems \ref{thm::boundwo} and \ref{thm::boundwh}, also considering that $\frac{(1-\alpha)+r(\gamma-\beta)}{(1-\alpha)+n\alpha+nr\beta+n_dr(\gamma-\beta)} > \frac{1-\alpha}{(1-\alpha)+n\alpha+nr\beta}$, we see the presence of difficult examples leads to a strictly worse linear probing error bound for a contrastive learning model. Moreover, more challenging difficult examples (larger $\gamma-\beta$) result in worse error bounds. When $\gamma = \beta$, i.e. no difficult examples exist, the bound in Theorem \ref{thm::boundwh} reduces to that in Theorem \ref{thm::boundwo}.

Intuitively, through the augmentation graph, contrastive learning could be understood as a spectral clustering problem \citep{haochen2021provable}. As the difficult examples lie very close to the classification boundary, they could fall into the wrong clusters during self-supervised pre-training. In the downstream applications, the wrongly clustered examples provide false prior knowledge to the downstream classification, which harms the performance of all test samples.

\section{Theoretical Characterization on Eliminating Effects of Difficult Examples}\label{sec::elimhard}

Building on the above unified theoretical framework, we theoretically analyze that directly removing difficult samples (Section \ref{sec::thmremove}), margin tuning (Section \ref{sec::thmmargin}), and temperature scaling (Section \ref{sec::thmtemp}) can handle difficult examples by improving the generalization bounds in different ways.

\subsection{Removing Difficult Samples}\label{sec::thmremove}

In Figures \ref{fig::intro} and \ref{fig::mix_remove}, empirical experiments demonstrated that removing difficult samples can improve learning performance. Corollary \ref{thm::boundremove} provides a theoretical explanation for this counter-intuitive phenomenon based on our established framework.

\begin{corollary}\label{thm::boundremove}
	Denote $\mathcal{E}_\mathrm{R}$ as the linear probing error of a contrastive learning model trained on a selected subset removing all difficult examples $\mathbb{D}_d$.
	Then there holds
	\begin{align}\label{eq::boundremove}
		\mathcal{E}_\mathrm{R} \leq \frac{4\delta}{1 - \frac{1-\alpha}{(1-\alpha)+(n-n_d)\alpha+(n-n_d)r\beta}}+8\delta.
	\end{align}
\end{corollary}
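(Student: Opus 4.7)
The plan is to observe that Corollary \ref{thm::boundremove} is essentially a direct specialization of Theorem \ref{thm::boundwo} applied to the subset of easy-to-learn samples. After removing all difficult-to-learn examples $\mathbb{D}_d$, each class contains exactly $n - n_d$ samples, and by the construction of the similarity graph in Section \ref{sec::toymodel}, within this subset every same-class pair has augmentation similarity $\alpha$ and every different-class pair has similarity $\beta$. The parameter $\gamma$ no longer appears because it is defined only for pairs involving difficult-to-learn samples. Thus the restricted graph has precisely the structure underlying Theorem \ref{thm::boundwo}, only with a reduced per-class count.

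First I would formally write down the restricted similarity graph on $\mathcal{X} \setminus \mathbb{D}_d$. The normalized adjacency matrix retains the block form used in Theorem \ref{thm::boundwo}: each row-sum (degree) becomes $w_x = (1-\alpha) + (n-n_d)\alpha + (n-n_d)r\beta$, and the spectrum consists of the same closed-form eigenvalue expressions as in Theorem \ref{thm::boundwo}, with $n$ replaced by $n - n_d$ throughout. The key quantity controlling generalization, namely the ratio $\lambda_{k+1}/\lambda_k$ of the $(k{+}1)$-th to $k$-th eigenvalues, becomes $\tfrac{1-\alpha}{(1-\alpha)+(n-n_d)\alpha+(n-n_d)r\beta}$.

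Next I would invoke the linear-probing error bound of \cite{haochen2021provable} (the same machinery that underlies Theorems \ref{thm::boundwo} and \ref{thm::boundwh}), which converts the spectral gap of $\bar{\boldsymbol{A}}$ into a downstream error bound of the form $\tfrac{4\delta}{1 - \lambda_{k+1}/\lambda_k} + 8\delta$. Plugging in the ratio from the previous step immediately yields the right-hand side of \eqref{eq::boundremove}. This step is essentially bookkeeping, since the abstract bound is already established and only the eigenvalue substitution needs to be tracked.

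The main obstacle I foresee is not the linear algebra but verifying that the label recoverability (with error $\delta$) and realizability assumptions continue to hold on the reduced graph. One must argue that the labeling error on the subset is still at most $\delta$ — intuitively clear because the removed ambiguous boundary examples were exactly those inflating the labeling error — and that the embedding class remains rich enough to realize the top-$k$ eigenvectors of the reduced $\bar{\boldsymbol{A}}$. I would handle both points by appealing to the assumption framework already stated in Appendix \ref{app::proofhurt}, noting that restricting the sample space to the complement of $\mathbb{D}_d$ can only decrease, never increase, $\delta$, so using $\delta$ as an upper bound remains valid. Once these conditions are checked, the corollary follows without further computation.
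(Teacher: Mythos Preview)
Your approach is essentially the same as the paper's: after removing $\mathbb{D}_d$, the restricted adjacency matrix has exactly the block structure $\boldsymbol{A}_{\mathrm{w.o.}}$ (now of size $(n-n_d)(r+1)\times(n-n_d)(r+1)$), so Theorem \ref{thm::boundwo} applies verbatim with $n$ replaced by $n-n_d$. The paper's proof is even terser than yours, simply stating this reduction without your additional check on the assumptions.

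One technical slip worth fixing: the quantity $\tfrac{1-\alpha}{(1-\alpha)+(n-n_d)\alpha+(n-n_d)r\beta}$ is not a \emph{ratio} $\lambda_{k+1}/\lambda_k$; it is $\lambda_{k+1}$ itself, the $(k{+}1)$-th eigenvalue of the normalized adjacency matrix $\bar{\boldsymbol{A}}$. The bound from \cite{haochen2021provable} (their Theorem~B.3, used in the proof of Theorem \ref{thm::boundwo}) has the form $\tfrac{4\delta}{1-\lambda_{k+1}}+8\delta$, with no ratio of eigenvalues. Since the value you substitute is correct, your final expression is right, but the description of the underlying machinery should be adjusted.
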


Corollary \ref{thm::boundremove} shows that when the difficult examples are removed, the linear probing error bound has the same form as the case where no difficult examples are present (Theorem \ref{thm::boundwo}), but with $n$ replaced by $n-n_d$. Compared with the case without removing difficult examples (Theorem \ref{thm::boundwh}), the bound in \eqref{eq::boundremove} is smaller than that in \eqref{eq::boundwh} when $\gamma-\beta > \frac{n_d(1-\alpha)(\alpha+r\gamma)}{r[(1-\alpha)+(n-n_d)(\alpha+r\beta)]}$. This indicates that removing difficult examples enhances the error bound when these samples are significantly harder than the easy ones (i.e., large $\gamma-\beta$) or when the number of difficult samples is small (i.e., small $n_d$). 





\subsection{Margin Tuning}\label{sec::thmmargin}

Aside from sample removal, we also consider using the margin tuning technique to deal with difficult examples. Specifically, we add additional margin parameters to the similarity of difficult pairs in the loss function (see Eq.~\ref{eq::InfoNCEM}). Here, we delve into how margin tuning can enhance the generalization in the presence of difficult examples.



\begin{theorem}\label{thm::lossmargin}
	The margin tuning loss is equivalent to the matrix factorization loss
	\begin{align}
		\mathcal{L}_{\mathrm{mf-M}}(F) := \|(\bar{\boldsymbol{A}} - \bar{\boldsymbol{M}}) - FF^\top\|_F^2,
	\end{align}
	where $\bar{\boldsymbol{A}}$ is the normalized adjacency matrix, and $\bar{\boldsymbol{M}}$ is the normalized margin matrix.
\end{theorem}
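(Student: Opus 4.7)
The plan is to mirror the proof of the equivalence between the spectral contrastive loss and the matrix factorization loss (as recalled from \cite{haochen2021provable} in Section \ref{sec::prelim}), but with the target matrix $\bar{\boldsymbol{A}}$ replaced by the margin-shifted target $\bar{\boldsymbol{A}} - \bar{\boldsymbol{M}}$. First, I would make the margin tuning loss explicit: modifying each similarity $f(x)^\top f(x')$ by a pair-specific margin $m_{xx'}$ (or equivalently, treating the normalized entry $\bar{M}_{xx'}$ as a correction to $\bar{A}_{xx'}$) yields a spectral-style loss whose positive and quadratic-negative terms inherit additional margin contributions. Setting $\bar{\boldsymbol{M}} := \boldsymbol{D}^{-1/2} \boldsymbol{M} \boldsymbol{D}^{-1/2}$ guarantees that $\bar{\boldsymbol{A}} - \bar{\boldsymbol{M}}$ retains the same normalization structure as $\bar{\boldsymbol{A}}$, which is essential for the cross term to unfold cleanly.

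Next, I would simply expand the Frobenius norm:
\begin{align}
\|(\bar{\boldsymbol{A}} - \bar{\boldsymbol{M}}) - FF^\top\|_F^2 = \|\bar{\boldsymbol{A}} - \bar{\boldsymbol{M}}\|_F^2 - 2\,\mathrm{tr}\bigl((\bar{\boldsymbol{A}} - \bar{\boldsymbol{M}}) FF^\top\bigr) + \|FF^\top\|_F^2.
\end{align}
The first summand is a constant in $F$ and can be absorbed into the additive constant. Using the reparameterization $u_x = w_x^{1/2} f(x)$, the third summand becomes $\sum_{x,x'} (u_x^\top u_{x'})^2 = \sum_{x,x'} w_x w_{x'} (f(x)^\top f(x'))^2$, which is precisely $\mathbb{E}_{x,x'}[(f(x)^\top f(x'))^2]$ under the marginal distribution induced by $w_x$. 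This recovers the quadratic negative term of the spectral loss untouched by the margin.

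Then I would unfold the cross term. Substituting the normalization,
\begin{align}
2\,\mathrm{tr}\bigl((\bar{\boldsymbol{A}} - \bar{\boldsymbol{M}}) FF^\top\bigr) = 2\sum_{x,x'} \bigl(\bar{A}_{xx'} - \bar{M}_{xx'}\bigr) u_x^\top u_{x'} = 2\sum_{x,x'} \bigl(w_{xx'} - m_{xx'}\bigr) f(x)^\top f(x'),
\end{align}
because the factors $w_x^{-1/2}$ in $\bar{A}_{xx'}$ and $\bar{M}_{xx'}$ cancel against the factors $w_x^{1/2}$ in $u_x$. The $w_{xx'}$ piece reproduces the positive-pair expectation $\mathbb{E}_{x,x^+}[f(x)^\top f(x^+)]$ (since $w_{xx'}$ is exactly the joint probability of a positive pair), while the $m_{xx'}$ piece is the margin correction. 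Collecting everything gives $\mathcal{L}_{\mathrm{mf\text{-}M}}(F) = \mathcal{L}_{\mathrm{Spec\text{-}M}}(f) + \|\bar{\boldsymbol{A}} - \bar{\boldsymbol{M}}\|_F^2$, proving the claimed equivalence.

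The main obstacle is not the algebra but the \emph{bookkeeping}: one must pin down the precise definition of the margin matrix $\boldsymbol{M}$ so that its normalization matches that of $\boldsymbol{A}$ and so that the pair-specific margins plug into the spectral loss in a consistent way. Once $\bar{\boldsymbol{M}} = \boldsymbol{D}^{-1/2}\boldsymbol{M}\boldsymbol{D}^{-1/2}$ is adopted, the proof is a direct transplant of the HaoChen--Wei--Ma argument; without that alignment, the $w_x^{1/2}$ factors fail to cancel and the loss picks up spurious pair-weighted terms that no longer match any natural contrastive objective.
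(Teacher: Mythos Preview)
Your expansion of the Frobenius norm and the reparameterization $u_x = w_x^{1/2} f(x)$ are exactly the ingredients the paper uses, and the direction you take (starting from $\mathcal{L}_{\mathrm{mf\text{-}M}}$ and unfolding) is the mirror image of the paper's derivation (which starts from the spectral margin loss \eqref{eq::specmargin} and completes the square). Either direction would suffice.

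However, there is a genuine gap, and it is precisely the ``bookkeeping'' step you flag. You set $\bar{\boldsymbol{M}} := \boldsymbol{D}^{-1/2}\boldsymbol{M}\boldsymbol{D}^{-1/2}$ by analogy with $\bar{\boldsymbol{A}}$, but the paper defines $\bar{\boldsymbol{M}} := \boldsymbol{D}^{1/2}\boldsymbol{M}\boldsymbol{D}^{1/2}$. With your normalization the cross term becomes $2\sum_{x,x'} m_{xx'}\, f(x)^\top f(x')$, an \emph{unweighted} margin correction. But the paper's margin tuning loss \eqref{eq::specmargin} places the margin inside the squared negative term, $\mathbb{E}_{x,x'}[(f(x)^\top f(x') + m_{x,x'})^2]$, which upon expansion contributes $2\sum_{x,x'} w_x w_{x'}\, m_{xx'}\, f(x)^\top f(x')$, a correction \emph{weighted by the marginals}. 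To match that, one needs $\bar{M}_{xx'} u_x^\top u_{x'} = w_x w_{x'} m_{xx'} f(x)^\top f(x')$, which forces $\bar{M}_{xx'} = \sqrt{w_x w_{x'}}\, m_{xx'}$, i.e., the paper's $\bar{\boldsymbol{M}} = \boldsymbol{D}^{1/2}\boldsymbol{M}\boldsymbol{D}^{1/2}$.

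So your algebra is internally consistent but recovers a different spectral loss than \eqref{eq::specmargin}; the $w_x^{1/2}$ factors do \emph{not} cancel in the way you claim for the margin piece. Once you swap in the correct normalization, the rest of your argument goes through verbatim and coincides with the paper's proof.
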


Theorem \ref{thm::lossmargin} indicates that adjusting margins alters the similarity graph by subtracting a normalized margin matrix $\bar{\boldsymbol{M}}$ from the normalized similarity matrix $\bar{\boldsymbol{A}}$. Intuitively, by subtracting the additional similarity values of difficult examples with appropriately chosen margins, the remaining values will match those of easy examples.
Specifically, in the following Theorem \ref{thm::boundmargin}, we show that properly chosen margins can eliminate the negative impact of difficult examples.


\begin{theorem}\label{thm::boundmargin}
	Denote $\mathcal{E}_\mathrm{M}$ as the linear probing error for the margin tuning loss \eqref{eq::specmargin} trained on a dataset with difficult samples $\mathbb{D}_d$.
	If we let 
	\begin{align}\label{eq::mxx}
		m_{x,x'} = c_0/(c_1^2c_2) \cdot (\gamma-\beta)
	\end{align}
	for $y(x) \neq y(x'), x,x' \in \mathbb{D}_d$,
	where $c_0:=(1-\alpha)+n\alpha+(n-n_d)r\beta$, $c_1:=(1-\alpha)+n\alpha+nr\beta + n_d r (\gamma-\beta)$ and $c_2:=(1-\alpha)+n\alpha+nr\beta$,
	and $m_{x,x'}=0$ for $x,x' \notin \mathbb{D}_d$, 
	then we have 
    \vspace{-2mm}
	\begin{align}
		\mathcal{E}_\mathrm{M} = \mathcal{E}_{\mathrm{w.o.}}.
	\end{align}
\end{theorem}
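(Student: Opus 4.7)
The plan is to apply Theorem \ref{thm::lossmargin} to recast the margin tuning setting as a matrix factorization problem for the corrected matrix $\bar{\boldsymbol{A}} - \bar{\boldsymbol{M}}$, and then to verify that the choice \eqref{eq::mxx} makes the resulting spectral structure equivalent to that of the normalized adjacency matrix of a dataset with no difficult-to-learn examples. Once this spectral equivalence is in place, the linear probing error bound collapses to $\mathcal{E}_{\mathrm{w.o.}}$ by replaying the argument that establishes Theorem \ref{thm::boundwo}.

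First I would lay out the block structure of $\bar{\boldsymbol{A}}$. Difficult samples have degree $c_1 = (1-\alpha) + n\alpha + nr\beta + n_d r(\gamma - \beta)$ while easy samples have degree $c_2 = (1-\alpha) + n\alpha + nr\beta$, so the entries of $\bar{\boldsymbol{A}}$ on different-class pairs take three distinct values: $\gamma/c_1$ on difficult–difficult pairs, $\beta/c_2$ on easy–easy pairs, and $\beta/\sqrt{c_1 c_2}$ on mixed pairs. The ``target'' matrix, namely the normalized adjacency matrix arising when the difficult similarities are suppressed to $\beta$, differs from $\bar{\boldsymbol{A}}$ only on the difficult–difficult block at leading order, so it suffices to correct that block.

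Next I would show that setting $m_{x,x'}$ as in \eqref{eq::mxx} on difficult different-class pairs, and $0$ elsewhere, is exactly what is needed to remove the extra contribution coming from $\gamma - \beta$ on that critical block. The key algebraic identity driving the whole computation is
\[
\gamma c_2 - \beta c_1 \;=\; (\gamma - \beta)\, c_0,
\]
which one verifies by direct expansion using $c_0 = (1-\alpha) + n\alpha + (n - n_d) r \beta$. The constant $c_0/(c_1^2 c_2)$ appearing in \eqref{eq::mxx} is the way this identity reads once the normalization of $m_{x,x'}$ used to define $\bar{\boldsymbol{M}}$ in \eqref{eq::specmargin} is taken into account: it combines a factor $c_0/c_2$ from the identity above with a factor $1/c_1^2$ from the degree rescaling on the difficult rows and columns.

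Finally I would re-run the \cite{haochen2021provable}-style argument that produced \eqref{eq::boundwo}, now applied to $\bar{\boldsymbol{A}} - \bar{\boldsymbol{M}}$. The scalar driving the bound is $\frac{1-\alpha}{(1-\alpha) + n\alpha + nr\beta}$, and the corrected matrix yields this value exactly; together with the label recoverability and realizability assumptions this gives $\mathcal{E}_\mathrm{M} = \mathcal{E}_{\mathrm{w.o.}}$. The main obstacle I anticipate is the bookkeeping around the asymmetric normalization: the corrected matrix $\bar{\boldsymbol{A}} - \bar{\boldsymbol{M}}$ is not literally the normalized adjacency of an easy-only graph (its mixed and diagonal blocks still carry the degree $c_1$), so I must argue that the spectral quantity used in Theorem~\ref{thm::boundwo} depends only on the block sum structure that is preserved under \eqref{eq::mxx}, rather than relying on pointwise equality of the two matrices. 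Establishing this insensitivity, together with the algebraic identity above, is the technical heart of the argument.
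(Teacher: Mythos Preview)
Your overall strategy---apply Theorem~\ref{thm::lossmargin} and then compare the spectrum of $\bar{\boldsymbol{A}}-\bar{\boldsymbol{M}}$ to that of the easy-only normalized adjacency---matches the paper, and your algebraic identity $\gamma c_2-\beta c_1=(\gamma-\beta)c_0$ is exactly the computation that makes the difficult--difficult different-class block collapse to $\beta/c_2$. Where your plan diverges is in the scope of the margin correction: you set $m_{x,x'}=0$ ``elsewhere,'' i.e.\ on every pair outside the difficult--difficult different-class block. But the theorem statement only forces $m_{x,x'}=0$ when \emph{both} $x,x'\notin\mathbb{D}_d$; the mixed pairs (one difficult, one easy) and the same-class difficult pairs are left unspecified, and the paper explicitly remarks that these values are deferred to the proof.

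This is where the gap lies. The paper's proof assigns nonzero margins on those remaining blocks as well---of the form $-\tfrac{n_d r(\gamma-\beta)}{c_1^2 c_2}$, $-\tfrac{n_d r(\gamma-\beta)}{c_1^2 c_2}\alpha$, and $-\tfrac{\sqrt{c_1/c_2}-1}{c_1 c_2}\cdot\{\alpha,\beta\}$ for the diagonal, same-class, and mixed cases respectively---so that $\bar{\boldsymbol{A}}-\bar{\boldsymbol{M}}$ is \emph{literally} entrywise equal to the easy-only normalized adjacency matrix. Once that exact equality holds, $\mathcal{E}_\mathrm{M}=\mathcal{E}_{\mathrm{w.o.}}$ is immediate and no spectral insensitivity argument is needed. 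Your anticipated ``main obstacle'' is thus self-inflicted: by zeroing out the mixed and same-class corrections you leave the $c_1$ normalization on those blocks, and the resulting matrix is not the easy-only one. Arguing that the relevant eigenvalue nonetheless coincides would require genuinely new work (and it is not clear it is even true in the exact, non-asymptotic sense the theorem claims). The fix is simply to read the statement as leaving those entries free and then solve for them so that the corrected matrix matches the target exactly.
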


Note that when $n$ is large enough, $m_{x,x'}$ for $x$ or $x' \notin \mathbb{D}_d$ are higher-order infinitesimals relative to \eqref{eq::mxx}, and primarily affect normalization rather than the core problem. 
Thus, we focus on cases where $x, x' \in \mathbb{D}_d$ and defer specific forms of other $m_{x,x'}$ values to the proofs for brevity.

Theorem \ref{thm::boundmargin} shows that with appropriately chosen margins, the linear probing error bound for the margin tuning loss in the presence of difficult examples becomes equivalent to the standard contrastive loss without such examples, as indicated in Theorem \ref{thm::boundwo}. Since $\eqref{eq::mxx} > 0$, this suggests applying a positive margin to the difficult example pairs. Additionally, the more challenging the example pairs are (i.e., the larger $\gamma-\beta$), the greater the margin value should be.


\subsection{Temperature Scaling}\label{sec::thmtemp}


We also consider the widely used temperature scaling technique in eliminating the negative effects of difficult examples.  Specifically, we add an additional temperature scaling parameter to the base temperature of difficult pairs in the loss function and assign the base temperature to all the other pairs (see Eq.~\ref{eq::InfoNCET}). Here, we investigate how temperature scaling can enhance generalization. 



\begin{theorem}\label{thm::losstemp}
	The temperature scaling loss is equivalent to the matrix factorization loss
	\begin{align}
		\mathcal{L}_{\mathrm{mf-T}}(F) := \|\boldsymbol{T} \odot \bar{\boldsymbol{A}} - FF^\top\|_{wF}^2,
	\end{align}
	where $\bar{\boldsymbol{A}}$ is the normalized adjacency matrix of similarity graph, $\boldsymbol{T} \odot \bar{\boldsymbol{A}}$ is the element-wise product of matrices $\boldsymbol{T}$ and $\bar{\boldsymbol{A}}$, and $\|\cdot\|_{wF}$ is the weighted Frobenius norm (specified in the proof).
\end{theorem}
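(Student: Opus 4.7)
The plan is to mimic the strategy used for Theorem \ref{thm::lossmargin} and the baseline equivalence $\mathcal{L}_{\mathrm{Spec}}(f) = \mathcal{L}_{\mathrm{mf}}(F) + \text{const}$: write the temperature-scaled spectral contrastive loss as an explicit double sum over $\mathcal{X}\times\mathcal{X}$ in terms of $f(x)^\top f(x')$, then expand the proposed weighted matrix factorization objective and match the three resulting terms (constant, linear-in-$f$, and quadratic-in-$f$) coefficient by coefficient. Concretely, I would start from the pair-wise temperature-scaled spectral loss
\begin{align*}
\mathcal{L}_{\mathrm{Spec-T}}(f) = -2\,\mathbb{E}_{x,x^+}\!\big[f(x)^\top f(x^+)/\tau_{x,x^+}\big] + \mathbb{E}_{x,x'}\!\Big[\big(f(x)^\top f(x')/\tau_{x,x'}\big)^2\Big],
\end{align*}
and unfold the expectations using the joint measure $w_{xx'}$ for positive pairs and the product measure $w_x w_{x'}$ for independent pairs, obtaining
\begin{align*}
\mathcal{L}_{\mathrm{Spec-T}}(f) = -2\sum_{x,x'} w_{xx'}\tau_{xx'}^{-1} f(x)^\top f(x') + \sum_{x,x'} w_x w_{x'}\tau_{xx'}^{-2}\big(f(x)^\top f(x')\big)^2.
\end{align*}

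Next I would expand the right-hand side of the target identity. Taking the weighted Frobenius norm to be $\|M\|_{wF}^2 := \sum_{x,x'} W_{xx'} M_{xx'}^2$ for a weight matrix $\boldsymbol{W}$ to be identified, and setting $u_x = w_x^{1/2}f(x)$ as in the definition of $F$, I use the two key identities $u_x^\top u_{x'} = \sqrt{w_x w_{x'}}\,f(x)^\top f(x')$ and $\bar{A}_{xx'} u_x^\top u_{x'} = w_{xx'}\,f(x)^\top f(x')$ to expand
\begin{align*}
\|\boldsymbol{T}\!\odot\!\bar{\boldsymbol{A}} - FF^\top\|_{wF}^2 = \sum_{x,x'} W_{xx'} T_{xx'}^2 \bar{A}_{xx'}^2 - 2\sum_{x,x'} W_{xx'} T_{xx'} w_{xx'} f(x)^\top f(x') + \sum_{x,x'} W_{xx'} w_x w_{x'}\big(f(x)^\top f(x')\big)^2.
\end{align*}
Matching coefficients against $\mathcal{L}_{\mathrm{Spec-T}}(f)$ pins down both unknown matrices: the quadratic-in-$f$ term forces $W_{xx'} = \tau_{xx'}^{-2}$, and the linear-in-$f$ term then forces $W_{xx'}T_{xx'} = \tau_{xx'}^{-1}$, i.e.\ $T_{xx'} = \tau_{xx'}$. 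The first sum depends only on the data distribution and temperatures, so it is absorbed into the additive constant, completing the equivalence.

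The main obstacle is identifying the two matrices $\boldsymbol{T}$ and $\boldsymbol{W}$ consistently, since there are three terms in the expansion but only two free matrices: this is not automatic, and it is what dictates that the correct choice of weighted Frobenius norm has weights $W_{xx'}=\tau_{xx'}^{-2}$ while $\boldsymbol{T}$ stores the temperatures themselves. A secondary technical point is that the result should be written for general pair-dependent temperatures $\tau_{xx'}$, so the weighted norm cannot be reduced to a scalar multiple of the standard Frobenius norm; this is precisely why the authors introduce the weighted version. Once the weights are fixed, the remainder of the proof is routine algebraic expansion and collection of terms, exactly in the spirit of the proofs of $\mathcal{L}_{\mathrm{mf}}$ and $\mathcal{L}_{\mathrm{mf-M}}$.
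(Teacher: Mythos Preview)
Your proposal is correct and follows essentially the same approach as the paper: both expand the temperature-scaled spectral loss into a double sum and identify it with $\|\boldsymbol{T}\odot\bar{\boldsymbol{A}}-FF^\top\|_{wF}^2$ up to an additive constant, arriving at the same weighted Frobenius norm with weights $W_{xx'}=\tau_{xx'}^{-2}$ and $T_{xx'}=\tau_{xx'}$. The only cosmetic difference is that the paper completes the square forward from $\mathcal{L}_{\mathrm{T}}$, whereas you expand the target and match coefficients backward; the algebra is identical.
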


Theorem \ref{thm::losstemp} shows that adjusting temperatures modifies the similarity graph by multiplying the temperature values with the normalized similarity matrix $\bar{\boldsymbol{A}}$. Intuitively, by scaling the similarity values between difficult examples, we can match these values to those of easy examples, thereby mitigating the negative effects of difficult examples. Specifically, the following Theorem \ref{thm::boundtemp} outlines the appropriate temperature values to be chosen.

\begin{theorem}\label{thm::boundtemp}
	Denote $\mathcal{E}_{\mathrm{T}}$ as the linear probing error for the temperature scaling loss \eqref{eq::spectemp} trained on a dataset with difficult samples $\mathbb{D}_d$. If we let 
	\begin{align}\label{eq::txx}
		\tau_{x,x'} = (c_1/c_2)(\beta/\gamma)
	\end{align}
	for $y(x)\neq y(x'), x,x' \in \mathbb{D}_d$,
	where $c_1:=(1-\alpha)+n\alpha+nr\beta + n_d r (\gamma-\beta)$ and $c_2:=(1-\alpha)+n\alpha+nr\beta$, and $\tau_{x,x'}=1$ for $x,x' \notin \mathbb{D}_d$,
	then we have
	\begin{align}
		\mathcal{E}_{\mathrm{T}} 
		\leq \frac{4[1-(n_d/n)^2+(\gamma/\beta)^2(n_d/n)^2] \delta}{1-\frac{1-\alpha}{(1-\alpha)+n\alpha+nr\beta}} + 8\delta.
	\end{align}
\end{theorem}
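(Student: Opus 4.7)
The plan is to reduce the analysis to a matrix-factorization problem on the rescaled target matrix via Theorem~\ref{thm::losstemp}, then reuse the spectral machinery that underpins Theorems~\ref{thm::boundwo} and \ref{thm::boundwh}, but carried out on $\boldsymbol{T} \odot \bar{\boldsymbol{A}}$ under the weighted Frobenius norm. First I would plug the prescribed $\tau_{x,x'} = (c_1/c_2)(\beta/\gamma)$ from \eqref{eq::txx} into $\boldsymbol{T}\odot\boldsymbol{A}$: every cross-class difficult-to-learn entry of value $\gamma$ becomes $(c_1/c_2)\beta$, while all other entries are unchanged. The key algebraic observation is that this replacement makes the entire cross-class block uniform (as in the no-difficult-examples case), up to a multiplicative factor determined by the ratio $c_1/c_2$ of difficult-to-easy degrees.

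Second, I would recompute the degree vector $w_x^{\mathrm{new}}$ of $\boldsymbol{T}\odot\boldsymbol{A}$ separately for difficult and easy nodes, assemble the new normalized matrix $(\boldsymbol{D}^{\mathrm{new}})^{-1/2}(\boldsymbol{T}\odot\boldsymbol{A})(\boldsymbol{D}^{\mathrm{new}})^{-1/2}$, and diagonalize it. Because the cross-class block now has a single uniform value and the same-class block is unchanged, the argument used for Theorem~\ref{thm::boundwo} can be replayed verbatim: the top $r+1$ eigenvectors are class-aligned and the spectral gap at position $k$ is $1 - (1-\alpha)/[(1-\alpha) + n\alpha + nr\beta]$, exactly the denominator appearing in the target bound. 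The choice of the prefactor $c_1/c_2$ in \eqref{eq::txx} is designed precisely so this collapse happens cleanly.

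Third, I would apply the linear-probing generalization inequality of \cite{haochen2021provable} that drives Theorems~\ref{thm::boundwo}--\ref{thm::boundwh}, but now under the weighted Frobenius norm $\|\cdot\|_{wF}$ dictated by Theorem~\ref{thm::losstemp}. Under this reweighting, the per-node contribution to the labeling error inherits a multiplicative factor: weight essentially $1$ on the $n-n_d$ easy samples per class and essentially $\gamma/\beta$ on the $n_d$ difficult samples per class (absorbed from rescaling $\bar{\boldsymbol{A}}$ into $\boldsymbol{T}\odot\bar{\boldsymbol{A}}$). Pooling the squared weighted errors within each class gives the factor $1 - (n_d/n)^2 + (\gamma/\beta)^2 (n_d/n)^2$ in front of $\delta$, which reduces correctly to $1$ in the sanity limits $n_d \to 0$ or $\gamma \to \beta$, recovering Theorem~\ref{thm::boundwo}.

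The main obstacle, as I see it, is the coupled bookkeeping between the two rescalings: the new row sums $w_x^{\mathrm{new}}$ (which feed the normalization and hence the spectral gap) and the per-node weights inside $\|\cdot\|_{wF}$ (which reweight the labeling error). I expect most of the effort to go into showing that the specific ratio $c_1/c_2$ in \eqref{eq::txx} is simultaneously the value that (a) aligns the normalized cross-class entries with the easy-case pattern so that the spectral gap matches Theorem~\ref{thm::boundwo}, and (b) is compatible with the weighted-norm aggregation so that residual cross-terms cancel and the numerator consolidates into the stated $1 - (n_d/n)^2 + (\gamma/\beta)^2 (n_d/n)^2$, with no extra lower-order terms left over.
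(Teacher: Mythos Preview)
Your high-level architecture is right: reduce to the clean spectral gap via the temperature rescaling, then absorb the effect of the weighted Frobenius norm into a multiplicative factor on the labeling error $\delta$. But your second step contains a concrete misstep. Theorem~\ref{thm::losstemp} hands you the target matrix $\boldsymbol{T}\odot\bar{\boldsymbol{A}}$, where $\bar{\boldsymbol{A}}=\boldsymbol{D}^{-1/2}\boldsymbol{A}\boldsymbol{D}^{-1/2}$ uses the \emph{original} degree matrix. There is no renormalization with a new $\boldsymbol{D}^{\mathrm{new}}$; the point of the $c_1/c_2$ factors in the prescribed $\tau_{x,x'}$ is precisely to cancel the degree ratios already sitting inside $\bar{\boldsymbol{A}}$, not to create new ones to be normalized away later. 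Concretely, the cross-class difficult--difficult entry of $\bar{\boldsymbol{A}}$ is $\gamma/c_1$, and multiplying by $\tau=(c_1/c_2)(\beta/\gamma)$ gives $\beta/c_2$, matching the easy--easy entry exactly. Your ``key algebraic observation'' that $\boldsymbol{T}\odot\boldsymbol{A}$ has a uniform cross-class block is false: its entries are $\beta$ and $(c_1/c_2)\beta$, which differ. What is true is that $\boldsymbol{T}\odot\bar{\boldsymbol{A}}$ \emph{equals} the clean normalized matrix of Theorem~\ref{thm::boundwo} outright, with no residual factor---but only once you also fill in $\tau_{x,x'}=\sqrt{c_1/c_2}$ for mixed (difficult--easy) pairs and $\tau_{x,x'}=c_1/c_2$ for same-class difficult pairs, which the theorem statement elides but the argument requires.

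The other place where more is needed than you indicate is the weighted-norm step. The HaoChen--Wei--Gaidon--Ma bound does not apply verbatim under $\|\cdot\|_{wF}$; the paper proves a dedicated lemma extending it, in which the conductance-like quantity $\phi^{\hat{y}}$ is replaced by $\tilde{\phi}^{\hat{y}}=\sum_{x,x'}w_{x,x'}\tau_{x,x'}^{-2}\boldsymbol{1}[\hat{y}(x)\neq\hat{y}(x')]$. The factor $1-(n_d/n)^2+(\gamma/\beta)^2(n_d/n)^2$ then comes from bounding $\tilde{\phi}^{\hat{y}}$ by splitting pairs into those with both endpoints in $\mathbb{D}_d$ (where $\tau^{-2}\le(\gamma/\beta)^2$, contributing mass fraction $(n_d/n)^2$) and the rest (where $\tau^{-2}\le 1$). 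This is a per-\emph{pair} reweighting of edge weights $w_{x,x'}$, not a per-node reweighting as you describe; the $(n_d/n)^2$ arises from the fraction of difficult--difficult pairs, not from squaring a per-node contribution.
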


Likewise, here we only focus on the temperature values between difficult examples, and defer the specific forms of other $\tau_{x,x'}$ values to the proofs for brevity.

Theorem \ref{thm::boundtemp} shows the linear probing error bound of the temperature scaling loss when trained on data containing difficult examples.
Specifically, with large $n$ and $n_d/n \to 0$, we have $\mathcal{E}_{\mathrm{T}}/\mathcal{E}_{\mathrm{w.o.}}-1 \approx O((n_d/n)^2)$ and $\mathcal{E}_{\mathrm{w.d.}}/\mathcal{E}_{\mathrm{w.o.}}-1 \approx O(1/n)$. This indicates that, when $O(n_d) \lesssim O(n^{1/2})$, $\mathcal{E}_{\mathrm{T}}/\mathcal{E}_{\mathrm{w.o.}} \lesssim \mathcal{E}_{\mathrm{w.d.}}/\mathcal{E}_{\mathrm{w.o.}}$, meaning $\mathcal{E}_{\mathrm{T}}$ converges faster to $\mathcal{E}_{\mathrm{w.o.}}$. Detailed calculations show that when $n_d < \sqrt{\frac{r}{(\alpha+r\beta)(\gamma+\beta)}}\beta\cdot n^{1/2}$, there holds $\mathcal{E}_{\mathrm{T}} < \mathcal{E}_{\mathrm{w.d.}}$, which means that temperature scaling improves the error bound.
Note that we have approximately $\tau_{x,x'} \propto \beta/\gamma$.
This inspires us to choose smaller temperature values for the difficult example pairs. The more difficult the example pairs (smaller $\beta/\gamma$), the smaller the temperature values that should be chosen.


\section{Verification Experiments}

This paper primarily focuses on theoretical analysis, explaining how different samples in contrastive learning impact generalization. The experiments in this part are mainly designed to validate the theoretical insights and demonstrate that the proposed directions for improving performance are sound. The experiments are not intended to achieve state-of-the-art results but rather to confirm the correctness of our theoretical findings. We hope that readers will appreciate the theoretical contributions of this work and not focus excessively on the experimental results.


In Section \ref{exp::choose}, we  present an efficient mechanism for selecting difficult samples. We then evaluate the removal of difficult samples (Section \ref{exp::remove}), margin tuning (Section \ref{exp::margin}), and temperature scaling (Section \ref{exp::temp}), all of which are theoretically established to mitigate the impact of these difficult examples. In Section \ref{exp::combine}, we propose a combined method, and discuss the scalability under different paradigms and the connection between difficult samples and long-tail distribution. 
The specific loss forms can be found in Appendix \ref{app::loss}.

\subsection{Difficult Examples Selection}\label{exp::choose}


In this section, we design a simple yet efficient selection mechanism to validate our theoretical analysis, without relying on additional pretrained models or incurring extra computational overhead \citep{joshi2023data}. 

To identify difficult sample pairs which from different classes but with high similarity, we compute the cosine similarity of each sample to other samples in the same batch using features before projector mapping. We define $posHigh$ and $posLow$ as percentiles of the similarity sorted in descending order, where $Sim_{posHigh}$ and $Sim_{posLow}$ are the corresponding similarities. Generally, following the characterization in Section \ref{sec::toymodel} and Appendix \ref{app::proofs}, we can roughly assume $posHigh$ corresponds to $1/(r+1)$, where $r+1$ is the class number\footnote{We do not need to know the exact label of each class. A rough class number is enough, which can be easily known by clustering.}. Sample pairs with cosine similarities above $Sim_{posHigh}$ are considered from the same class. Sample pairs with the similarity between $Sim_{posHigh}$ and $Sim_{posLow}$ are considered as difficult examples. Sample pairs with cosine similarities below $Sim_{posLow}$ are considered as easy-to-learn samples from different classes. Here for $posLow$, we note that when optimizing $\gamma$ of difficult examples, if some easy-to-learn samples are involved, the process will also optimize $\beta$, which is a good thing for the representation learning to push samples from different classes further apart. Therefore, we can easily find a value close to the bottom of the sorted similarity for $posLow$, even 100\%. Experiments in Figure~\ref{fig:poshigh} and Figure~\ref{fig:poslow} show that our method is not sensitive to the exact values of $posHigh$ and $posLow$.

\begin{figure}[h]
	\centering
        \subfigure[Different $posHigh$]{
	\includegraphics[width=0.26\linewidth]{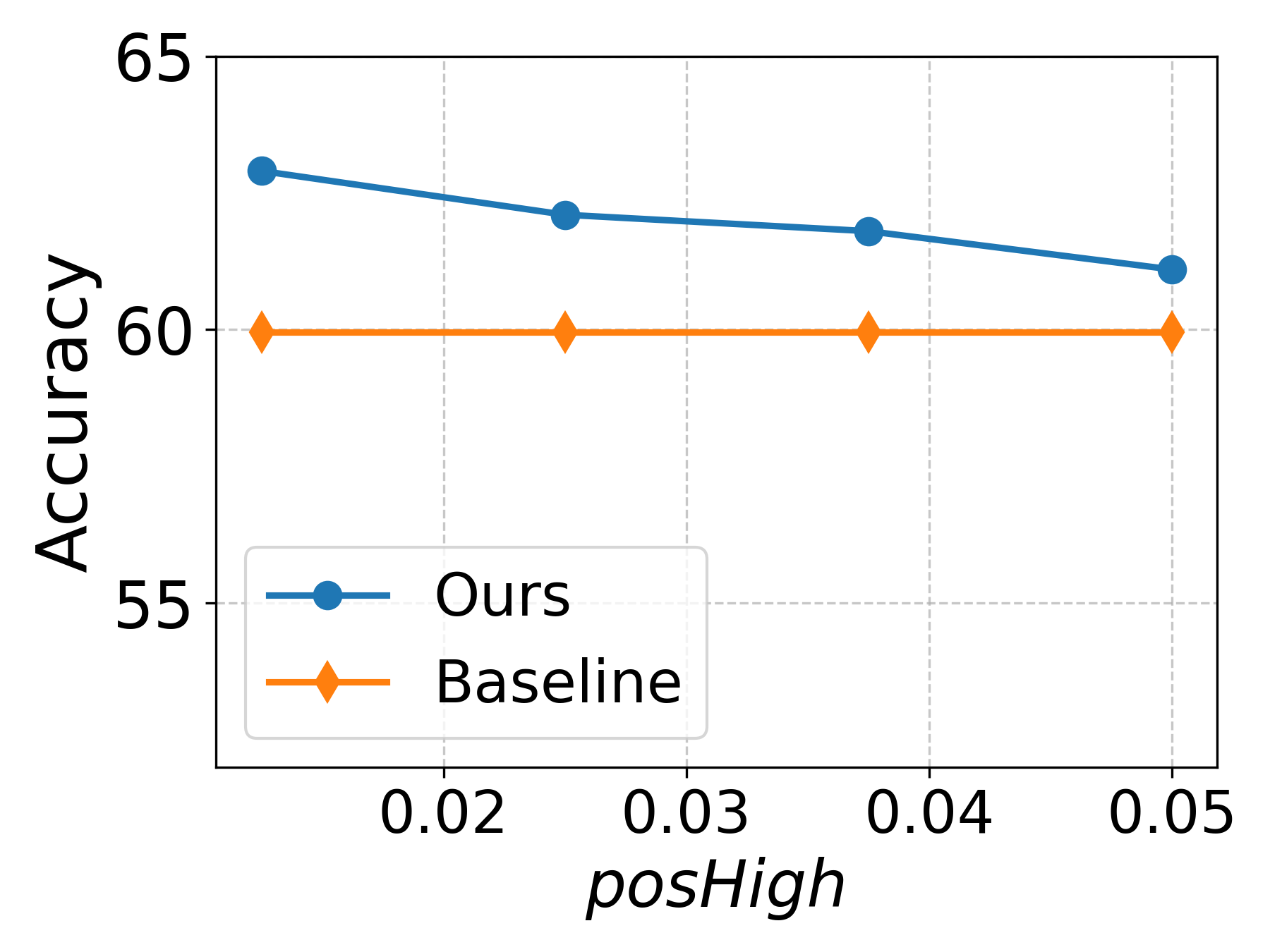}
	\label{fig:poshigh}
        }
        \subfigure[Different $posLow$]{
	\includegraphics[width=0.26\linewidth]{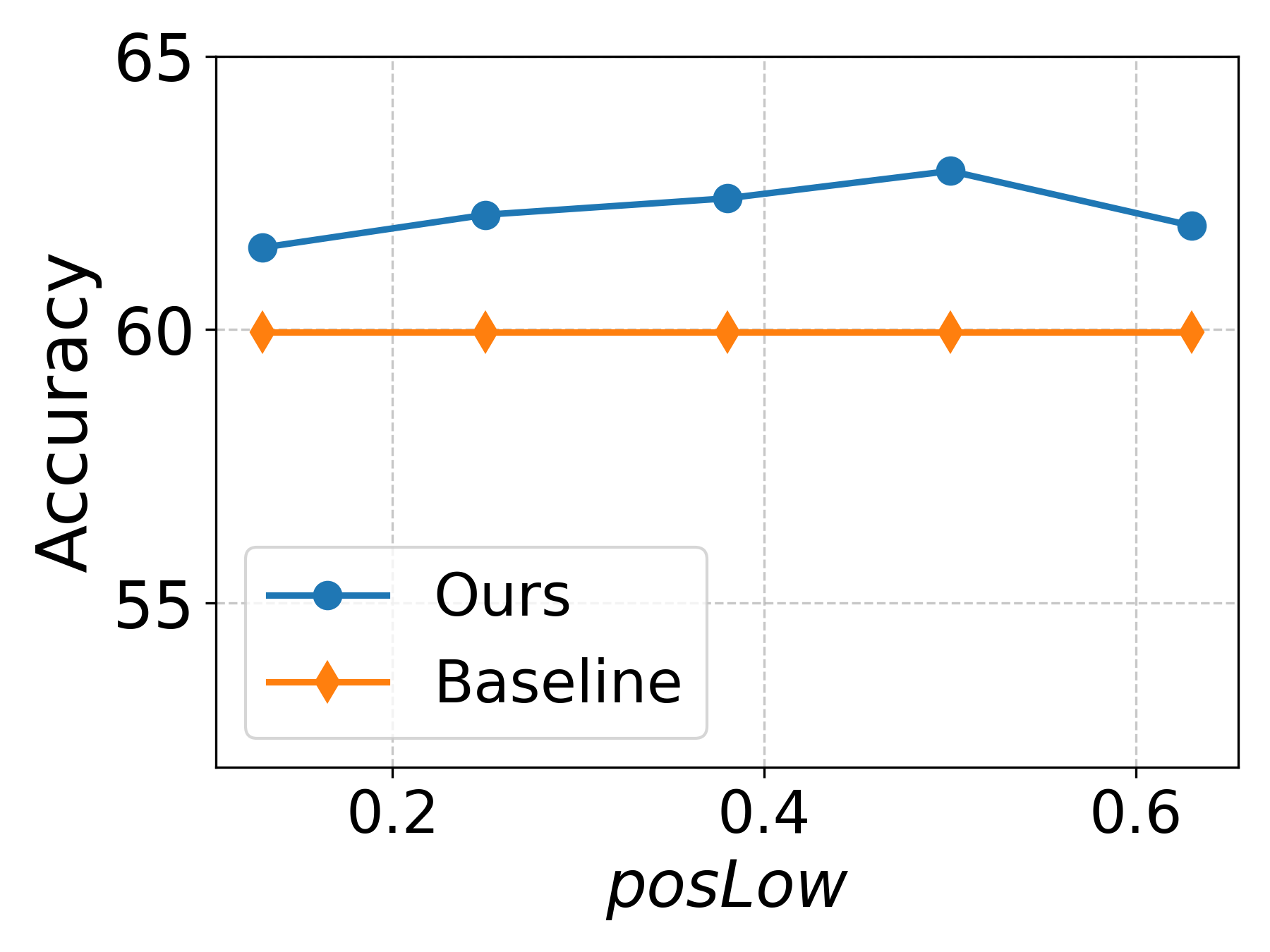}
	\label{fig:poslow}
	}
        \subfigure[Ratio trends in the training]{
	\includegraphics[width=0.26\linewidth]{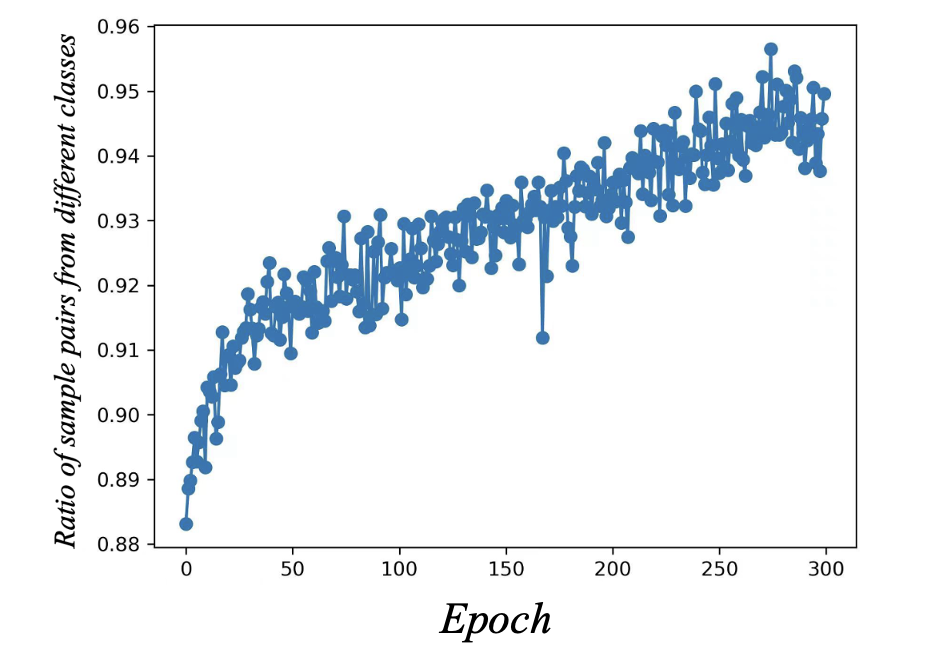}
	\label{fig:trend}
        }
        \vspace{-3mm}
        \caption{Parameter sensitivity of difficult example interval ends $posHigh$ (\ref{fig:poshigh}) and $posLow$ (\ref{fig:poslow}). Parameter analysis on CIFAR-100: the trend of the ratio of sample pairs from different classes in ($Sim_{posLow}$, $Sim_{posHigh}$) during the training process (\ref{fig:trend}). 
        } 
	\label{trend_parameter_analysis}
	\vspace{-3mm}
\end{figure}

Using this selection mechanism, for an augmented sample pair $(x_i, x_j)$ in the current batch, we define the selecting indicator of difficult pairs as 
\begin{align}\label{eq::pij}
	p_{i,j} := \boldsymbol{1}_{[Sim_{posLow} \leq s_{ij} < Sim_{posHigh}]},
\end{align} 
where $s_{i,j}$ denotes the cosine similarity between the representations of $x_i$ and $x_j$, and $\boldsymbol{1}_{[\text{condition}]}$ denotes the indicator function returning $1$ if the condition holds and $0$ otherwise. For each sample $x_i$, we get a vector $P_i = (p_{i,j})_{j=1}^{2N}$ representing the indicator of difficult pairs.
After calculating these indicators for all samples in the current batch, we stack the vectors $P_i$ row-wise to create the selection matrix $\boldsymbol{P}$. In practice, $P_i$ can be computed in parallel, making the computation of $\boldsymbol{P}$ efficient. The elements of $\boldsymbol{P}$ are either 0 or 1, indicating whether pairs are difficult pairs or not.


We can use the class information to verify the proportions of sample pairs from different classes in ($Sim_{posLow}$, $Sim_{posHigh}$) on CIFAR-10, which can demonstrate the effectiveness of our selection mechanism. As shown in Figure~\ref{fig:trend}, along with the progress of training, the ratio of sample pairs from different classes approaches close to 100\% within the range ($Sim_{posLow}$, $Sim_{posHigh}$).




\subsection{Removing Difficult Samples}\label{exp::remove}

We here introduce a simple and practical method for removing difficult samples based on our proposed selection mechanism. Eliminating the impact of difficult samples means preventing sample pairs that include difficult samples from interfering with the training process. To achieve this, we use the selection matrix $\boldsymbol{P}$ to identify and remove difficult samples. 


\begin{table}[!h]
\centering
\caption{Classification accuracy with or without removing difficult examples on CIFAR-10, CIFAR-100, STL-10 and TinyImagenet dataset using SimCLR. Results are averaged over three runs.}
\label{tab:remove}
\begin{tabular}{lccccc}
\toprule
\textbf{Method} & \textbf{CIFAR-10} & \textbf{CIFAR-100} & \textbf{STL-10} & \textbf{TinyImagenet} \\
\midrule
SimCLR (Baseline) & 88.26 & 59.95 & 75.98 & 69.58 \\
SimCLR (Removing) & \textbf{89.03} & \textbf{60.31} & \textbf{76.10} & \textbf{71.06} \\
\bottomrule
\end{tabular}

\end{table}

It can be observed from Table~\ref{tab:remove} that removing difficult examples yields a 0.8\% performance boost on CIFAR-10, a 0.6\% performance boost on CIFAR-100, and a 3.7\% performance boost on TinyImagenet compared to the baseline method. We reach the same conclusion as in \cite{joshi2023data}: By removing difficult samples, we can achieve comparable results or even slight improvements over the baseline.  However, removing difficult samples may not be the most effective method for handling difficult samples, because it shrinks sample size. Next, we investigate two techniques that handle difficult samples better, margin tuning in Section~\ref{exp::margin} and temperature scaling in Section~\ref{exp::temp}.

\subsection{Margin Tuning on Difficult Samples}\label{exp::margin}

To effectively apply margin tuning in line with our theoretical analysis, we adopt a margin tuning factor $\sigma > 0$. For the selected difficult sample pairs identified by the selection matrix $\boldsymbol{P}$, we add a margin $\sigma$ to the similarity values, and for the unselected pairs, we use the original InfoNCE.

\begin{table}[!h]
\centering
\caption{Classification accuracy with or without margin tuning on CIFAR-10, CIFAR-100, STL-10 and TinyImagenet dataset. Results are averaged over three runs.}
\label{tab:margin}
\begin{tabular}{lccccc}
\toprule
\textbf{Method} & \textbf{CIFAR-10} & \textbf{CIFAR-100} & \textbf{STL-10} & \textbf{TinyImagenet} \\
\midrule
Baseline & 88.26 & 59.95 & 75.98 & 69.58 \\
MT (All Samples) & 88.52 & 60.09 & 76.02 & 70.06 \\
MT (Selected Samples) & \textbf{89.16} & \textbf{61.28} & \textbf{76.83} & \textbf{79.14} \\
\bottomrule
\end{tabular}

\end{table}

It can be observed from Table~\ref{tab:margin} that applying margin tuning to all samples directly only achieves comparable results as the baseline SimCLR, highlighting the importance of the selection mechanism for difficult examples. While applying margin tuning to the selected samples brings consistent performance gains on CIFAR‑10, CIFAR‑100, and TinyImageNet. These results validate both the effectiveness of our selection mechanism and the reliability of our analysis on margin tuning.

\subsection{Temperature Scaling on Difficult Samples}\label{exp::temp}

We define the temperature scaling factor $\rho > 0$. Given the base temperature $\tau > 0$, we attach temperature $\rho\tau$ to the selected difficult sample pairs identified by the selection matrix $\boldsymbol{P}$, whereas attach base temperature $\tau$ to the unselected pairs.

\begin{table}[!h]
\centering
\caption{Classification accuracy with or without temperature scaling on CIFAR-10, CIFAR-100, STL-10 and TinyImagenet dataset. Results are averaged over three runs.}
\label{tab:temp}
\begin{tabular}{lccccc}
\toprule
\textbf{Method} & \textbf{CIFAR-10} & \textbf{CIFAR-100} & \textbf{STL-10} & \textbf{TinyImagenet} \\
\midrule
Baseline & 88.26 & 59.95 & 75.98 & 69.58 \\
TS (All Samples) & 88.38 & 59.20 & 75.76 & 69.36 \\
TS (Selected Samples) & \textbf{89.24} & \textbf{61.67} & \textbf{76.62} & \textbf{78.52} \\
\bottomrule
\end{tabular}

\end{table}

It can be observed from Table~\ref{tab:temp} that applying temperature scaling to all samples directly can even hurt the performance of contrastive learning compared to baseline SimCLR, highlighting the importance of selecting difficult examples. In contrast, applying temperature scaling to the selected samples brings consistent performance gains on CIFAR‑10, CIFAR‑100, and TinyImageNet. These results validate both the effectiveness of our selection mechanism and the reliability of our analysis on temperature scaling.

\subsection{Extensions}\label{exp::combine}

\textbf{Combined method.}
From Sections \ref{sec::thmmargin} and \ref{sec::thmtemp}, we observe that margin tuning and temperature scaling eliminate the effects of difficult examples in different ways. Therefore, it is natural to combine the two methods, and see if the combined method could reach better performances.

\begin{table}[h]
\centering
\caption{Classification accuracy with or without combined method on CIFAR-10, CIFAR-100, STL-10 and TinyImagenet dataset. Results are averaged over three runs.}
\label{tab:combined}
\begin{tabular}{lcccccc}
\toprule
\textbf{Method} & \textbf{CIFAR-10} & \textbf{CIFAR-100} & \textbf{STL-10} & \textbf{TinyImagenet} \\
\midrule
Baseline & 88.26 & 59.95 & 75.98 & 69.58 \\
Margin Tuning & 89.16 & 61.28 & 76.83 & 79.14 \\
Temperature Scaling & 89.24 & 61.67 & 76.62 & 78.52 \\
\textbf{Combined Method} & \textbf{89.68} & \textbf{62.86} & \textbf{77.35} & \textbf{80.00} \\
\bottomrule
\end{tabular}

\end{table}

It can be observed from Table~\ref{tab:combined} that the combined method yields a 1.6\% performance improvement on CIFAR-10, a 4.9\% performance improvement on CIFAR-100 and a 15.0\% performance improvement on TinyImagenet compared to the baseline SimCLR. The improvement surpasses that achieved by using only margin tuning or temperature scaling. The combined method on the Mixed CIFAR-10 datasets also achieves performance improvements consistently as shown in Section~\ref{combined_mixcifar}. The complete algorithm is presented in Algorithm \ref{'algo:train'}.

\textbf{Alternative contrastive learning paradigm.}
We delve deeper into the scalability of our methods across various self-supervised learning paradigms. Results in Table~\ref{tab:mocozw} demonstrate consistent performance enhancements comparable to those achieved by SimCLR on the MoCo on CIFAR-10.

\textbf{Complex classification scenarios.}
We explore our method by targeting difficult samples under the long-tail classification scenario, where difficult samples are even more difficult to learn according to the imbalanced distributions. The findings in Table~\ref{tab:LT} illustrate that our approach outperforms the baseline SimCLR in scenarios involving distributional imbalance, indicating the adaptivity of our approach to complex classification scenarios.

\begin{minipage}{\textwidth}
\begin{minipage}[t]{0.48\textwidth}
\centering
\makeatletter\def\@captype{table}
\caption{The results of incorporating the Combined method with different architectures on CIFAR-10. }
\label{tab:mocozw}
\setlength{\tabcolsep}{5pt}
\vspace{2pt} 
\begin{tabular}{c|c|c}
\hline
\toprule
Method  & MoCo & SimCLR  \\
 \midrule
Baseline & 85.84  & 	88.26	 \\
Combined Method & \textbf{86.82}  & 	\textbf{89.68}	 \\
\bottomrule
\end{tabular}
\end{minipage}\hspace{15pt}
\begin{minipage}[t]{0.48\textwidth}
\centering
\makeatletter\def\@captype{table}
\caption{Classification accuracy by using Combined method on TinyImagenet-LT. We also use SimCLR as the baseline method. }
\label{tab:LT}
\setlength{\tabcolsep}{5pt}
\vspace{2pt} 
\begin{tabular}{c|c}
\hline
\toprule
Method  & TinyImagenet-LT  \\
 \midrule
Baseline & 43.34 \\
Combined Method & \textbf{47.62}\\
\bottomrule
\end{tabular}
\end{minipage}
\end{minipage}

\textbf{Further discussions.}
We also provide a sensitivity analysis of parameters in Section~\ref{sensitivity} and conduct a detailed analysis of results in Table~\ref{tab:mocozw} and Table~\ref{tab:LT} in Section~\ref{ape::futherdis}. Furthermore, discussions about which features are advantageous for selecting difficult examples are also presented in Section~\ref{ape::futherdis}. In Section~\ref{ape::futherdis}, we have also included the experimental results on ImageNet-1K, the trending of the derived bounds with Mixed CIFAR-10 dataset and the significance analysis of $\gamma$ and $\beta$.





\section{Conclusion}

In this paper, we construct a theoretical framework to specifically analyze the impact of difficult examples on contrastive learning. We prove that difficult examples hurt the performance of contrastive learning from the perspective of linear probing error bounds. We further demonstrate how techniques such as margin tuning, temperature scaling, and the removal of these examples from the dataset can improve performance from the perspective of enhancing the generalization bounds. The experimental results demonstrate the reliability of our theoretical analysis.

\section*{Acknowledgement}
Yisen Wang was supported by National Natural Science Foundation of China (92370129, 62376010), Beijing Major Science and Technology Project under Contract no. Z251100008425006, Beijing Nova Program (20230484344, 20240484642), and State Key Laboratory of General Artificial Intelligence.

\section*{Ethics Statement}
Our research strictly adheres to ethical standards in the field of machine learning and artificial intelligence. This work makes use of publicly available datasets and models. No private or sensitive data are involved, and no harmful content is included. Therefore, we believe this paper does not raise any ethical concerns.



\bibliography{iclr2026_conference}
\bibliographystyle{iclr2026_conference}

\appendix
\newpage
\section{Technical Appendices and Supplementary Material}

\subsection{Related Works}\label{sec::relatedworks}

\textbf{Self-supervised contrastive learning.}
Self-supervised contrastive learning \citep{chen2020simple, chen2020improved, he2020momentum,wang2021residual} aims to learn an encoder that maps augmentations (e.g. flips, random crops, etc.) of the same input to proximate features, while ensuring that augmentations of distinct inputs yield divergent features. The encoder, once pre-trained, is later fined-tuned on a specific downstream dataset. The effectiveness of contrastive learning methods are typically evaluated through the performances of the downstream tasks such as linear classification.
Depending on the reliance of negative samples, contrastive learning methods can be broadly categorized into two kinds.
The first kind \citep{chen2020simple, chen2020improved, he2020momentum, wang2024nonnegative} learns the encoder by aligning an anchor point with its augmented versions (positive samples) while at the same time explicitly pushing away the others (negative samples). 
On the other hand, the second kind does not depend on negative samples \citep{zhuo2023towards}. They often necessitate additional components like projectors \citep{grill2020bootstrap,ouyang2025projection}, stop-gradient techniques \citep{chen2021exploring}, or high-dimensional embeddings \citep{zbontar2021barlow}. Nevertheless, the first kind of methods continue to be the mainstream in self-supervised contrastive learning and have been expanded into numerous other domains \citep{aberdam2021sequence,khaertdinov2021contrastive,  lee2022learning}. 
The analysis and discussions of this paper focus mainly on the first kind of contrastive learning methods that relies on both positive and negative samples.

\textbf{Contrastive Learning Theory.}
The early studies of theoretical aspects of contrastive learning manage to link contrastive learning to the supervised downstream classification. \cite{arora2019theoretical} proves that representations learned by contrastive learning algorithms can achieve small errors in the downstream linear classification task.
\cite{ ash2022investigating, bao2022surrogate,nozawa2021understanding} incorporate the effect of negative samples and further extend surrogate bounds.
Later on, \cite{haochen2021provable} focuses on the unsupervised nature of contrastive learning by modeling the feature similarities between augmented samples and provides generalization guarantee for linear evaluation through borrowing mathematical tools from spectral clustering. 
The idea of modeling similarities is later extended to analyzing contrastive learning for unsupervised domain adaption \citep{shen2022connect} and weakly supervised learning \citep{cui2023rethinking}.
In a similar vein, \cite{wang2021chaos,JMLR:v26:22-1009} put forward the idea of \textit{augmentation overlap} to explain the alignment of positive samples. 
Taking a step further, \citet{cui2025augmentation} proposes an augmentation-aware error bound which enables the explanation of specific types of data augmentation.
Besides, contrastive learning is also interpreted through various other theoretical frameworks in unsupervised learning, such as nonlinear independent component analysis \citep{zimmermann2021contrastive}, neighborhood component analysis \citep{ko2022revisiting}, stochastic neighbor embedding \citep{hu2022your}, geometric analysis of embedding spaces \citep{huang2023towards}, and message passing techniques \citep{wang2023message}.
In this paper, our basic assumptions are based on \cite{haochen2021provable} and focus on modeling the similarities between difficult example pairs.

\textbf{Difference between difficult examples and hard negative samples.} Difficult examples and hard negative samples both significantly affect the performance of self-supervised learning. However, while difficult examples are associated with the classification boundary, hard negative samples \citep{,kalantidis2020hard, robinson2020contrastive} are defined in relation to the anchor point. Previous research on hard negative sampling typically modifies contrastive learning models to emphasize these challenging samples so as to achieve better performance. 
In contrast, our findings indicate that unmodified contrastive learning models experience performance degradation due to the existence of difficult samples. Aside from ad hoc modifications, a straightforward removal of these difficult samples can also boost performance. As a systematic explanation of this finding is lacking, we establish a unified theoretical framework that addresses this challenge.

\subsection{Loss Functions of Sample Removal, Margin Tuning, and Temperature Scaling}\label{app::loss}

Based on the sample selection matrix $\boldsymbol{P}$ defined in \eqref{eq::pij}, we adapt the InfoNCE loss into versions of sample removal, margin tuning, and temperature scaling, respectively.

\textbf{Sample Removal.}
We define the removal loss as follows:
\begin{align}\label{eq::InfoNCER}
	&\ell_{\mathrm{R}} (i,j) 
	:= -\log\frac{\exp\big((s_{i,j}(1-p_{i,j}) )/\tau\big)}{\sum_{k=1}^{2N} \boldsymbol{1}_{[k\neq i]} \exp\big((s_{i,k}(1-p_{i,k}))/\tau\big)},
\end{align}
where $s_{i,j}$ denotes the similarity between augmented instances $x_i$ and $x_j$. If $p_{i,j} = 0$, the sample pair $x_i$ and $x_j$ does not include difficult samples, so $(s_{i,j}(1-p_{i,j}) )/\tau = s_{i,j} /\tau$, retaining the original form of the InfoNCE loss. If $p_{i,j} = 1$, the sample pair $x_i$ and $x_j$ are difficult pairs, so $(s_{i,j}(1-p_{i,j}) )/\tau = 0$, effectively removing them.

\textbf{Margin Tuning.}
We start with the basic form of the widely used InfoNCE loss and define the margin tuning loss for each positive pair. Specifically, within each minibatch of size $N$, we generate $2N$ samples through data augmentation. Given the margin tuning factor $\sigma > 0$, for an anchor sample $x_i$ and its corresponding positive sample $x_j$, we define the margin tuning loss as follows:
\begin{align}\label{eq::InfoNCEM}
	&\ell_{\mathrm{M}} (i,j) 
	:= -\log\frac{\exp\big((s_{i,j}+ p_{i,j} \sigma)/\tau\big)}{\sum_{k=1}^{2N} \boldsymbol{1}_{[k\neq i]} \exp\big((s_{i,k}+ p_{i,k} \sigma)/\tau\big)},
\end{align}
where $s_{i,j}$ denotes the similarity between augmented instances $x_i$ and $x_j$, and $\tau > 0$ denotes the temperature parameter. After the above operation, we assign the same margin value to all selected difficult sample pairs, achieving the goal of margin tuning for specific sample pairs.

\textbf{Temperature Scaling.}
To apply temperature scaling consistent with our theoretical analysis, we start with the basic form of the InfoNCE loss and define the temperature scaling loss for each positive pair. Specifically, within each minibatch, given the temperature scaling factor $\rho$, for an anchor sample $x_i$ and its corresponding positive sample $x_j$, we define the temperature scaling loss as follows:
\begin{align}\label{eq::InfoNCET}
	&\ell_{\mathrm{T}} (i,j) 
	:= -\log\frac{\exp\big(\frac{s_{i,j}}{[p_{i,j}\rho+(1-p_{i,j})]\tau}\big)}{\sum_{k=1}^{2N} \boldsymbol{1}_{[k\neq i]} \exp\big(\frac{s_{i,k}}{[p_{i,k}\rho+(1-p_{i,k})]\tau}\big)},
\end{align}
where $s_{i,j}$ denotes the similarity between augmented instances $x_i$ and $x_j$.

\textbf{Combined Method.}
The combined loss function as 
\begin{align}\label{eq::InfoNCMT}
	&\ell (i,j) 
	:= -\log\frac{\exp\big(\frac{s_{i,j} + p_{i,j} \sigma}{[p_{i,j}\rho+(1-p_{i,j})]\tau}\big)}{\sum_{k=1}^{2N} \boldsymbol{1}_{[k\neq i]} \exp\big(\frac{s_{i,k} + p_{i,k} \sigma}{[p_{i,k}\rho+(1-p_{i,k})]\tau}\big)},
\end{align}
where $s_{i,j}$ denotes the similarity between augmented instances $x_i$ and $x_j$. 
The whole training procedure of the combined method is shown in Algorithm \ref{'algo:train'}.


\begin{algorithm}\label{algori}
	\renewcommand{\algorithmicrequire}{\textbf{Input:}}
	\renewcommand{\algorithmicensure}{\textbf{Output:}}
	\small
	\caption{Training procedure of Combined method}
	\label{'algo:train'}
	\begin{algorithmic}[1]
		\REQUIRE 
		batch size $N$, base temperature $\tau$, $posHigh$ and $posLow$ for determining the size of the interval, margin tuning factor $\sigma$, temperature scaling factor $\rho$, encoder $f(\cdot)$, projector $g(\cdot)$ and data augmentation $T$.
		\ENSURE
		encoder network $f(\cdot)$, and throw away $g(\cdot)$.
		\FOR{sampled minibatch $\{\bar{x}_k\}_{k=1}^N$}
		\FOR{all $k \in $\{1,...,N\}$ $}
		\STATE Draw two augmentation functions $t,t' \sim T$;
		\STATE  
		$x_{2k{-}1} = t(\bar{x}_{k})$   and  $x_{2k} = t'(\bar{x}_{k})$;
		\STATE $h_{2k  {-} 1} = f(x_{2k {-}1})$   and  $h_{2k} = f(x_{2k})$;
		\STATE $z_{2k {-}1} = g(h_{2k {-}1})$   and  $z_{2k} = g(h_{2k})$.
		\ENDFOR
		\FOR{all $k \in $\{1,...,2N\}$ $}
		\STATE Calculate $P_{i}=(p_{i,j})_{j=1}^{2N}$ by using $h_{j}, j \in $\{1,...,2N\}$ $ according to Eq.~\ref{eq::pij};
		\ENDFOR
		\STATE The matrix $\boldsymbol{P}$ is obtained by splicing $P_{i}, i \in $\{1,...,2N\}$ $ by rows.
		\FOR{all $i \in $\{1,...,2N\}$ $ and all $j \in $\{1,...,2N\}$ $}    
		\STATE $s_{i, j}=\boldsymbol{z}_i^{\top} \boldsymbol{z}_j /\left(\left\|\boldsymbol{z}_i\right\|\left\|\boldsymbol{z}_j\right\|\right)$.
		\ENDFOR
		\STATE Calculate  $\ell(i, j)$ according to Eq.~\ref{eq::InfoNCMT};
		\STATE Calculate $\mathcal{L}=\frac{1}{2 N} \sum_{k=1}^N[\ell(2 k-1,2 k)+\ell(2 k, 2 k-1)]$;
		Update networks $f$ and $g$ to minimize $\mathcal{L}$.
		\ENDFOR
	\end{algorithmic}
\end{algorithm}

\subsection{Training details}\label{traingdetails}

We run all experiments on an NVIDIA GeForce RTX 3090 24G GPU and we run experiments with ResNet-18 on the CIFAR-10, CIFAR-100 and STL-10 dataset and ResNet-50 on the TinyImagenet dataset.  We only deal with the difficult examples during training time.

For CIFAR-10 we set batch size as 512, learning rate as 0.25
and base temperature as 0.5. We choose 0.15 as the $posHigh$ and 0.22 as the $posLow$. We set $\sigma$ as 0.03 and $\rho$ as 0.6 for CIFAR-10. For both our method and SimCLR, we evaluate the models using linear probing, when evaluating we set batch size as 512 and learning rate as 1. This experimental setup is also applicable to the Mixed CIFAR-10 dataset.

For CIFAR-100 we set batch size as 512, learning rate as 0.5 and base temperature as 0.1. We choose 0.013 as the $posHigh$ and 0.5 as the $posLow$. We set $\sigma$ as 0.1 and $\rho$ as 0.7 for CIFAR-100. For both our method and SimCLR, we evaluate the models using linear probing, when evaluating we set batch size as 512 and learning rate as 0.1. 

For STL-10 we set batch size as 256, learning rate as 0.5 and base temperature as 0.1. We choose 0.15 as the $posHigh$ and 0.22 as the $posLow$. We set $\sigma$ as 0.1 and $\rho$ as 0.7 for STL-10. For both our method and SimCLR, we evaluate the models using linear probing, when evaluating we set batch size as 256 and learning rate as 0.1.

For TinyImagenet we set batch size as 512, learning rate as 0.5 and base temperature as 0.5. We choose 0.013 as the $posHigh$ and 0.5 as the $posLow$. We set $\sigma$ as 0.1 and $\rho$ as 0.7 for TinyImagenet. For both our method and SimCLR, we evaluate the models using linear probing, when evaluating we set batch size as 512 and learning rate as 0.1. 

For the experimental results presented in Figure~\ref{fig::intro}, we selected 20\% SAS coreset for CIFAR-10, 95\% SAS coreset for CIFAR-100, 80\% SAS coreset for STL-10, and 60\% SAS coreset for TinyImagenet, following the filtering method mentioned in \citep{joshi2023data}.


\subsection{Parameter Sensitivity Analysis}\label{sensitivity}

\textbf{Evaluating different $\sigma$ used in margin tuning part}. The intention of $\sigma$ is to add margins to the similarity terms between difficult example pairs. We show the performance with different $\sigma$ in Figure~\ref{fig:lambda}, and the results show that when $\sigma = 0.1$ the proposal achieves the best performance on CIFAR-100, and the performance does not degrade significantly with $\sigma$ changes. This demonstrates that our proposal is quite robust with the selection of $\sigma$.

\textbf{Evaluating different $\rho$ used in temperature scaling part}.
$\rho$ is used for scaling downwards the temperatures on the difficult example pairs so that we can eliminate the negative effects of difficult examples. We show the performance with different $\rho$ in Figure~\ref{fig:rho}, and the results show that when $\rho = 0.7$ the proposal achieves the best performance on CIFAR-100, and the performance does not degrade significantly with $\rho$ changes. We figure out that different values of $\rho$ can all result in performance improvements.



\begin{figure*}[h]
	\centering
        \subfigure[Acc with different $\sigma$]{
	\includegraphics[width=0.35\linewidth]{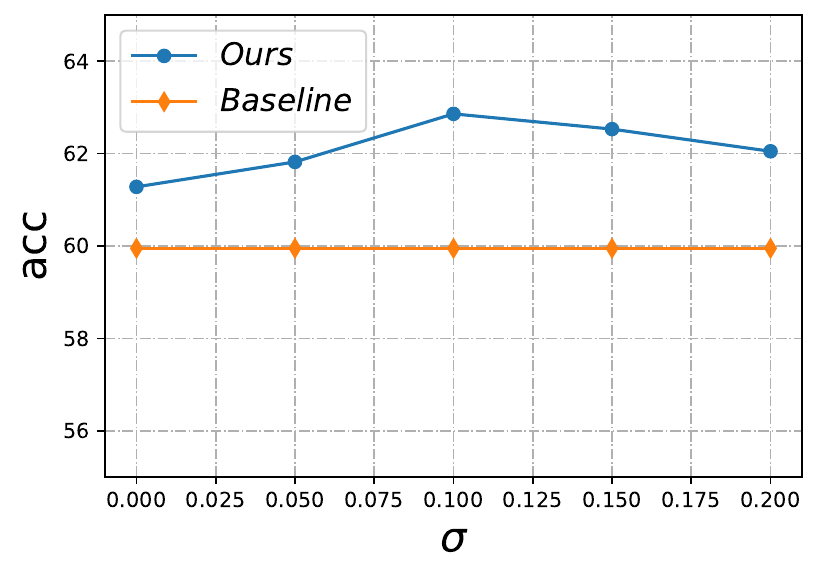}
	\label{fig:lambda}
        }
        \subfigure[Acc with different $\rho$]{
	\includegraphics[width=0.35\linewidth]{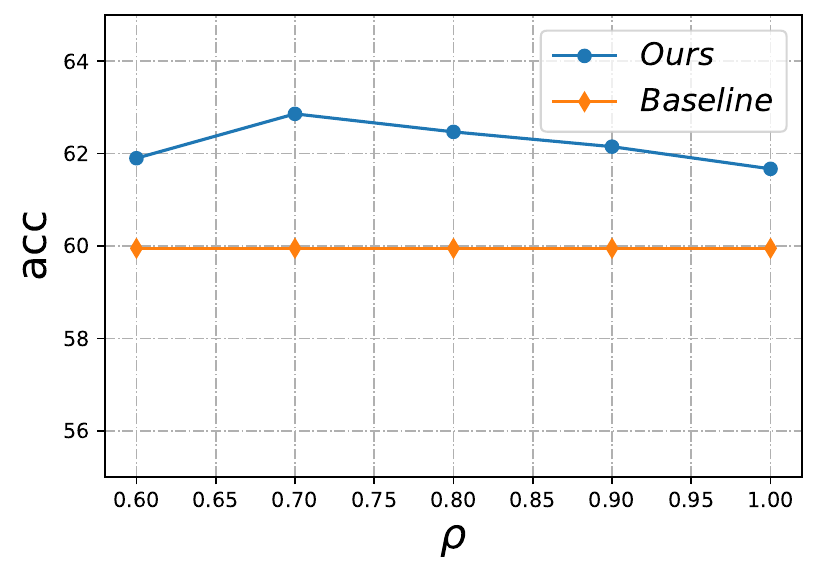}
	\label{fig:rho}
        }
        \vspace{-3mm}
        \caption{(a) Parameter analysis of margin tuning factor $\sigma$,(b) temperature scaling factor $\rho$, all of the above results are implemented on CIFAR-100.}
	\label{parameter_analysis}
	\vspace{-3mm}
\end{figure*}



\subsection{Further discussion}\label{ape::futherdis}

\textbf{Which feature is better for difficult examples selection?}  In SimCLR, the authors found that the proposal of projector $g(\cdot)$ allows the model to learn the auxiliary task better thus having better downstream generalization. However, as mentioned in  \citep{projector_collapse} they suggest the problem of representation dimensional collapse after using projector, therefore, we here explore whether it is better to use features before projector
$f(x)$ for difficult examples selection or $g(f(x))$  after projector.

\begin{table}[h]
\caption{Classification accuracy by using Combined method on CIFAR-10 and CIFAR-100. Features before projector means that we use $f(x)$ for difficult examples selection and features after projector means that we use $g(f(x))$ for difficult examples selection.  }
\label{tab:projector}
\centering
\begin{tabular}{l|c|c|c}
\toprule
Features  & Baseline & After projector &  Before projector   \\
 \midrule
CIFAR-10 & 88.26  & 87.86  & \textbf{89.68} \\
CIFAR-100 & 59.95  & 60.63  & \textbf{62.86} \\
\bottomrule
\end{tabular}
\end{table}

As shown in Table~\ref{tab:projector}, We find that when using 
$f(x)$ rather than $g(f(x))$ for difficult examples selection we can gain a 2.1\%  performance improvement on CIFAR-10 and a 3.7\% performance improvement on CIFAR-100. These results suggest that utilizing features before projector is more beneficial for difficult examples selection.

\begin{wrapfigure}{r}{0.3\textwidth}
    \centering
    \vspace{-6mm}
    \includegraphics[width=0.3\textwidth]{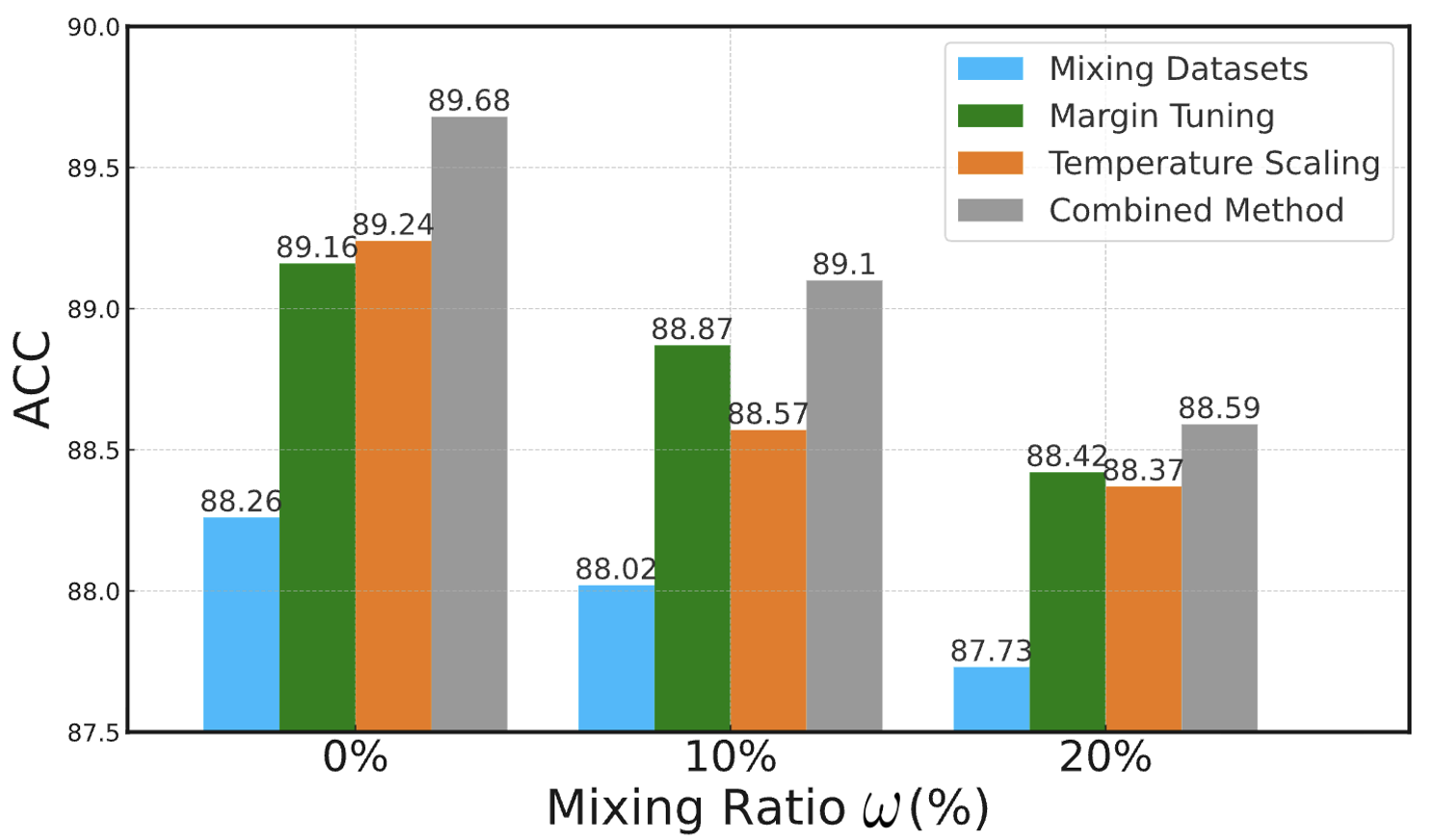}
    \vspace{-6mm}
    \caption{Detailed experimental results on the Mixed CIFAR datasets.}\label{fig:mixcombine}
    \vspace{-2mm}
\end{wrapfigure}
\textbf{The combined method is also effective for the Mixed CIFAR-10 datasets.} \label{combined_mixcifar}As we discussed earlier, the Mixed CIFAR-10 datasets contain a large number of mixed difficult samples, making the learning difficulty of this dataset significantly greater than that of the original dataset. Based on this fact, this section explores whether our proposed method can achieve performance improvements on the Mixed CIFAR-10 datasets that are consistent with those on CIFAR-100, Tiny ImageNet, and other datasets. We use the 10\%- and 20\%-Mixed CIFAR-10 datasets as our baselines, while the 0\%-Mixed CIFAR-10 datasets serve as our standard CIFAR-10 baseline. The experimental results are shown in Figure~\ref{fig:mixcombine}. We found that using either margin tuning or temperature scaling alone can improve performance over the original baseline, while the combined method yields better results than using either approach individually. This finding is consistent with the experimental results on other datasets and further validates the effectiveness of our method.

\textbf{The proposal is effective for real-world datasets.} We evaluated our method on the Imagenet-1k dataset, which contains 1,000 categories and 1,281,167 training samples. We used ResNet18 as our backbone, set the batch size to 1024, and resized each image to 96x96. We set the learning rate to 0.1 and the base temperature to 0.07. We chose 0.01 as the posHigh and 0.5 as the posLow. We set $\sigma$ to 0.1 and $\rho$ to 0.7. We also evaluated the models using linear probing. When evaluating, we set the batch size to 1024 and the learning rate to 1. The specific results are shown in Table~\ref{tab:imagenet1k}.

\begin{table}[h]
\vspace{-2mm}
\caption{Classification accuracy  on  Imagenet-1k.}
\label{tab:imagenet1k}
\centering
\begin{tabular}{l|c|c|c|c|c}
\toprule
Methods & Baseline & Removing & Temperature Scaling & Margin Tuning & Combined \\
\midrule
 Accuracy &  37.62 & 37.79& 38.48& 38.59& 38.98 \\
\bottomrule
\end{tabular}
\end{table}

From the results on the real-world dataset, Imagenet-1k, which contains more categories, We can see that even after running for only 400 epochs, our method achieves a performance improvement trend consistent with the results mentioned in the paper, compared to the baseline method. These results strengthen the findings and demonstrate broader applicability of this paper.

\textbf{Focusing on difficult examples and removing them are both effective methods.} We use temperature scaling as an example to illustrate how we should handle difficult examples. We note that placing greater emphasis on difficult examples (by selecting a smaller temperature) and discarding this sample (which is effectively equivalent to setting the temperature to infinity (we use a large value of 1,000,000,000 to approximate infinity here)) may seem contradictory. However, as shown in Table~\ref{tab:scaling_factors}, both approaches are indeed valid. This means that effectively handling difficult samples is possible under sufficiently good conditions, while in the absence of such mechanisms, simply discarding them can also be effective.

\begin{table}[!h]
\vspace{-4mm}
\caption{Classification accuracy with various temperature scaling factors on CIFAR-100 datasets. Setting the Temperature Scaling Factor to 0.7 represents using our proposed theoretical framework to specifically address difficult samples, while setting the Temperature Scaling Factor to 1e9 means discarding these difficult samples. Results are averaged over three runs.}
\label{tab:scaling_factors}
\centering
\begin{tabular}{lcccccc}
\toprule
Temperature Scaling Factor & 0.7 & 1 & 10 & 100 & 1000 & 1e9 \\
\midrule
Accuracy  & 61.67 & 59.95 & 59.63 & 59.82 & 60.05 & 60.31 \\
\bottomrule
\end{tabular}
\end{table}

\textbf{The scalability of our proposal under other contrastive learning paradigms.} As mentioned in \citep{johnson2022contrastive}, InfoNCE and Spectral contrastive loss share the same population minimum with variant kernel derivations. By using similar techniques of positive-pair kernel, our conclusions can also be further generalized to other self-supervised learning frameworks. To demonstrate the scalability of the combined method, we supplement the comparative experiments based on the MoCo \citep{chen2020improved} algorithm. The experimental results demonstrate consistent improvements of our method over both MoCo and SimCLR and show the scalability of our proposal under different contrastive learning paradigms.

\textbf{Connection between difficult examples and long-tailed distribution.} Under the definition that difficult examples contribute least to contrastive learning and that are consequently difficult to distinguish by contrastive learning models, we can easily draw the following conclusion: difficult examples can lead to unclear classification boundaries for the classes they belong to.

Due to the significant difference in the number of samples in the head and tail classes, the boundary of tail classes is difficult to be accurately estimated due to the tail classes are prone to collapse when the data is distributed with long-tailed distribution, as mentioned in \citep{LT}. In other words, tail classes can lead to unclear classification boundaries for the classes they belong to as mentioned in \citep{LT2}.

So in this view,  tail classes samples can also be seen as difficult samples. To better illustrate this point, we will further validate the connection between them through the following experiments. We validate our proposed Combined method on the classical long-tailed distribution dataset tiny-Imagenet-LT to explore whether our proposed algorithm can achieve a performance improvement over the comparison method SimCLR when distributional imbalance as another form of difficult samples also exists.The results in Table~\ref{tab:LT} show that we can achieve better performance when distributional imbalance also exists.

\textbf{Analysis of the trending of the derived bounds.} We analyze the trending of the derived bounds on the Mixed CIFAR-10 dataset. Specifically, we vary the mixing ratios from 0\% to 30\%, where 0\% represents the standard CIFAR-10 without mixing. The experimental parameter settings can be referenced to \ref{traingdetails}. For each class of samples, we sort them based on the difference between the maximum and second-largest values after applying softmax to the outputs, and select the 8\% (the ratio is consistent with what is reported in the paper) smallest differences as the difficult examples, as described in the paper. For the calculation of $\alpha$, we take the mean of the similarity between all samples of the same class. For the calculation of  $\beta$, we take the mean of the similarity for the sample pairs from different classes that do not contain the difficult examples. For the calculation of $\gamma$, we take the mean of the similarity for the sample pairs from different classes that contain the difficult examples.

\begin{table}[h]
\vspace{-4mm}
\caption{The trends of $\alpha$, $\beta$, $\gamma$, and other metrics as the Mixing Ratio changes.}
\label{tab:mixratioanaly}
\centering
\begin{tabular}{l|c|c|c|c}
\toprule
\textbf{Mixing Ratio} & \textbf{0\%} & \textbf{10\%} & \textbf{20\%} & \textbf{30\%} \\
\midrule
\textbf{acc (\%)} & 88.3 & 88.0 & 87.7 & 86.2 \\
\textbf{$\alpha$} & 47.2 & 44.0 & 41.2 & 38.7 \\
\textbf{$\beta$} & 19.1 & 19.5 & 20.1 & 20.8 \\
\textbf{$\gamma$} & 20.9 & 22.1 & 23.1 & 24.1 \\
\textbf{$\gamma-\beta$} & 1.80 & 2.60 & 3.00 & 3.30 \\
\textbf{Eigenvalue ($\times10^{-5}$)} & 2.93 & 3.36 & 3.58 & 3.72 \\
\bottomrule
\end{tabular}
\end{table}

In Table~\ref{tab:mixratioanaly}, we show that as the mixing ratio increases, the linear probing accuracy drops, and the $(K+1)$-th eigenvalue increases. Note that the classification error (left hand side of Eq.\ref{eq::boundwh}) is 1-acc, and the error bound (right hand side of Eq.\ref{eq::boundwh}) increases with the eigenvalue increasing. This result indicates that as the difficult examples increases, the classification error and the bound share the same variation trend, thus validating theorem \ref{thm::boundwh} that larger $\gamma-\beta$ results in worse error bound.

\textbf{Significance analysis of $\gamma$ and $\beta$.} To verify the significance of $\gamma$ and $\beta$., we tested $\gamma$ and $\beta$, as well as $\gamma - \beta$, on more real datasets. From the first three rows of Table~\ref{tab:comparison}, we found that on the CIFAR-100 dataset (which has 10 times more classes than CIFAR-10), the difference between $\gamma$ and $\beta$ remained consistent with that on the CIFAR-10 dataset. On the ImageNet-1k dataset (which has 100 times more classes than CIFAR-10,for specific experimental details and results on ImageNet-1k), the difference between $\gamma$ and $\beta$ was even larger than on CIFAR-10. As a possible intuitive explanation, we conjecture that the higher $\gamma-\beta$ might results from the higher complexity of imagenet images, e.g. different-class images with similar backgrounds can share higher similarity (higher $\gamma$), whereas CIFAR images have relatively simple and consistent backgrounds. These results demonstrate that even on real-world datasets, the difference between $\gamma$ and $\beta$ is significant. 

\begin{table}[h]
\vspace{-3mm}
\caption{Comparison of $\beta$, $\gamma$,  $\gamma-\beta$ , t-statistic and P value across different datasets.}
\label{tab:comparison}
\centering
\begin{tabular}{l|c|c|c}
\toprule
Datasets & CIFAR-10  & CIFAR-100  & Imagenet-1k  \\
\midrule
$\beta$ & 19.1 & 35.6 & 39.8 \\
$\gamma$ & 20.9 & 37.4 & 42.9 \\
$\gamma-\beta$ & 1.8 & 1.8 & 3.1 \\
$t$-statistic & -502.63 &  -539.36 & -3844.21 \\
$P$ value  & 0.0 & 0.0 & 0.0 \\
\bottomrule
\end{tabular}
\end{table}

To better illustrate the significant difference between $\gamma$ and $\beta$, we conducted an independent samples t-test to support our conclusion. Specifically, we first collected all the $\beta$ and $\gamma$ values, and due to the large sample size, we chose to use Welch's t-test, which does not assume equal variances between the two groups and is suitable for cases where the variances may differ. In the experiment, we focus on two key statistics:

t-statistic: This measures the difference between the means of the two groups relative to the variance within the samples. The t-statistic is a standardized measure used to determine whether the mean difference between the two groups is significant or could be attributed to random fluctuations. The larger the t-statistic, the more significant the difference between the two groups.

P value: The p-value indicates the probability of observing the current difference or more extreme results under the assumption that the null hypothesis (i.e., no significant difference between the two groups) is true. If the p-value is less than 0.05, it suggests that the observed difference is highly unlikely under the null hypothesis, and we can reject the null hypothesis, concluding that there is a significant difference between the two groups.

As shown in the last two rows of Table~\ref{tab:comparison}, on all datasets (CIFAR-10, CIFAR-100, Imagenet-1k), the absolute value of the T-statistic is very large, and the P-value is close to zero. This indicates that the mean difference between $\gamma$ and $\beta$ is highly statistically significant. 

\section{Proofs}\label{app::proofs}

Recall that in Section \ref{sec::toymodel}, we introduce the adjacency matrix of the similarity graph based on a 4-sample subset. Here we give the formal definition of the adjacency matrix of a generalized similarity graph containing $|\mathcal{X}| = n(r+1)$ samples, with $n$ denoting the number of augmented samples per class, and $r+1$ denoting the number of classes.

Denote $\mathbb{D} = {x_1, \ldots, x_{n(r+1)}}$ as the dataset, where $x_{n(i-1)+1}, \ldots, x_{ni}$ belong to Class $i$ for $i \in {1, \ldots, r+1}$. Denote $n_d$ as the number of difficult examples per class and $\mathbb{D}_d$ as the set of difficult examples. Naturally, we denote $n_e := n-n_d$ as the number of easy-to-learn examples per class. Without loss of generality, we assume that the last $n_d$ examples in each class are difficult examples.
Let $\beta < \gamma < \alpha < 1$. Then we define the elements of the adjacency matrix $\boldsymbol{A}=(w_{x,x'})_{x,x'\in\mathcal{X}}$ as $w_{x,x'} := 1$ for $x = x'$; $w_{x,x'} := \alpha$ for $x\neq x'$, $y(x)=y(x')$; $w_{x,x'} := \gamma$ for $x,x' \in \mathbb{D}_d$, $y(x)\neq y(x')$, and $w_{x,x'} := \beta$ otherwise. 

Specifically, we have the adjacency matrix of a similarity graph without difficult examples as
\begin{align}
    \boldsymbol{A}_{\mathrm{w.o.}} = 
    \begin{bmatrix}
    \boldsymbol{A}_{\mathrm{same-class}} & \boldsymbol{A}_{\mathrm{different-class}} & \cdots & \boldsymbol{A}_{\mathrm{different-class}} \\
    \boldsymbol{A}_{\mathrm{different-class}} & \boldsymbol{A}_{\mathrm{same-class}} & \cdots & \boldsymbol{A}_{\mathrm{different-class}} \\
    \vdots & \vdots & & \vdots \\
    \boldsymbol{A}_{\mathrm{different-class}} & \boldsymbol{A}_{\mathrm{different-class}} & \cdots & \boldsymbol{A}_{\mathrm{same-class}}
    \end{bmatrix}_{(r+1) \times (r+1)}
\end{align}
and the adjacency matrix of a similarity graph with difficult examples as
\begin{align}
    \boldsymbol{A}_{\mathrm{w.d.}} = 
    \begin{bmatrix}
    \boldsymbol{A}_{\mathrm{same-class}} & \boldsymbol{A}'_{\mathrm{different-class}} & \cdots & \boldsymbol{A}'_{\mathrm{different-class}} \\
    \boldsymbol{A}'_{\mathrm{different-class}} & \boldsymbol{A}_{\mathrm{same-class}} & \cdots & \boldsymbol{A}'_{\mathrm{different-class}} \\
    \vdots & \vdots & & \vdots \\
    \boldsymbol{A}'_{\mathrm{different-class}} & \boldsymbol{A}'_{\mathrm{different-class}} & \cdots & \boldsymbol{A}_{\mathrm{same-class}}
    \end{bmatrix}_{(r+1) \times (r+1)}
\end{align}
where 
\begin{align}
    \boldsymbol{A}_{\mathrm{same-class}} = 
    \begin{bmatrix}
        1 & \alpha & \cdots & \alpha \\
        \alpha & 1 & \cdots & \alpha \\
        \cdots \\
        \alpha & \alpha & \cdots & 1
    \end{bmatrix}_{n \times n},
\end{align}

\begin{align}
    \boldsymbol{A}_{\mathrm{different-class}} = 
    \begin{bmatrix}
        \beta & \cdots & \beta \\
        \vdots & & \vdots \\
        \beta & \cdots & \beta
    \end{bmatrix}_{n \times n},
\end{align}
and
\begin{align}
    \boldsymbol{A}'_{\mathrm{different-class}} = 
    \begin{bmatrix}
        \beta & \cdots & \beta & \beta & \cdots & \beta \\
        \vdots & & \vdots & \vdots & & \vdots \\
        \beta & \cdots & \beta & \beta & \cdots & \beta \\
        \beta & \cdots & \beta & \gamma & \cdots & \gamma \\
        \vdots & & \vdots & \vdots & & \vdots \\
        \beta & \cdots & \beta & \gamma & \cdots & \gamma
    \end{bmatrix}_{(n_e + n_d) \times (n_e + n_d)}.
\end{align}

\subsection{Proofs Related to Section \ref{sec::boundwhard}}\label{app::proofhurt}


\begin{proof}[Proof of Theorem \ref{thm::boundwo}]
	For a dataset without difficult examples, the similarity between a sample and itself is $1$, the similarity between same-class samples is $\alpha$, and the similarity between different-class samples is $\beta$. 
	Then the adjacent matrix of the similarity graph can be decomposed into the sum of several matrix Kronecker products:
	\begin{align}
		\boldsymbol{A} 
		= (1-\alpha)\boldsymbol{I}_{r+1} \otimes \boldsymbol{I}_{n} 
		+ (\alpha - \beta) \boldsymbol{I}_{r+1} \otimes (\boldsymbol{1}_n \cdot \boldsymbol{1}_n^\top)
		+ \beta (\boldsymbol{1}_{r+1} \cdot \boldsymbol{1}_{r+1}^\top) \otimes (\boldsymbol{1}_n \cdot \boldsymbol{1}_n^\top),
	\end{align}
	where $\boldsymbol{I}_{r+1}$ and $\boldsymbol{I}_n$ denote the $(r+1)\times(r+1)$ and $n\times n$ identity matrices respectively, and $\boldsymbol{1}_{r+1} := (1,\ldots,1)^\top \in \mathbb{R}^{r+1}$ and $\boldsymbol{1}_n := (1,\ldots,1)^\top \in \mathbb{R}^n$ denote the all-one vectors.
	
	First, we calculate the eigenvalues and eigenvectors of $\boldsymbol{A}$.
	Note that $\boldsymbol{I}_{r+1}$ and $\boldsymbol{I}_n$ have eigenvalues $1$ with arbitrary eigenvectors, $\boldsymbol{1}_n \cdot \boldsymbol{1}_n^\top$ has eigenvalue $n$ with eigenvector $\bar{\boldsymbol{1}}_n := \frac{1}{\sqrt{n}}\boldsymbol{1}_n$ and eigenvalues $0$ with eigenvectors $\{\mu: \mu^\top \boldsymbol{1}_n=0\}$, and $\boldsymbol{1}_{r+1} \cdot \boldsymbol{1}_{r+1}^\top$ has eigenvalue $r+1$ with eigenvector $\bar{\boldsymbol{1}}_{r+1} := \frac{1}{\sqrt{r+1}}\boldsymbol{1}_{r+1}$ and eigenvalues $0$ with eigenvectors $\{\nu: \nu^\top \boldsymbol{1}_{r+1}=0\}$. Therefore, $\boldsymbol{A}$ has the following sets of eigenvalues and eigenvectors:
	\begin{align*}
		&\lambda_1 = (1-\alpha) + n(\alpha-\beta) + n(r+1)\beta, &\text{ with eigenvector } \bar{\boldsymbol{1}}_{r+1} \otimes \bar{\boldsymbol{1}}_n; \\
		&\lambda_2 = \ldots = \lambda_{r+1} = (1-\alpha) + n(\alpha-\beta), &\text{ with eigenvectors } \nu \otimes \bar{\boldsymbol{1}}_n; \\
		&\lambda_{r+2} = \ldots = \lambda_{n+r} = 1-\alpha, &\text{ with eigenvectors } \bar{\boldsymbol{1}}_{r+1} \otimes u; \\
		&\lambda_{n+r+1} = \ldots = \lambda_{n(r+1)} = 1-\alpha, &\text{ with eigenvectors } u\otimes v.
	\end{align*}
	Next, we calculate the eigenvalues of $\bar{\boldsymbol{A}} := \boldsymbol{D}^{-1/2}\boldsymbol{A}\boldsymbol{D}^{-1/2}$. By definition, we have $\boldsymbol{D} = \mathrm{diag}(w_1,\ldots, w_{n(r+1)}) = [(1-\alpha)+n\alpha+nr\beta]\boldsymbol{I}_{n(r+1)}$. Therefore, we have the eigenvalues of $\boldsymbol{A}$ as
	\begin{align*}
		&\lambda_1 = 1, \\
		&\lambda_2 = \ldots = \lambda_{r+1} = \frac{(1-\alpha) + n(\alpha-\beta)}{(1-\alpha)+n\alpha+nr\beta}, \\
		&\lambda_{r+2} = \ldots = \lambda_{n(r+1)} =  \frac{1-\alpha}{(1-\alpha)+n\alpha+nr\beta}.
	\end{align*}
	Then according to Theorem B.3 in \cite{haochen2021provable}, when $k > r$, there holds
	\begin{align}
		\mathcal{E}_{\mathrm{w.o.}} 
		\leq \frac{4\delta}{1 - \lambda_{k+1}}+8\delta 
		= \frac{4\delta}{1 - \frac{1-\alpha}{(1-\alpha)+n\alpha+nr\beta}}+8\delta.
	\end{align}
\end{proof}

\begin{proof}[Proof of Theorem \ref{thm::boundwh}]
	For a dataset with $n_d$ difficult examples per class, the similarity between a sample and itself is $1$, the similarity between same-class samples is $\alpha$, the similarity between different-class easy-to-learn samples is $\beta$, and the similarity between different-class hard-to-learn samples is $\gamma$. 
	Without loss of generality, we assume that $n$ is an integral multiple of $n_d$, i.e. there exist a $\kappa \in \mathbb{Z}^+$ such that $n = \kappa n_d$.
	Then the adjacent matrix of the similarity graph can be decomposed into the sum of several matrix Kronecker products:
	\begin{align}
		\boldsymbol{A} 
		&= (1-\alpha)\boldsymbol{I}_{r+1} \otimes \boldsymbol{I}_{n} 
		+ (\alpha - \beta) \boldsymbol{I}_{r+1} \otimes (\boldsymbol{1}_n \cdot \boldsymbol{1}_n^\top)
		+ \beta (\boldsymbol{1}_{r+1} \cdot \boldsymbol{1}_{r+1}^\top) \otimes (\boldsymbol{1}_n \cdot \boldsymbol{1}_n^\top)
		\nonumber\\
		&+ (\gamma - \beta) (\boldsymbol{1}_{r+1} \cdot \boldsymbol{1}_{r+1}^\top) \otimes (\boldsymbol{e}_\kappa \cdot \boldsymbol{e}_\kappa^\top) \otimes \boldsymbol{I}_{n_d}
		- (\gamma - \beta) \boldsymbol{I}_{r+1} \otimes (\boldsymbol{e}_\kappa \cdot \boldsymbol{e}_\kappa^\top) \otimes \boldsymbol{I}_{n_d},
		\label{eq::Awhdecomp}
	\end{align}
	where $\boldsymbol{I}_{r+1}$, $\boldsymbol{I}_n$, and $\boldsymbol{I}_{n_d}$ denote the $(r+1)\times(r+1)$, $n\times n$, and $n_d \times n_d$ identity matrices respectively, $\boldsymbol{1}_{r+1} := (1,\ldots,1)^\top \in \mathbb{R}^{r+1}$ and $\boldsymbol{1}_n := (1,\ldots,1)^\top \in \mathbb{R}^n$ denote the all-one vectors, and $\boldsymbol{e}_\kappa := (0, \ldots, 0, 1)^\top \in \mathbb{R}^\kappa$.
	
	Similarly, we can decompose $\boldsymbol{D}$ into
	\begin{align}
		\boldsymbol{D} = \boldsymbol{I}_{r+1} \otimes \Big[[(1-\alpha)+n\alpha+nr\beta] \boldsymbol{I}_n + n_dr(\gamma-\beta)(\boldsymbol{e}_\kappa \cdot \boldsymbol{e}_\kappa^\top) \otimes \boldsymbol{I}_{n_d}\Big],
	\end{align}
	and therefore we have
	\begin{align}
		\boldsymbol{D}^{-1} = \boldsymbol{I}_{r+1} \otimes \Big[\frac{1}{c_2} [\boldsymbol{I}_\kappa - (\boldsymbol{e}_\kappa \cdot \boldsymbol{e}_\kappa^\top)] + \frac{1}{c_1} (\boldsymbol{e}_\kappa \cdot \boldsymbol{e}_\kappa^\top)\Big] \otimes \boldsymbol{I}_{n_d},
	\end{align}
	where we denote $c_1 := (1-\alpha)+n\alpha+nr\beta+n_dr(\gamma-\beta)$ and $c_2 := (1-\alpha)+n\alpha+nr\beta$.
	
	Then we have the decomposition of the normalized similarity matrix as
	\begin{align}
		\bar{\boldsymbol{A}} 
		&= \boldsymbol{D}^{-1/2}\boldsymbol{A}\boldsymbol{D}{-1/2}
		\nonumber\\
		&= (1-\alpha) \boldsymbol{I}_{r+1} \otimes \Big[\frac{1}{c_2} [\boldsymbol{I}_\kappa - (\boldsymbol{e}_\kappa \cdot \boldsymbol{e}_\kappa^\top)] + \frac{1}{c_1} (\boldsymbol{e}_\kappa \cdot \boldsymbol{e}_\kappa^\top)\Big] \otimes \boldsymbol{I}_{n_d} 
		\nonumber\\
		&+ (\gamma-\beta) (\boldsymbol{1}_{r+1} \cdot \boldsymbol{1}_{r+1}^\top) \otimes \frac{1}{c_1}(\boldsymbol{e}_\kappa \cdot \boldsymbol{e}_\kappa^\top) \otimes \boldsymbol{I}_{n_d}
		\nonumber\\
		&- (\gamma-\beta) \boldsymbol{I}_{r+1} \otimes \frac{1}{c_1}(\boldsymbol{e}_\kappa \cdot \boldsymbol{e}_\kappa^\top) \otimes \boldsymbol{I}_{n_d}.
		\nonumber\\
		&+ (\alpha-\beta) \boldsymbol{I}_{r+1} \otimes \Big[[\frac{1}{\sqrt{c_2}} (\boldsymbol{1}_\kappa - \boldsymbol{e}_\kappa) + \frac{1}{\sqrt{c_1}} \boldsymbol{e}_\kappa] \cdot [\frac{1}{\sqrt{c_2}} (\boldsymbol{1}_\kappa - \boldsymbol{e}_\kappa) + \frac{1}{\sqrt{c_1}} \boldsymbol{e}_\kappa]^\top \Big] \otimes (\boldsymbol{1}_{n_d} \cdot \boldsymbol{1}_{n_d}^\top)
		\nonumber\\
		&+ \beta (\boldsymbol{1}_{r+1} \cdot \boldsymbol{1}_{r+1}^\top) \otimes \Big[[\frac{1}{\sqrt{c_2}} (\boldsymbol{1}_\kappa - \boldsymbol{e}_\kappa) + \frac{1}{\sqrt{c_1}} \boldsymbol{e}_\kappa] \cdot [\frac{1}{\sqrt{c_2}} (\boldsymbol{1}_\kappa - \boldsymbol{e}_\kappa) + \frac{1}{\sqrt{c_1}} \boldsymbol{e}_\kappa]^\top \Big] \otimes (\boldsymbol{1}_{n_d} \cdot \boldsymbol{1}_{n_d}^\top)
		\nonumber\\
		&= \frac{1}{c_2}(1-\alpha) \boldsymbol{I}_{r+1} \otimes \boldsymbol{I}_\kappa \otimes \boldsymbol{I}_{n_d} 
		\nonumber\\
		&+ \frac{1}{c_1}(\gamma-\beta) (\boldsymbol{1}_{r+1} \cdot \boldsymbol{1}_{r+1}^\top) \otimes (\boldsymbol{e}_\kappa \cdot \boldsymbol{e}_\kappa^\top) \otimes \boldsymbol{I}_{n_d}
		\nonumber\\
		&- \Big[\frac{1}{c_1}(\gamma-\beta) + (\frac{1}{c_2} - \frac{1}{c_1})(1-\alpha)\Big] \boldsymbol{I}_{r+1} \otimes (\boldsymbol{e}_\kappa \cdot \boldsymbol{e}_\kappa^\top) \otimes \boldsymbol{I}_{n_d}.
		\nonumber\\
		&+ (\alpha-\beta) \boldsymbol{I}_{r+1} \otimes \Big[[\frac{1}{\sqrt{c_2}} (\boldsymbol{1}_\kappa - \boldsymbol{e}_\kappa) + \frac{1}{\sqrt{c_1}} \boldsymbol{e}_\kappa] \cdot [\frac{1}{\sqrt{c_2}} (\boldsymbol{1}_\kappa - \boldsymbol{e}_\kappa) + \frac{1}{\sqrt{c_1}} \boldsymbol{e}_\kappa]^\top \Big] \otimes (\boldsymbol{1}_{n_d} \cdot \boldsymbol{1}_{n_d}^\top)
		\nonumber\\
		&+ \beta (\boldsymbol{1}_{r+1} \cdot \boldsymbol{1}_{r+1}^\top) \otimes \Big[[\frac{1}{\sqrt{c_2}} (\boldsymbol{1}_\kappa - \boldsymbol{e}_\kappa) + \frac{1}{\sqrt{c_1}} \boldsymbol{e}_\kappa] \cdot [\frac{1}{\sqrt{c_2}} (\boldsymbol{1}_\kappa - \boldsymbol{e}_\kappa) + \frac{1}{\sqrt{c_1}} \boldsymbol{e}_\kappa]^\top \Big] \otimes (\boldsymbol{1}_{n_d} \cdot \boldsymbol{1}_{n_d}^\top).
		\label{eq::barAdecomp}
	\end{align}
	
	Now we calculate the eigenvalues and eigenvectors of $\boldsymbol{A}$. For notational simplicity, we denote the first three terms of \eqref{eq::barAdecomp} as $\bar{\boldsymbol{A}}_1$ and the last two terms as $\bar{\boldsymbol{A}}_2$. 
	Note that $\boldsymbol{I}_{r+1}$, $\boldsymbol{I}_\kappa$, and $\boldsymbol{I}_{n_d}$ have eigenvalues $1$ with arbitrary eigenvectors,
	$\boldsymbol{1}_{r+1} \cdot \boldsymbol{1}_{r+1}^\top$ has eigenvalue $r+1$ with eigenvector $\bar{\boldsymbol{1}}_{r+1} := \frac{1}{\sqrt{r+1}}\boldsymbol{1}_{r+1}$ and eigenvalues $0$ with eigenvectors $\{\nu: \nu^\top \boldsymbol{1}_{r+1}=0\}$,
	and $\boldsymbol{e}_\kappa \cdot \boldsymbol{e}_\kappa^\top$ has eigenvalue $1$ with eigenvector $\boldsymbol{e}_1 = (1, 0, \ldots, 0)^\top \in \mathbb{R}^\kappa$ and eigenvalues $0$ with eigenvectors $\{\boldsymbol{e}_2, \ldots, \boldsymbol{e}_\kappa\}$. 
	Let $\xi \in \mathbb{R}^{n_d}$ denote an arbitrary vector. Then $\bar{\boldsymbol{A}}_1$ has the following sets of eigenvalues and eigenvectors:
	\begin{align*}
		&\lambda_{1,1} = \ldots = \lambda_{1,n_d} = \frac{1}{c_2}(1-\alpha) + \frac{1}{c_1}(\gamma-\beta)(r+1) - \Big[\frac{1}{c_1}(\gamma-\beta) + (\frac{1}{c_2} - \frac{1}{c_1})(1-\alpha)\Big], \hspace{-10cm}\\
		&\qquad\qquad\qquad\quad\ \ \ = \frac{1}{c_1}(1-\alpha) + \frac{r}{c_1}(\gamma-\beta), \text{ with eigenvectors } \bar{\boldsymbol{1}}_{r+1} \otimes \boldsymbol{e}_1 \otimes \xi; \\
		&\lambda_{1,n_d+1} = \ldots = \lambda_{1,n} = \frac{1}{c_2}(1-\alpha), \text{ with eigenvectors } \bar{\boldsymbol{1}}_{r+1} \otimes \boldsymbol{e}_i \otimes \xi, i=2,\ldots,\kappa; \\
		&\lambda_{1,n+1} = \ldots = \lambda_{1,(r+1)n-rn_d} = \frac{1}{c_2}(1-\alpha), \text{ with eigenvectors } \nu \otimes \boldsymbol{e}_i \otimes \xi, i=2, \ldots, \kappa; \\
		&\lambda_{1, (r+1)n-rn_d+1} = \ldots = \lambda_{1,(r+1)n} = \frac{1}{c_2}(1-\alpha) - \Big[\frac{1}{c_1}(\gamma-\beta) + (\frac{1}{c_2} - \frac{1}{c_1})(1-\alpha)\Big], \hspace{-3cm}\\
		&\qquad\qquad\qquad\qquad\qquad\qquad\quad\ \ \ = \frac{1}{c_1}(1-\alpha) - \frac{1}{c_1}(\gamma-\beta), \text{ with eigenvectors } \nu \otimes \boldsymbol{e}_1 \otimes \xi.
	\end{align*}

	On the other hand, note that $\boldsymbol{1}_{n_d} \cdot \boldsymbol{1}_{n_d}^\top$ has eigenvalue $n_d$ with eigenvector $\bar{\boldsymbol{1}}_{n_d} := \frac{1}{\sqrt{n_d}}\boldsymbol{1}_{n_d}$ and eigenvalues $0$ with eigenvectors $\{\eta: \eta^\top \boldsymbol{1}_{n_d}=0\}$, and that by calculations, $[\frac{1}{\sqrt{c_2}} (\boldsymbol{1}_\kappa - \boldsymbol{e}_\kappa) + \frac{1}{\sqrt{c_1}} \boldsymbol{e}_\kappa] \cdot [\frac{1}{\sqrt{c_2}} (\boldsymbol{1}_\kappa - \boldsymbol{e}_\kappa) + \frac{1}{\sqrt{c_1}} \boldsymbol{e}_\kappa]^\top$ has eigenvalue $\frac{\kappa-1}{c_2} + \frac{1}{c_1}$ with eigenvector $\{\eta : \sum_{i=1}^{\kappa-1}\eta_i = 0, \eta_\kappa = (\kappa-1)\sqrt{c_1/c_2}\}$ and eigenvalues $0$ with eigenvectors $\{\theta : \sum_{i=1}^{\kappa-1}\theta_i = 0, \eta_\kappa =0\}$.
	Then $\bar{\boldsymbol{A}}_2$ has the following sets of eigenvalues and eigenvectors:
	\begin{align*}
		&\lambda_{2,1} = (\alpha-\beta)\Big[\frac{\kappa-1}{c_2} + \frac{1}{c_1}\Big]n_d + \beta(r+1)\Big[\frac{\kappa-1}{c_2} + \frac{1}{c_1}\Big]n_d, \\
		&\quad\ \ \ = (\alpha+r\beta)\Big[\frac{\kappa-1}{c_2} + \frac{1}{c_1}\Big]n_d, \text{ with eigenvectors } \bar{\boldsymbol{1}}_{r+1} \otimes \eta \otimes \bar{\boldsymbol{1}}_{n_d}; \\
		&\lambda_{2,2} = \ldots = \lambda_{2,r+1} = (\alpha-\beta)\Big[\frac{\kappa-1}{c_2} + \frac{1}{c_1}\Big]n_d, \text{ with eigenvectors } \nu \otimes \eta \otimes \bar{\boldsymbol{1}}_{n_d}; \\
		&\lambda_{2,r+2} = \ldots = \lambda_{2,(r+1)n} = 0, \text{ with other combinations of eigenvectors.}
	\end{align*}
	
	By Equation 13 in \cite{fulton2000eigenvalues}, for two real symmetric $n(r+1)\times n(r+1)$ matrices $\bar{\boldsymbol{A}}_1$ and $\bar{\boldsymbol{A}}_2$, we have the $k+1$-th largest eigenvalue of $\bar{\boldsymbol{A}} :=\bar{\boldsymbol{A}}_1 + \bar{\boldsymbol{A}}_2$ satisfies
	\begin{align}
		\lambda_{k+1} 
		&\leq \min_{i+j=k+2} \lambda_{1,i} + \lambda_{2,j} \nonumber\\
		&= 
		\left\{
		\begin{aligned}
			&\frac{1}{c_1}(1-\alpha) + \frac{r}{c_1}(\gamma-\beta) + (\alpha-\beta)\Big[\frac{\kappa-1}{c_2} + \frac{1}{c_1}\Big]n_d, &\text{ for } k<r+1, 
			\nonumber\\
			&\min\Big\{\frac{1}{c_1}(1-\alpha) + \frac{r}{c_1}(\gamma-\beta), \frac{1}{c_2}(1-\alpha) + (\alpha-\beta)\Big[\frac{\kappa-1}{c_2} + \frac{1}{c_1}\Big]n_d\Big\} \hspace{-3cm}
			\nonumber\\
			&= \frac{1}{c_1}(1-\alpha) + \frac{r}{c_1}(\gamma-\beta), &\text{ for } r+1 \leq k < n_d+r+1.
		\end{aligned}
		\right.
	\end{align}
	Then according to Theorem B.3 in \cite{haochen2021provable}, when $r+1 \leq k < n_d+r+1$, there holds
	\begin{align}
		\mathcal{E}_{\mathrm{w.d.}} 
		\leq \frac{4\delta}{1 - \lambda_{k+1}}+8\delta 
		= \frac{4\delta}{1 - \frac{1}{c_1}(1-\alpha) - \frac{r}{c_1}(\gamma-\beta)}+8\delta
		= \frac{4\delta}{1 - \frac{(1-\alpha)+r(\gamma-\beta)}{(1-\alpha)+n\alpha+nr\beta+n_dr(\gamma-\beta)}}+8\delta.
	\end{align}
\end{proof}

\subsection{Proofs Related to Section \ref{sec::elimhard}}

\begin{proof}[Proof of Corollary \ref{thm::boundremove}]
    By removing the difficult examples, we have the adjacency matrix as 
    \begin{align}
        \boldsymbol{A} = 
        \begin{bmatrix}
        \boldsymbol{A}_{\mathrm{same-class}} & \boldsymbol{A}_{\mathrm{different-class}} & \cdots & \boldsymbol{A}_{\mathrm{different-class}} \\
        \boldsymbol{A}_{\mathrm{different-class}} & \boldsymbol{A}_{\mathrm{same-class}} & \cdots & \boldsymbol{A}_{\mathrm{different-class}} \\
        \vdots & \vdots & & \vdots \\
        \boldsymbol{A}_{\mathrm{different-class}} & \boldsymbol{A}_{\mathrm{different-class}} & \cdots & \boldsymbol{A}_{\mathrm{same-class}}
        \end{bmatrix}_{(r+1) \times (r+1)},
    \end{align}
    where
    \begin{align}
    \boldsymbol{A}_{\mathrm{different-class}} = 
    \begin{bmatrix}
        \beta & \cdots & \beta \\
        \vdots & & \vdots \\
        \beta & \cdots & \beta
    \end{bmatrix}_{n_e \times n_e}.
\end{align}
    Then the matrix $\boldsymbol{A}$ reduces to $\boldsymbol{A}_{\mathrm{w.o.}}$ and the error bound reduces to that in Theorem \ref{thm::boundwo} with $n$ replaced with $n_e = n-n_d$.
\end{proof}

The spectral contrastive loss with a margin $\boldsymbol{M} = (m_{x,x'})$ is defined as
\begin{align}\label{eq::specmargin}
	&\mathcal{L}_{\mathrm{M}}(\boldsymbol{x}; f) = -2 \mathbb{E}_{x,x^+} f(x)^\top f(x^+) + \mathbb{E}_{x,x'} \Big[f(x)^\top f(x') + m_{x,x'}\Big]^2.
\end{align}

\begin{proof}[Proof of Theorem \ref{thm::lossmargin}]
	{\footnotesize\begin{align}
		\mathcal{L}_{\mathrm{M}} 
		&= -2 \mathbb{E}_{x,x^+} f(x)^\top f(x^+) + \mathbb{E}_{x,x'} \Big[f(x)^\top f(x') + m_{x,x'}\Big]^2 
		\nonumber\\
		&= -2 \sum_{x,x^+} w_{x,x'} f(x)^\top f(x^+) + \sum_{x,x'} w_x w_{x'} \Big[f(x)^\top f(x') + m_{x,x'}\Big]^2
		\nonumber\\
		&= \sum_{x,x'} \Big\{
		-2 w_{x,x'} f(x)^\top f(x') + w_x w_{x'} \Big[f(x)^\top f(x')\Big]^2 + 2 w_x w_{x'} m_{x,x'} f(x)^\top f(x') + w_x w_{x'} m_{x,x'}^2
		\Big\}
		\nonumber\\
		&= \sum_{x,x'} \Big\{
		w_x w_{x'} \Big[f(x)^\top f(x')\Big]^2 - 2[w_{x,x'} - w_x w_{x'} m_{x,x'}]f(x)^\top f(x') + w_x w_{x'} m_{x,x'}^2
		\Big\}
		\nonumber\\
		&= \sum_{x,x'} \Big\{
		 \Big[[\sqrt{w_x}f(x)]^\top [\sqrt{w_{x'}}f(x')]\Big]^2 - 2\Big[\frac{w_{x,x'}}{\sqrt{w_x}\sqrt{w_{x'}}} - \sqrt{w_x} \sqrt{w_{x'}} m_{x,x'}\Big][\sqrt{w_x}f(x)]^\top [\sqrt{w_{x'}}f(x')] 
		 \nonumber\\
		 &\qquad \ \ + \Big[\frac{w_{x,x'}}{\sqrt{w_x}\sqrt{w_{x'}}} - \sqrt{w_x} \sqrt{w_{x'}} m_{x,x'}\Big]^2 + 2w_{x,x'}m_{x,x'} - \frac{w_{x,x'}^2}{w_x w_{x'}}
		\Big\}
		\nonumber\\
		&= \sum_{x,x'} \Big[\frac{w_{x,x'}}{\sqrt{w_x}\sqrt{w_{x'}}} - \sqrt{w_x} \sqrt{w_{x'}} m_{x,x'} - [\sqrt{w_x}f(x)]^\top [\sqrt{w_{x'}}f(x')]\Big]^2  +\sum_{x,x'} \Big(2w_{x,x'}m_{x,x'} - \frac{w_{x,x'}^2}{w_x w_{x'}}\Big)
		\nonumber\\
		&:= \|(\bar{\boldsymbol{A}} - \bar{\boldsymbol{M}}) - FF^\top\|_F^2 +\sum_{x,x'} \Big(2w_{x,x'}m_{x,x'} - \frac{w_{x,x'}^2}{w_x w_{x'}}\Big),
		\label{eq::L_M}
	\end{align}}
	where we denote $\bar{\boldsymbol{A}} := \boldsymbol{D}^{-1/2} \boldsymbol{A} \boldsymbol{D}^{-1/2}$, $\bar{\boldsymbol{M}} := \boldsymbol{D}^{1/2} \boldsymbol{M} \boldsymbol{D}^{1/2}$, $\boldsymbol{A} := (w_{x,x'})_{x,x' \in \{x_i\}_{i=1}^{n(r+1)}}$, $\boldsymbol{M} := (m_{x,x'})_{x,x' \in \{x_i\}_{i=1}^{n(r+1)}}$, $\boldsymbol{D} := \mathrm{diag}(w_1, \ldots, w_{n(r+1)})$, and $F = (\sqrt{w_x}f(x))_{x \in \{x_i\}_{i=1}^{n(r+1)}}$.
	
	Note that given the adjacency matrix of the similarity graph $\boldsymbol{A}$ and the margin matrix $\boldsymbol{M}$, the second term in \eqref{eq::L_M} is a constant. Therefore, minimizing the margin tuning loss $\mathcal{L}_{\mathrm{M}}$ over $f(x)$ is equivalent to minimizing the matrix factorization loss $\mathcal{L}_{\mathrm{mf-M}} := \|(\bar{\boldsymbol{A}} - \bar{\boldsymbol{M}}) - FF^\top\|_F^2$ over $F$.
\end{proof}

\begin{proof}[Proof of Theorem \ref{thm::boundmargin}]
	Recall that when difficult examples exist, we assume that 
	\begin{align}
		w_{x,x'} := 
		\left\{
		\begin{aligned}
			&1 &\text{ for } x = x',\\
			&\alpha &\text{ for } x\neq x', y(x)=y(x'),\\
			&\gamma &\text{ for } x,x' \in \mathbb{D}_d, y(x)\neq y(x'),\\
			&\beta &\text{otherwise.}
		\end{aligned}
		\right.
	\end{align}
	Then by definition we have 
	\begin{align}
		w_x = \sum_{x'} w_{x,x'} = 
		\left\{
		\begin{aligned}
			&(1-\alpha)+n\alpha+nr\beta + n_d r (\gamma-\beta), &\text{ for } x \in \mathbb{D}_d,\\
			&(1-\alpha)+n\alpha+nr\beta, &\text{ for } x \notin \mathbb{D}_d,
		\end{aligned}
		\right.
	\end{align}
	and correspondingly 
	\begin{align}
            {\footnotesize
		\frac{w_{x,x'}}{w_x w_{x'}} = 
		\left\{
		\begin{aligned}
			&\frac{1}{(1-\alpha)+n\alpha+nr\beta + n_d r (\gamma-\beta)}, &\text{ for } x=x', x \in \mathbb{D}_d,\\
			&\frac{1}{(1-\alpha)+n\alpha+nr\beta}, &\text{ for } x=x', x \notin \mathbb{D}_d,\\
			&\frac{\alpha}{(1-\alpha)+n\alpha+nr\beta + n_d r (\gamma-\beta)}, &\text{ for } x \neq x', y(x) = y(x'), x,x' \in \mathbb{D}_d,\\
			&\frac{\alpha}{\sqrt{(1-\alpha)+n\alpha+nr\beta + n_d r (\gamma-\beta)}\sqrt{(1-\alpha)+n\alpha+nr\beta}}, \hspace{-15cm}\\
            & &\hspace{-5cm}\text{ for } x \neq x', y(x) = y(x'), x \in \mathbb{D}_d \text{ or } x' \in \mathbb{D}_d,\\
			&\frac{\alpha}{(1-\alpha)+n\alpha+nr\beta}, &\text{ for } x \neq x', y(x) = y(x'), x,x' \notin \mathbb{D}_d,\\
			&\frac{\gamma}{(1-\alpha)+n\alpha+nr\beta + n_d r (\gamma-\beta)}, &\text{ for } y(x) \neq y(x'), x,x' \in \mathbb{D}_d, \\
			&\frac{\beta}{\sqrt{(1-\alpha)+n\alpha+nr\beta + n_d r (\gamma-\beta)}\sqrt{(1-\alpha)+n\alpha+nr\beta}}, \hspace{-15cm}\\
            & &\hspace{-5cm}\text{ for } y(x) \neq y(x'), x \in \mathbb{D}_d \text{ or } x' \in \mathbb{D}_d,\\
			&\frac{\beta}{(1-\alpha)+n\alpha+nr\beta}, &\text{ for } y(x) \neq y(x'), x,x' \notin \mathbb{D}_d, \\
		\end{aligned}
		\right.
		\label{eq::barAwh}
            }
	\end{align}
	If we let
	\begin{align}
            {\footnotesize
		m_{x,x'} =
		\left\{
		\begin{aligned}
			&-\frac{n_dr(\gamma-\beta)}{[(1-\alpha)+n\alpha+nr\beta + n_d r (\gamma-\beta)]^2[(1-\alpha)+n\alpha+nr\beta]}, & \hspace{-15cm}\\
            & &\hspace{-5cm}\text{ for } x=x', x \in \mathbb{D}_d,\\
			&-\frac{n_dr(\gamma-\beta)}{[(1-\alpha)+n\alpha+nr\beta + n_d r (\gamma-\beta)]^2[(1-\alpha)+n\alpha+nr\beta]}\alpha, \hspace{-15cm}\\
            & &\hspace{-5cm}\text{ for } x \neq x', y(x) = y(x'), x,x' \in \mathbb{D}_d,\\
			&-\frac{\frac{\sqrt{(1-\alpha)+n\alpha+nr\beta + n_d r (\gamma-\beta)}}{\sqrt{(1-\alpha)+n\alpha+nr\beta}} - 1}{[(1-\alpha)+n\alpha+nr\beta + n_d r (\gamma-\beta)][(1-\alpha)+n\alpha+nr\beta]}\alpha, \hspace{-15cm}\\
            & &\hspace{-5cm}\text{ for } x \neq x', y(x) = y(x'), x \in \mathbb{D}_d \text{ or } x' \in \mathbb{D}_d,\\
			&\frac{[(1-\alpha)+n\alpha+(n-n_d)r\beta](\gamma-\beta)}{[(1-\alpha)+n\alpha+nr\beta+n_d(\gamma-\beta)]^2[(1-\alpha)+n\alpha+nr\beta]}, \hspace{-15cm}\\
            & &\hspace{-5cm}\text{ for } y(x) \neq y(x'), x,x' \in \mathbb{D}_d,\\
			&-\frac{\frac{\sqrt{(1-\alpha)+n\alpha+nr\beta + n_d r (\gamma-\beta)}}{\sqrt{(1-\alpha)+n\alpha+nr\beta}} - 1}{[(1-\alpha)+n\alpha+nr\beta + n_d r (\gamma-\beta)][(1-\alpha)+n\alpha+nr\beta]}\beta, \hspace{-15cm}\\
            & &\hspace{-5cm}\text{ for } y(x) \neq y(x'), x \in \mathbb{D}_d \text{ or } x' \in \mathbb{D}_d,\\
			&0 &\text{ otherwise,}
		\end{aligned}
		\right.
             }
	\end{align}
	then we have
	\begin{align}
		&\sqrt{w_x}\sqrt{w_{x'}}m_{x,x'} \nonumber\\
		&=\left\{
		\begin{aligned}
			&-\frac{n_dr(\gamma-\beta)}{[(1-\alpha)+n\alpha+nr\beta + n_d r (\gamma-\beta)][(1-\alpha)+n\alpha+nr\beta]}, &\hspace{-15cm}\\
            & &\hspace{-5cm}\text{ for } x=x', x \in \mathbb{D}_d,\\
			&-\frac{n_dr(\gamma-\beta)}{[(1-\alpha)+n\alpha+nr\beta + n_d r (\gamma-\beta)][(1-\alpha)+n\alpha+nr\beta]}\alpha, \hspace{-15cm}\\
            & &\hspace{-5cm}\text{ for } x \neq x', y(x) = y(x'), x,x' \in \mathbb{D}_d,\\
			&-\frac{\sqrt{(1-\alpha)+n\alpha+nr\beta + n_d r (\gamma-\beta)} - \sqrt{(1-\alpha)+n\alpha+nr\beta}}{\sqrt{(1-\alpha)+n\alpha+nr\beta + n_d r (\gamma-\beta)}[(1-\alpha)+n\alpha+nr\beta]}\alpha, \hspace{-15cm}\\
            & &\hspace{-5cm}\text{ for } x \neq x', y(x) = y(x'), x \in \mathbb{D}_d \text{ or } x' \in \mathbb{D}_d,\\
			&\frac{[(1-\alpha)+n\alpha+(n-n_d)r\beta](\gamma-\beta)}{[(1-\alpha)+n\alpha+nr\beta+n_d(\gamma-\beta)][(1-\alpha)+n\alpha+nr\beta]}, \hspace{-15cm}\\
            & &\hspace{-5cm}\text{ for } y(x) \neq y(x'), x,x' \in \mathbb{D}_d,\\
			&-\frac{\sqrt{(1-\alpha)+n\alpha+nr\beta + n_d r (\gamma-\beta)} - \sqrt{(1-\alpha)+n\alpha+nr\beta}}{\sqrt{(1-\alpha)+n\alpha+nr\beta + n_d r (\gamma-\beta)}[(1-\alpha)+n\alpha+nr\beta]}\beta, \hspace{-15cm}\\
            & &\hspace{-5cm}\text{ for } y(x) \neq y(x'), x \in \mathbb{D}_d \text{ or } x' \in \mathbb{D}_d,\\
			&0 &\text{ otherwise,}
		\end{aligned}
		\right.
	\end{align}
	and accordingly
	\begin{align}
		\frac{w_{x,x'}}{w_x w_{x'}} - \sqrt{w_x}\sqrt{w_{x'}}m_{x,x'} 
		=\left\{
		\begin{aligned}
			&\frac{1}{(1-\alpha)+n\alpha+nr\beta} &\text{ for } x=x',\\
			&\frac{\alpha}{(1-\alpha)+n\alpha+nr\beta} &\text{ for } x \neq x', y(x) = y(x'),\\
			&\frac{\beta}{(1-\alpha)+n\alpha+nr\beta} &\text{ otherwise.}
		\end{aligned}
		\right.
	\end{align}
	In this case, $\bar{\boldsymbol{A}}-\bar{\boldsymbol{M}}$ is equivalent to the normalized similarity matrix of data without difficult examples. That is, we have
	\begin{align}
		\mathcal{E}_{\mathrm{M}} = \mathcal{E}_{\mathrm{w.o.}}.
	\end{align}
\end{proof}

The spectral contrastive loss with temperature $\boldsymbol{T} = (\tau_{x,x'})$ is defined as
\begin{align}\label{eq::spectemp}
	&\mathcal{L}_{\mathrm{T}}(\boldsymbol{x}; f) 
	= -2 \mathbb{E}_{x,x^+} \frac{f(x)^\top f(x^+)}{\tau_{x,x^+}} + \mathbb{E}_{x,x'} \Big[\frac{f(x)^\top f(x')}{\tau_{x,x'}}\Big]^2.
\end{align}

\begin{proof}[Proof of Theorem \ref{thm::losstemp}]
	\begin{align}\label{eq::L_T}
		\mathcal{L}_{\mathrm{T}}
		&= \mathbb{E}_{x,x^+} f(x)^\top f(x^+)/\tau_{x,x^+} + \mathbb{E}_{x,x'} \Big[f(x)^\top f(x')/\tau_{x,x'}\Big]^2 
		\nonumber\\
		&= -2 \sum_{x,x^+} w_{x,x'} f(x)^\top f(x^+)/\tau_{x,x^+} + \sum_{x,x'} w_x w_{x'} \Big[f(x)^\top f(x')/\tau_{x,x'}\Big]^2
		\nonumber\\
		&= \sum_{x,x'}\Big\{
		-2w_{x,x'}/\tau_{x,x'} f(x)^\top f(x^+) + w_x w_{x'}/\tau_{x,x'}^2 \Big[f(x)^\top f(x')/\tau_{x,x'}\Big]^2
		\Big\}
		\nonumber\\
		&= \sum_{x,x'}\Big\{
		-2\frac{1}{\tau_{x,x'}}\frac{w_{x,x'}}{\sqrt{w_x}\sqrt{w_{x'}}} [\sqrt{w_x}f(x)]^\top [\sqrt{w_{x'}}f(x')] + \frac{1}{\tau_{x,x'}^2} \Big[[\sqrt{w_x}f(x)]^\top [\sqrt{w_{x'}}f(x')]\Big]^2
		\Big\}
		\nonumber\\
		&= \sum_{x,x'}\frac{1}{\tau_{x,x'}^2}\Big\{
		\Big[[\sqrt{w_x}f(x)]^\top [\sqrt{w_{x'}}f(x')]\Big]^2
		- 2\frac{\tau_{x,x'}w_{x,x'}}{\sqrt{w_x}\sqrt{w_{x'}}} [\sqrt{w_x}f(x)]^\top [\sqrt{w_{x'}}f(x')]
        \nonumber\\
        &\qquad\qquad\qquad+ \frac{\tau_{x,x'}^2 w_{x,x'}^2}{w_x w_{x'}} - \frac{\tau_{x,x'}^2 w_{x,x'}^2}{w_x w_{x'}}
		\Big\}
		\nonumber\\
		&= \sum_{x,x'}\frac{1}{\tau_{x,x'}^2}\Big[\tau_{x,x'}\frac{w_{x,x'}}{\sqrt{w_x}\sqrt{w_{x'}}} - [\sqrt{w_x}f(x)]^\top [\sqrt{w_{x'}}f(x')]\Big]^2 - \frac{1}{\tau_{x,x'}^2} \sum_{x,x'} \frac{\tau_{x,x'}^2 w_{x,x'}^2}{w_x w_{x'}}
		\nonumber\\
		&:= \|\boldsymbol{T} \odot \bar{\boldsymbol{A}} - FF^\top\|_{wF}^2 - \frac{1}{\tau_{x,x'}^2} \sum_{x,x'} \frac{\tau_{x,x'}^2 w_{x,x'}^2}{w_x w_{x'}},
	\end{align}
	where we denote $\boldsymbol{T} := (\tau_{x,x'})_{x,x' \in \{x_i\}_{i=1}^{n(r+1)}}$, $\bar{\boldsymbol{A}} := \boldsymbol{D}^{-1/2} \boldsymbol{A} \boldsymbol{D}^{-1/2}$, $\boldsymbol{A} := (w_{x,x'})_{x,x' \in \{x_i\}_{i=1}^{n(r+1)}}$, $\boldsymbol{D} := \mathrm{diag}(w_1, \ldots, w_{n(r+1)})$, $F = (\sqrt{w_x}f(x))_{x \in \{x_i\}_{i=1}^{n(r+1)}}$, $\boldsymbol{T} \odot \bar{\boldsymbol{A}}$ as the element-wise product of matrices $\boldsymbol{T}$ and $\bar{\boldsymbol{A}}$, and $\|\cdot\|_{wF}$ as the weighted Frobenius norm where $\|\boldsymbol{B}\|_{wF}^2 := \sum_{x,x'} \frac{1}{\tau_{x,x'}^2} b_{x,x'}^2$ for arbitrary matrix $\boldsymbol{B} = (b_{x,x'}) \in\mathbb{R}^{n(r+1)\times n(r+1)}$.
	
	Note that given the adjacency matrix of the similarity graph $\boldsymbol{A}$ and the temperature matrix $\boldsymbol{T}$, the second term in \eqref{eq::L_T} is a constant. Therefore, minimizing the temperature scaling loss $\mathcal{L}_{\mathrm{T}}$ over $f(x)$ is equivalent to minimizing the matrix factorization loss $\mathcal{L}_{\mathrm{mf-T}} := \|\boldsymbol{T} \odot \bar{\boldsymbol{A}} - FF^\top\|_{wF}^2 $ over $F$.
\end{proof}

Before we proceed to the proof of Theorem \ref{thm::boundtemp}, we first extend Theorem B.3 in \cite{haochen2021provable} to the temperature scaling loss by deriving the matrix factorization error bound under the weighted Frobenius norm.

\begin{lemma}\label{lem::weightbound}
	Let $f_{\mathrm{pop}}^* \in \arg\min_{f: \mathcal{X}\to \mathbb{R}^k} \mathcal{L}_{\mathrm{T}}(f)$ be a minimizer of the population temperature-scaling loss $\mathcal{L}_{\mathrm{T}}(f)$. Then for any labeling function $\hat{y}:\mathcal{X}\to [r]$, there exists a linear probe $B^* \in \mathbb{R}^{r\times k}$ with norm $\|B^*\|_F \leq 1/(1-\lambda_k)$ such that
	\begin{align}
		\mathbb{E}_{\bar{x}\sim\mathcal{P}_{\bar{X}}, x\sim \mathcal{A}(\cdot|\bar{x})} \Big[\|\vec{y}-B^*f_{\mathrm{pop}}^*(x)\|_2^2\Big] \leq \frac{\tilde{\phi}^{\hat{y}}}{1-\lambda_{k+1}} + 4\Delta(y,\hat{y}),
	\end{align}
	where 
	$\vec{y}(\bar{x})$ is the one-hot embedding of $y(\bar{x})$, and 
	\begin{align}
		\tilde{\phi}^{\hat{y}} = \sum_{x,x' \sim \mathcal{X}} \frac{w_{x,x'}}{\tau_{x,x'}^2} \boldsymbol{1}[\hat{y}(x)\neq \hat{y}(x')].
	\end{align} 
	Furthermore, the error can be bounded by
	\begin{align}
		\mathcal{E}_{\mathrm{T}} = \mathrm{Pr}_{\bar{x}\sim\mathcal{P}_{\bar{X}}, x\sim \mathcal{A}(\cdot|\bar{x})} \Big(g_{f_{\mathrm{pop}^*,B^*}}(x) \neq y(\bar{x})\Big) \leq \frac{2\tilde{\phi}^{\hat{y}}}{1-\lambda_{k+1}} + 8\Delta(y,\hat{y}).
	\end{align}
\end{lemma}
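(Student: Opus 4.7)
The plan is to adapt the spectral / matrix-factorization argument of Theorem~B.3 in \cite{haochen2021provable} to the weighted Frobenius norm $\|\cdot\|_{wF}$ that arose in Theorem~\ref{thm::losstemp}. The starting point is the identity $\mathcal{L}_{\mathrm{T}}(f) = \mathcal{L}_{\mathrm{mf-T}}(F) + \mathrm{const}$ for $F_x = \sqrt{w_x}\,f(x)$, so any population minimizer $f_{\mathrm{pop}}^*$ corresponds to a minimizer $F^*$ of $\|\boldsymbol{T}\odot\bar{\boldsymbol{A}} - FF^\top\|_{wF}^2$ over rank-$k$ factorizations. I would first characterize $F^*$ spectrally and then build the linear probe $B^*$ as a least-squares regressor of the one-hot label $\vec y$ onto $f_{\mathrm{pop}}^*(x)$ under $\mathcal{P}_{\mathcal{X}}$.

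Concretely, the argument proceeds in three moves. First, spectral identification: in the unweighted case, Eckart--Young identifies $F^*F^{*\top}$ with the top-$k$ eigenspace of $\bar{\boldsymbol{A}}$, and this is what makes $\lambda_{k+1}$ enter the bound. To handle the weighting, I would either (a) change variables $G_{x,x'}=(FF^\top)_{x,x'}/\tau_{x,x'}$ so that the weighted objective becomes a standard Frobenius problem for $G$ subject to a reweighted rank constraint, or (b) set up the associated generalized eigenvalue problem and read off the relevant spectral gap; either way, the $(k+1)$-th eigenvalue controlling the residual is the $\lambda_{k+1}$ referenced in the lemma. Second, probe construction: the norm bound $\|B^*\|_F \leq 1/(1-\lambda_k)$ follows from a Courant--Fischer estimate of the smallest non-trivial singular value of $F^*$ on the recovered top-$k$ subspace, mirroring the unweighted proof. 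Third, regression bound: I would decompose $\vec y = (\vec y - \hat{\vec y}) + \hat{\vec y}$, the first piece contributing at most $2\Delta(y,\hat y)$ in expectation, and the second piece being handled via a Dirichlet-energy identity that expresses the expected residual as a quadratic form in $\hat{\vec y}$ against an operator whose eigenvalues off the top-$k$ subspace are bounded by $\lambda_{k+1}$. The weighting $1/\tau_{x,x'}^2$ then enters the Dirichlet form exactly where it appears in the definition of $\tilde\phi^{\hat y}$, producing the target $\tilde\phi^{\hat y}/(1-\lambda_{k+1})$.

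The main obstacle is the first move. The weighted Frobenius norm destroys unitary invariance, so the standard Eckart--Young theorem does not directly identify the minimizer of $\|\boldsymbol{T}\odot\bar{\boldsymbol{A}} - FF^\top\|_{wF}^2$ with a clean top-eigenspace of a single symmetric matrix. The technical work is therefore to verify that the change of variables above (or an equivalent reduction) preserves the spectral structure enough to read off $\lambda_{k+1}$ as the relevant quantity, and that the off-subspace residual of the Dirichlet form inherits the weighting $1/\tau_{x,x'}^2$ in the correct way. Once this spectral step is secured, the remainder is bookkeeping: the final conversion of mean-squared regression error into classification error uses the standard Cauchy--Schwarz / $\ell_2$-to-$0$--$1$ estimate, picking up a factor of $2$ in front of $\tilde\phi^{\hat y}/(1-\lambda_{k+1})$ and doubling the $\Delta(y,\hat y)$ contribution, yielding the stated bound on $\mathcal{E}_{\mathrm{T}}$.
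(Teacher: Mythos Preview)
Your overall architecture matches the paper's: split $\vec y$ through $\hat{\vec y}$, control the $\hat{\vec y}$ piece by a spectral/Dirichlet argument producing $\tilde\phi^{\hat y}/(1-\lambda_{k+1})$, pick up $\Delta(y,\hat y)$ from the other piece, and finish with the $\ell_2$-to-$0/1$ conversion. Moves three through five are essentially what the paper does.

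Where you diverge is the first move, and the obstacle you flag is real but your proposed workarounds do not go through. Option (a) rewrites the objective as $\|\bar{\boldsymbol A}-G\|_F^2$ with $G_{x,x'}=(FF^\top)_{x,x'}/\tau_{x,x'}$, but the constraint then becomes that $\boldsymbol T\odot G$ has rank $\le k$, which is not a rank constraint on $G$ itself; elementwise scaling does not preserve rank, so you are back to a non-standard weighted low-rank approximation with no closed-form spectral solution. Option (b) is not concrete enough to evaluate.

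The paper avoids the weighted Eckart--Young problem altogether. It never tries to identify the minimizer of $\|\boldsymbol T\odot\bar{\boldsymbol A}-FF^\top\|_{wF}^2$ spectrally. Instead it works directly with the top-$k$ eigenvectors $v_1,\dots,v_k$ of the Laplacian $L=I-\bar{\boldsymbol A}$ and proves a projection lemma in the weighted norm: for any $u$, the residual $\bigl\|u-\sum_{i\le k}b_iv_i\bigr\|_w^2$ is bounded by $\tilde R(u)\,\|u\|_2^2/(1-\lambda_{k+1})$, where $\tilde R(u):=\tilde u^\top L\tilde u/\|u\|_2^2$ is a \emph{temperature-weighted Rayleigh quotient} with $\tilde u_i=u_i/\tau_i$. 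A second short lemma computes $\tilde R$ on the class-indicator vectors $u_i^{\hat y}(x)=\sqrt{w_x}\,\boldsymbol 1[\hat y(x)=i]$ and obtains exactly $\tfrac12\tilde\phi_i^{\hat y}$; summing over $i$ yields $\tilde\phi^{\hat y}$. The temperature weights are pushed into the Rayleigh quotient and into the norm in which the projection residual is measured, while the eigenbasis stays unweighted. This is the simpler route and sidesteps the lack of unitary invariance you were worried about; your plan would work if you replaced your first move by this pair of lemmas.
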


We also need the following two supporting lemmas to prove Lemma \ref{lem::weightbound}. 

\begin{lemma}\label{lem::b6}
	Let $L$ be the normalized Laplacian matrix of some graph $G$, $v_i$ be the $i$-th smallest unit-norm eigenvector of $\boldsymbol{L}$ with eigenvalue $1-\lambda_i$, and 
	$\tilde{R}(u): = \frac{\tilde{u}^\top \boldsymbol{L}\tilde{u}}{u^\top u}$ for a vector $u \in \mathbb{R}^N$, where $\tilde{u} = (u_i/\tau_i)_{i=1}^N$.
	Then for any $k \in \mathbb{Z}^+$ such that $k<N$ and $1-\lambda_{k+1}>0$, there exists a vector $b \in \mathbb{R}^k$ with norm $\|b\|_2 \leq \|u\|_2$ such that
	\begin{align}
		\Big\|u-\sum_{i=1}^k b_iv_i\Big\|_{w}^2 
		\leq \frac{\tilde{R}(u)}{1-\lambda_{k+1}} \|u\|_2^2,
	\end{align}
	where $\|\cdot\|$ denotes the weighted $l^2$-norm with weights $\tau^{-2} = (1/\tau_i^2)_{i=1}^N$.
\end{lemma}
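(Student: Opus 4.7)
The plan is to adapt the Courant--Fischer/Rayleigh-quotient argument underlying Lemma B.6 of \cite{haochen2021provable} to the weighted setting, treating $\tilde u = T^{-1}u$ (with $T := \mathrm{diag}(\tau_1, \ldots, \tau_N)$) as the natural ambient vector. This choice is forced by the identity $\|u\|_w^2 = \|\tilde u\|_2^2$ and by the fact that the numerator of $\tilde R(u)$ equals $\tilde u^\top L \tilde u$, so the whole estimate can be re-phrased entirely in terms of $\tilde u$ in the standard inner product.

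First, I would expand $\tilde u$ in the orthonormal eigenbasis of $L$, writing $\tilde u = \sum_i \tilde c_i v_i$ with $\tilde c_i = v_i^\top \tilde u$. Since the eigenvalues $1-\lambda_i$ are ordered nondecreasingly in $i$, the Rayleigh--Ritz inequality
\[
\tilde u^\top L \tilde u = \sum_i (1-\lambda_i)\tilde c_i^2 \ge (1-\lambda_{k+1})\sum_{i>k}\tilde c_i^2
\]
immediately yields the tail bound $\sum_{i>k}\tilde c_i^2 \le \tilde R(u)\|u\|_2^2/(1-\lambda_{k+1})$, which is the weighted analog of the usual spectral tail estimate and is the source of the $1/(1-\lambda_{k+1})$ factor in the target inequality.

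Next, I would pick $b \in \mathbb{R}^k$ to be a minimizer of the weighted-$\ell^2$ approximation error $\|u - \sum_{i \le k} b_i v_i\|_w^2$ and use the identity $\|u - Vb\|_w^2 = \|\tilde u - T^{-1}Vb\|_2^2$, with $V := [v_1, \ldots, v_k]$, to convert the problem into a standard least-squares fit of $\tilde u$ against the columns of $T^{-1}V$. The optimal residual then equals the squared distance from $\tilde u$ to the subspace $T^{-1}\mathrm{span}(V)$, and I would bound it from above by $\sum_{i>k}\tilde c_i^2$ by producing a concrete test vector in $T^{-1}\mathrm{span}(V)$ that tracks $\sum_{i\le k}\tilde c_i v_i$. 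Combined with Step~1, this yields the claimed inequality, while the norm bound $\|b\|_2 \le \|u\|_2$ follows from the projection/contractivity structure of the least-squares solution together with the orthonormality of $\{v_i\}$.

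The hard part will be the subspace-mismatch in the least-squares step: the search space $\mathrm{span}(V)$, in which the approximant of $u$ lives, and $T^{-1}\mathrm{span}(V)$, the natural image after rescaling by $T^{-1}$, are different in general, so $\|u - Vb\|_w^2$ cannot be equated with the unweighted tail $\sum_{i>k}\tilde c_i^2$ by pure symbol manipulation. Bridging this gap---either by constructing an explicit near-optimal $b$ in terms of the coefficients $\tilde c_{1:k}$, or by a perturbative argument that exploits the positivity and bounded range of the temperatures $\tau_i$ used in the surrounding theorems---is the technical heart of the proof; once that link is established, the remaining manipulations are essentially routine and parallel the unweighted case in \cite{haochen2021provable}.
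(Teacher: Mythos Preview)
Your route differs substantially from the paper's. The paper does not expand $\tilde u$ in the eigenbasis and does not set up a least-squares minimization at all. It expands $u$ itself as $u=\sum_i \zeta_i v_i$, takes $b_i=\zeta_i$ for $i\le k$ (so $\|b\|_2\le\|u\|_2$ is immediate from orthonormality), writes the weighted residual as $\bigl\|\sum_{i>k}\zeta_i v_i\bigr\|_w^2=\sum_{i>k}\zeta_i^2/\tau_i^2$, bounds each summand by $(1-\lambda_i)/(1-\lambda_{k+1})$ times itself, and then \emph{re-denotes} $\tilde u:=\sum_i(\zeta_i/\tau_i)v_i$ inside the proof so that the resulting sum is exactly $\tilde u^\top L\tilde u$. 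In effect, in the paper's argument the weights $\tau_i^{-2}$ are applied to eigenbasis coefficients, not to standard coordinates.

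The subspace-mismatch you isolate is therefore genuine under the coordinate-wise reading $\tilde u=(u_i/\tau_i)_i$ stated in the lemma: with that reading neither your least-squares residual $\min_b\|\tilde u-T^{-1}Vb\|_2^2$ nor the paper's equality $\bigl\|\sum_{i>k}\zeta_i v_i\bigr\|_w^2=\sum_{i>k}\zeta_i^2/\tau_i^2$ is justified without further structure, since $\{v_i\}$ is orthonormal for the standard inner product, not for the $T^{-2}$-weighted one. The paper does not resolve this via a perturbative or explicit-test-vector argument of the kind you sketch; it sidesteps it by silently shifting the meaning of $\tilde u$ (and implicitly of the weighted norm) to the eigenbasis. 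If you adopt that reading, your least-squares step collapses to the paper's one-line choice $b_i=\zeta_i$ and the ``hard part'' you anticipate simply disappears; if you insist on the coordinate-wise reading, the gap you flag remains open in both your outline and the paper's proof.
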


\begin{proof}[Proof of Lemma \ref{lem::b6}]
	We can decompose the vector $u$ in the eigenvector basis as
	\begin{align}
		u = \sum_{i=1}^N \zeta_i v_i.
	\end{align}
	Let $b \in \mathbb{R}^k$ be the vector such that $b_i=\zeta_i$. Then we have $\|b\|_2^2 \leq \|u\|_2^2$ and 
	\begin{align}
		\Big\|u-\sum_{i=1}^k b_iv_i\Big\|_{w}^2
		&= \|\sum_{i=k+1}^N \zeta_iv_i\|_{w}^2
		\nonumber\\
		&= \sum_{i=k+1}^N \zeta_i^2/\tau_i^2
		\nonumber\\
		&\leq \frac{1}{1-\lambda_{k+1}} \sum_{i=k+1}^N (1-\lambda_i)\zeta_i^2/\tau_i^2
		\nonumber\\
		&= \frac{1}{1-\lambda_{k+1}} \sum_{i=k+1}^N \zeta_i^2/\tau_i^2 v_i^\top (1-\lambda_i) v_i
		\nonumber\\
		&= \frac{1}{1-\lambda_{k+1}} \sum_{i=k+1}^N \zeta_i^2/\tau_i^2 v_i^\top \boldsymbol{L} v_i
		\nonumber\\
		&= \frac{1}{1-\lambda_{k+1}} \sum_{i=k+1}^N (\zeta_i/\tau_i \cdot v_i)^\top \boldsymbol{L} (\zeta_i/\tau_i \cdot v_i).
	\end{align}
	Denote $\tilde{u} = \sum_{i=1}\zeta_i/\tau_i \cdot v_i$ and $\tilde{R}(u): = \frac{\tilde{u}^\top \boldsymbol{L}\tilde{u}}{u^\top u}$. Then we have
	\begin{align}
		\Big\|u-\sum_{i=1}^k b_iv_i\Big\|_{w}^2 \leq \frac{\tilde{R}(u)}{1-\lambda_{k+1}} \|u\|_2^2.
	\end{align}
\end{proof}

\begin{lemma}\label{lem::b7}
	In the setting of Lemma \ref{lem::b6}, let $\hat{y}$ be an extended labeling function. Fix $i \in [r]$. Define function $u_i^{\hat{y}}(x) := \sqrt{w_x}\cdot \boldsymbol{1}[\hat{y}(x)=i]$ and $u_i^{\hat{y}}$ is the corresponding vector in $\mathbb{R}^N$. Also define the following quantity
	\begin{align}
		\tilde{\phi}_i^{\hat{y}} := \frac{\sum_{x,x' \in \mathcal{X}} w_{x,x'}/\tau_{x,x'}^2 \cdot \boldsymbol{1}[(\hat{y}(x) = i \wedge \hat{y}(x')\neq i) \text{ or } (\hat{y}(x) \neq i \wedge \hat{y}(x') = i)]}{\sum_{x \in \mathcal{X}} w_x \cdot \boldsymbol{1}[\hat{y}(x)=i]}.
	\end{align}
	Then we have
	\begin{align}
		\tilde{R}(u_i^{\hat{y}}) = \frac{1}{2}\tilde{\phi}_i^{\hat{y}}.
	\end{align}
\end{lemma}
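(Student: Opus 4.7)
The plan is a direct computation analogous to Lemma B.7 of \cite{haochen2021provable}, adapted to account for the temperature factors. The key observation is that, writing $\boldsymbol{L} = \boldsymbol{I} - \bar{\boldsymbol{A}}$ for the normalized Laplacian of the similarity graph, one has the standard quadratic form identity
\begin{align}
v^\top \boldsymbol{L} v = \tfrac{1}{2}\sum_{x,x'\in\mathcal{X}} w_{x,x'}\left(\frac{v_x}{\sqrt{w_x}} - \frac{v_{x'}}{\sqrt{w_{x'}}}\right)^2,
\end{align}
which follows from expanding $v^\top v - v^\top \bar{\boldsymbol{A}} v$, using $\sum_{x'}w_{x,x'} = w_x$ to rewrite $\sum_x v_x^2 = \sum_{x,x'}w_{x,x'}v_x^2/w_x$, symmetrizing via $w_{x,x'}=w_{x',x}$, and completing the square.

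First, I would compute the denominator of $\tilde{R}(u_i^{\hat{y}})$ directly. Since $(u_i^{\hat{y}})_x = \sqrt{w_x}\,\boldsymbol{1}[\hat{y}(x)=i]$, the squared indicator simplifies and
\begin{align}
(u_i^{\hat{y}})^\top u_i^{\hat{y}} = \sum_{x\in\mathcal{X}} w_x \,\boldsymbol{1}[\hat{y}(x)=i],
\end{align}
which matches the denominator of $\tilde{\phi}_i^{\hat{y}}$ exactly. Next, for the numerator, I would apply the Laplacian identity to $\tilde{u}_i^{\hat{y}}$, whose entries encode the vector $u_i^{\hat{y}}$ rescaled by the temperature factors as specified in Lemma \ref{lem::b6}. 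Observe that $(\tilde{u}_i^{\hat{y}})_x/\sqrt{w_x}$ is proportional to the indicator $\boldsymbol{1}[\hat{y}(x)=i]$ up to the temperature scaling, so the squared difference in the identity vanishes whenever $\hat{y}(x)=\hat{y}(x')$ (both in or both out of $\hat{y}^{-1}(i)$), and on every surviving edge where exactly one of $\hat{y}(x),\hat{y}(x')$ equals $i$, the squared difference reduces to $1/\tau_{x,x'}^2$. Summing gives
\begin{align}
(\tilde{u}_i^{\hat{y}})^\top \boldsymbol{L}\tilde{u}_i^{\hat{y}} = \tfrac{1}{2}\sum_{x,x'\in\mathcal{X}} \frac{w_{x,x'}}{\tau_{x,x'}^2}\,\boldsymbol{1}\!\left[(\hat{y}(x)=i \wedge \hat{y}(x')\neq i)\vee (\hat{y}(x)\neq i\wedge \hat{y}(x')=i)\right],
\end{align}
which is $\tfrac{1}{2}$ times the numerator of $\tilde{\phi}_i^{\hat{y}}$. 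Taking the ratio yields $\tilde{R}(u_i^{\hat{y}}) = \tfrac{1}{2}\tilde{\phi}_i^{\hat{y}}$, as claimed.

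The main obstacle I anticipate is the notational reconciliation between the per-vertex rescaling $\tilde{u}=(u_i/\tau_i)_{i=1}^N$ introduced in Lemma \ref{lem::b6} and the per-edge temperature $\tau_{x,x'}$ appearing in $\tilde{\phi}_i^{\hat{y}}$. One must verify that the convention underlying the definition of $\tilde{u}$ is chosen so that, inside the Laplacian quadratic form, each surviving edge $(x,x')$ picks up exactly the factor $1/\tau_{x,x'}^2$ rather than some mixture of single-index temperatures. This is consistent with the weighted Frobenius norm $\|\cdot\|_{wF}$ and with the matrix factorization reformulation given in Theorem \ref{thm::losstemp} and equation \eqref{eq::L_T}, so checking this compatibility carefully is the only nontrivial bookkeeping required; once it is done, the proof is a routine symmetric expansion.
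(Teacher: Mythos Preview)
Your proposal is correct and follows essentially the same route as the paper: both compute $\tilde{u}^\top \boldsymbol{L}\tilde{u}$ via the standard Laplacian quadratic-form identity, reduce it to $\tfrac{1}{2}\sum_{x,x'} (w_{x,x'}/\tau_{x,x'}^2)[f(x)-f(x')]^2$, and then specialize $f$ to the indicator $\boldsymbol{1}[\hat{y}(x)=i]$. The notational tension you flag between the per-vertex $\tau_i$ in Lemma~\ref{lem::b6} and the per-edge $\tau_{x,x'}$ here is real and is glossed over in the paper's own proof as well, so your caveat is apt but does not indicate any divergence in approach.
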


\begin{proof}[Proof of Lemma \ref{lem::b7}]
	Let $f$ be any function $\mathcal{X} \to \mathbb{R}$, define function $u(x) := \sqrt{w_x}\cdot f(x)$. Let $u \in \mathbb{R}^N$ be the vector corresponding to $u$. Then by definition of Laplacian matrix, we have
	\begin{align}
		\tilde{u}^\top \boldsymbol{L} \tilde{u} 
		&= \|\tilde{u}\|_2^2 - \tilde{u} \boldsymbol{D}^{-1/2} \boldsymbol{A} \boldsymbol{D}^{-1/2} \tilde{u}
		\nonumber\\
		&= \sum_{x \in \mathcal{X}} w_x/\tau_x^2 f(x)^2 - \sum_{x,x' \in \mathcal{X}} w_{x,x'}/\tau_{x,x'}^2 f(x)f(x')
		\nonumber\\
		&= \frac{1}{2} \sum_{x,x' \in \mathcal{X}} w_{x,x'}/\tau_{x,x'}^2 [f(x) - f(x')]^2.
	\end{align}
	Therefore we have
	\begin{align}
		\tilde{R}(u_i^{\hat{y}}) = \frac{1}{2} \frac{\sum_{x,x' \in \mathcal{X}} w_{x,x'}/\tau_{x,x'}^2 [f(x) - f(x')]^2}{\sum_{x \in \mathcal{X}} w_x f(x)^2}.
	\end{align}
	Setting $f(x)=\boldsymbol{1}[\hat{y}(x)=i]$ finishes the proof.
\end{proof}

\begin{proof}[Proof of Lemma \ref{lem::weightbound}]
	Let $F_{\mathrm{sc}}=[v_1, v_2, \ldots, v_k]$ be the matrix that contains the smallest $k$ eigenvectors of $\boldsymbol{L} = \boldsymbol{I} - \bar{\boldsymbol{A}}$ as columns, and $f_{\mathrm{sc}}$ is the corresponding feature extractor. By Lemma \ref{lem::b6}, there exists a vector $b_i \in \mathbb{R}^k$ with norm bound $\|b_i\|_2 \leq \|u_i^{\hat{y}}\|_2$ such that 
	\begin{align}
		\|u_i^{\hat{y}} - F_{\mathrm{sc}}b_i\|_w^2 \leq \frac{\tilde{R}(u_i^{\hat{y}})}{1-\lambda_{k+1}} \|u_i^{\hat{y}}\|_2^2.
	\end{align}
	Combined with Lemma \ref{lem::b7}, we have
	\begin{align}\label{eq::eq13}
		\|u_i^{\hat{y}} - F_{\mathrm{sc}}b_i\|_w^2 
		&\leq \frac{\tilde{\phi}_i^{\hat{y}}}{2(1-\lambda_{k+1})} \cdot \sum_{x \in \mathcal{X}\cdot \boldsymbol{1}[\hat{y}(x)=i]}
		\nonumber\\
	    &\hspace{-2cm}= \frac{1}{2(1-\lambda_{k+1})} \sum_{x,x' \in \mathcal{X}} w_{x,x'}/\tau_{x,x'}^2 \cdot \boldsymbol{1}[(\hat{y}(x) = i \wedge \hat{y}(x')\neq i) \text{ or } (\hat{y}(x) \neq i \wedge \hat{y}(x') = i)].
	\end{align}
	Let matrix $U := (u_i^{\hat{y}})_{i=1}^k$, and let $u: \mathcal{X} \to \mathbb{R}^k$ be the corresponding feature extractor. Define matrix $B \in \mathbb{R}^{N\times k}$ such that $B^\top = (b_1, \ldots, b_k)$. Summing \eqref{eq::eq13} over all $i \in [k]$ and by definition of $\tilde{\phi}^{\hat{y}}$ we have
	\begin{align}
		\|U - F_{\mathrm{sc}}B^\top\|_{wF}^2 
		\leq \frac{1}{2(1-\lambda_{k+1})} \sum_{x,x' \in \mathcal{X}} w_{x,x'}/\tau_{x,x'}^2 \cdot \boldsymbol{1}[\hat{y}(x) \neq \hat{y}(x')] = \frac{\tilde{\phi}^{\hat{y}}}{2(1-\lambda_{k+1})}.
	\end{align}
	By Theorem \ref{thm::losstemp}, for a feature extractor $f_{\mathrm{pop}}^*$ that minimizes the temperature scaling loss $\mathcal{L}_{\mathring{T}}$, the function $f_{\mathrm{mf}}^*(x) := \sqrt{w_x} \cdot f_{\mathrm{pop}}^*$ is a minimizer of the matrix factorization loss $\mathcal{L}_{\mathrm{mf-T}}$. Then we have
	\begin{align}
		&\mathbb{E}_{\bar{x}\sim\mathcal{P}_{\bar{X}}, x\sim \mathcal{A}(\cdot|\bar{x})} \|\vec{y}(x)-B^*f_{\mathrm{pop}}^*(x)\|_2^2
		\nonumber\\
        &\leq 2\mathbb{E}_{\bar{x}\sim\mathcal{P}_{\bar{X}}, x\sim \mathcal{A}(\cdot|\bar{x})} \|\vec{\hat{y}}(x)-B^*f_{\mathrm{pop}}^*(x)\|_2^2
		+ 2\mathbb{E}_{\bar{x}\sim\mathcal{P}_{\bar{X}}, x\sim \mathcal{A}(\cdot|\bar{x})} \|\vec{\hat{y}}(x) - \vec{y}(x)\|_2^2
		\nonumber\\
		&= 2\sum_{x \in \mathcal{X}} w_x \cdot \|\vec{\hat{y}}(x)-B^*f_{\mathrm{pop}}^*(x)\|_2^2 + 4\Delta(y,\hat{y})
		\nonumber\\
		&= 2\|U-F_{\mathrm{sc}}B^\top\|_{wF}^2 + 4\Delta(y,\hat{y})
		\nonumber\\
		&\leq \frac{\tilde{\phi}^{\hat{y}}}{1-\lambda_{k+1}} + 4\Delta(y,\hat{y}).
	\end{align}
\end{proof}

Then we move on to the formal proof of Theorem \ref{thm::boundtemp}.

\begin{proof}[Proof of Theorem \ref{thm::boundtemp}]
	According to \eqref{eq::barAwh} the proof of Theorem \ref{thm::boundmargin}, if we let
	\begin{align}
		\tau_{x,x'} = 
		\left\{
		\begin{aligned}
			&\frac{(1-\alpha)+n\alpha+nr\beta+n_dr(\gamma-\beta)}{(1-\alpha)+n\alpha+nr\beta}, &\text{ for } y(x)=y(x'), x,x' \in \mathbb{D}_d,\\
			&\frac{\sqrt{(1-\alpha)+n\alpha+nr\beta+n_dr(\gamma-\beta)}}{\sqrt{(1-\alpha)+n\alpha+nr\beta}}, &\text{ for } x \in \mathbb{D}_d \text{ or } x' \in \mathbb{D}_d,\\
			&\frac{[(1-\alpha)+n\alpha+nr\beta+n_dr(\gamma-\beta)]\beta}{[(1-\alpha)+n\alpha+nr\beta]\gamma} &\text{ for } y(x) \neq y(x'), x,x' \in \mathbb{D}_d,\\
			&1, &\text{ otherwise,}
		\end{aligned}
		\right.
	\end{align}
	then we have
	\begin{align}
		\tau_{x,x'} \cdot \frac{w_{x,x'}}{w_x w_{x'}} 
		=\left\{
		\begin{aligned}
			&\frac{1}{(1-\alpha)+n\alpha+nr\beta} &\text{ for } x=x',\\
			&\frac{\alpha}{(1-\alpha)+n\alpha+nr\beta} &\text{ for } x \neq x', y(x) = y(x'),\\
			&\frac{\beta}{(1-\alpha)+n\alpha+nr\beta} &\text{ otherwise.}
		\end{aligned}
		\right.
	\end{align}
	In this case, $\boldsymbol{T} \odot \bar{\boldsymbol{A}}$ is equivalent to the normalized similarity matrix of data without difficult examples. 
	
	By Lemma \ref{lem::weightbound}, we have
	\begin{align}
		\mathcal{E}_{\mathrm{T}} 
		\leq \frac{2\tilde{\phi}^{\hat{y}}}{1-\lambda_{k+1}} + 8\Delta(y,\hat{y}).
	\end{align}
	By Assumption \ref{ass::recoverable}, we have $\Delta(y,\hat{y}) \leq \delta$. Besides, since $\tau_{x,x'} \leq 1$ for $y(x) \neq y(x')$, $x,x' \in \mathbb{D}_{\mathrm{c}}$, and otherwise $\tau_{x,x'} \geq 1$, we have
	\begin{align}
		\tilde{\phi}^{\hat{y}} 
		&= \sum_{x,x' \in \mathcal{X}} w_{x,x'}/\tau_{x,x'}^2 \boldsymbol{1}[\hat{y}(x) \neq \hat{y}(x')]
		\nonumber\\
		&\leq \sum_{x,x' \in \mathcal{X} \setminus \{x,x': x,x' \in \mathbb{D}_{\mathrm{c}}\}} w_{x,x'} \boldsymbol{1}[\hat{y}(x) \neq \hat{y}(x')]
		+ \sum_{y(x) \neq y(x'), x,x' \in \mathbb{D}_{\mathrm{c}}} (\gamma/\beta)^2 w_{x,x'} \boldsymbol{1}[\hat{y}(x) \neq \hat{y}(x')]
		\nonumber\\
		&= \sum_{x,x' \in \mathcal{X} \setminus \{x,x': x,x' \in \mathbb{D}_{\mathrm{c}}\}} \mathbb{E}_{\bar{x} \sim \mathcal{P}_{\bar{\mathcal{X}}}}[\mathcal{A}(x|\bar{x}) \mathcal{A}(x'|\bar{x}) \cdot \boldsymbol{1}[\hat{y}(x) \neq \hat{y}(x')]]
            \nonumber\\
		&+ (\gamma/\beta)^2 \sum_{x,x' \in \mathbb{D}_{\mathrm{c}}} \mathbb{E}_{\bar{x} \sim \mathcal{P}_{\bar{\mathcal{X}}}}[\mathcal{A}(x|\bar{x}) \mathcal{A}(x'|\bar{x}) \cdot \boldsymbol{1}[\hat{y}(x) \neq \hat{y}(x')]]
		\nonumber\\
		&\leq \sum_{x,x' \in \mathcal{X} \setminus \{x,x': x,x' \in \mathbb{D}_{\mathrm{c}}\}} \mathbb{E}_{\bar{x} \sim \mathcal{P}_{\bar{\mathcal{X}}}}[\mathcal{A}(x|\bar{x}) \mathcal{A}(x'|\bar{x}) \cdot (\boldsymbol{1}[\hat{y}(x) \neq \hat{y}(\bar{x})] + \boldsymbol{1}[\hat{y}(x') \neq \hat{y}(\bar{x})])]
		\nonumber\\
		&+ (\gamma/\beta)^2 \sum_{x,x' \in \mathbb{D}_{\mathrm{c}}} \mathbb{E}_{\bar{x} \sim \mathcal{P}_{\bar{\mathcal{X}}}}[\mathcal{A}(x|\bar{x}) \mathcal{A}(x'|\bar{x}) \cdot (\boldsymbol{1}[\hat{y}(x) \neq \hat{y}(\bar{x})] + \boldsymbol{1}[\hat{y}(x') \neq \hat{y}(\bar{x})])]
		\nonumber\\
		&= 2[1-(n_d/n)^2] \mathbb{E}_{\bar{x} \sim \mathcal{P}_{\bar{\mathcal{X}}}} [\mathcal{A}(x|\bar{x}) \cdot \boldsymbol{1}[\hat{y}(x) \neq \hat{y}(\bar{x})]]
		\nonumber\\
        &+ 2(\gamma/\beta)^2(n_d/n)^2 \mathbb{E}_{\bar{x} \sim \mathcal{P}_{\bar{\mathcal{X}}}} [\mathcal{A}(x|\bar{x}) \cdot \boldsymbol{1}[\hat{y}(x) \neq \hat{y}(\bar{x})]]
		\nonumber\\
		&= 2[1-(n_d/n)^2+(\gamma/\beta)^2(n_d/n)^2] \delta.
	\end{align}

	Therefore we have
	\begin{align}
		\mathcal{E}_{\mathrm{T}} 
		\leq \frac{2\tilde{\phi}^{\hat{y}}}{1-\lambda_{k+1}} + 8\Delta(y,\hat{y})
		\leq [1-(n_d/n)^2+(\gamma/\beta)^2(n_d/n)^2]\cdot \frac{4 \delta}{1-\frac{1-\alpha}{(1-\alpha)+n\alpha+nr\beta}} + 8\delta.
	\end{align}
\end{proof}

\subsection{Relaxation on the ideal adjacency matrix}\label{sec::relaxation}

To enhance the connection of the theoretical modeling of difficult examples (Section \ref{sec::toymodel}) to real-world scenarios, we hereby discuss a possible relaxation on the ideal adjacency matrix of the similarity graph. 

The adjacency matrix could be relaxed by adding random terms to the similarity values. Specifically, we replace $\boldsymbol{A}$ with $\tilde{\boldsymbol{A}}=(\tilde{a}_{ij})$, where $\tilde{a}_{ii}=1$, and $\tilde{a}_{ij}=\tilde{a}_{ij}+\epsilon \cdot \varepsilon_{ij}$ for $i\neq j$, $a_{ij}$ takes values in $\{\alpha, \beta, \gamma\}$, $\varepsilon_{ij}=\varepsilon_{ji}$ are i.i.d. random variables with mean 0 and variance 1, $\epsilon>0$ is a small constant. 
Then $\tilde{\boldsymbol{A}}$ can be decomposed into 
\begin{align}
    \tilde{\boldsymbol{A}}=\boldsymbol{A} + \epsilon\cdot\boldsymbol{W} - \epsilon \cdot \mathrm{diag}(\varepsilon_{ii}),
\end{align}
where $\boldsymbol{W}$ turns out to be a real Wigner matrix. 
Note that as $\mathbb{E}\varepsilon_{ij}=0$, the normalization matrix $\tilde{\boldsymbol{D}} \to \mathbb{E} \tilde{\boldsymbol{D}}=\boldsymbol{D}$, as $n(r+1) \to \infty$, and therefore we have $\bar{\tilde{\boldsymbol{A}}}=\tilde{\boldsymbol{D}}^{-1/2}\tilde{\boldsymbol{A}}\tilde{\boldsymbol{D}}^{-1/2} \approx \boldsymbol{D}^{-1/2}\tilde{\boldsymbol{A}}\boldsymbol{D}^{-1/2}$. 

For mathematical convenience, in the following analysis, we instead perform the relaxation on the normalized adjacency matrix $\bar{\boldsymbol{A}}$, and investigate 
\begin{align}
    \tilde{\bar{\boldsymbol{A}}}=\bar{\boldsymbol{A}} + \epsilon'\cdot\boldsymbol{W}' - \epsilon' \cdot \mathrm{diag}(\varepsilon_{ii}),
\end{align}
where $\epsilon>0$ and $\boldsymbol{W}$ is a Wigner's matrix. 

\begin{theorem}[Generalized version of Theorem \ref{thm::boundwo}]\label{thm::boundwo_relax}
    Denote $\mathcal{E}_{\mathrm{w.o.}}$ as the linear probing error of a contrastive learning model trained on a dataset without difficult examples. Under the generalized assumption that $\boldsymbol{A}' = \boldsymbol{A} + \epsilon\boldsymbol{W}$, where $\boldsymbol{W}$ is a Wigner matrix with $\varepsilon_{ii}$ following the Dirac distribution, then if $n(r+1)$ is large enough, we have $$ \mathcal{E}_{\mathrm{w.o.}} \leq \frac{4\delta}{1 - \frac{1-\alpha}{(1-\alpha)+n\alpha+nr\beta}-\frac{1}{(1-\alpha)+n(\alpha+r\beta)}x_0\cdot\epsilon}+8\delta, $$ where $x_0 \in (0,2)$ is the unique solution to the following Kepler's equation $$ \frac{1}{2}x_0\sqrt{4-x_0^2}+2\arg\sin(x_0/2)=\Big[1-\frac{2}{r+1}\frac{n_d}{n}\Big]\pi. $$
\end{theorem}

\begin{theorem}[Generalized version of Theorem \ref{thm::boundwh}]\label{thm::boundwh_relax}
    Denote $\mathcal{E}_{\mathrm{w.d.}}$ as the linear probing error of a contrastive learning model trained on a dataset with $n_d$ difficult examples per class. Under the generalized assumption that $\boldsymbol{A}' = \boldsymbol{A} + \epsilon\boldsymbol{W}$, where $\boldsymbol{W}$ is a Wigner matrix with $\varepsilon_{ii}$ following the Dirac distribution, if $n(r+1)$ is large enough and $r+1 \leq k < n_d+r+1$, we have $$ \mathcal{E}_{\mathrm{w.d.}} \leq \frac{4\delta}{1 - \frac{(1-\alpha)+r(\gamma-\beta)}{(1-\alpha)+n\alpha+nr\beta+n_dr(\gamma-\beta)}-\frac{1}{(1-\alpha)+n(\alpha+r\beta)}x_0\cdot\epsilon}+8\delta. $$
\end{theorem}

\textbf{Remark 1.} (Value of $x_0$) We derive the value of $x_0$ through numerical methods for multiple datasets, by using the empirical values of $\alpha$, $\beta$, and $\gamma$ calculated on the proxy augmentation graph. We have $x_0=1.894$ for CIFAR-10, $x_0=1.976$ for CIFAR-100, and $x_0=1.995$ for Imagenet-1k. Intuitively, according to Wigner's Semicircle Law, because $n_d \ll n(r+1)$, the value of $x_0$ is near $2$.

\textbf{Remark 2.} (Range of $\epsilon$) The bounds are valid if $\frac{1-\alpha}{(1-\alpha)+n\alpha+nr\beta}+\frac{1}{(1-\alpha)+n(\alpha+r\beta)}x_0\cdot\epsilon <1$ and $\frac{(1-\alpha)+r(\gamma-\beta)}{(1-\alpha)+n\alpha+nr\beta+n_dr(\gamma-\beta)}+\frac{1}{(1-\alpha)+n(\alpha+r\beta)}x_0\cdot\epsilon < 1$. As $\frac{(1-\alpha)+r(\gamma-\beta)}{(1-\alpha)+n\alpha+nr\beta+n_dr(\gamma-\beta)} > \frac{1-\alpha}{(1-\alpha)+n\alpha+nr\beta}$, we require $\epsilon < \epsilon_{\mathrm{bound}} = \frac{1-\frac{(1-\alpha)+r(\gamma-\beta)}{(1-\alpha)+n\alpha+nr\beta+n_dr(\gamma-\beta)}}{\frac{1}{(1-\alpha)+n(\alpha+r\beta)}x_0}$. 

\textbf{Remark 3.} (Existing conclusions still hold) 
Note that the effect of the random similarity $\epsilon \cdot \varepsilon_{ij}$ is to add an additional term to the upper bound of the eigenvalue, and the effect is the same with and without the existence of difficult examples.
When $\epsilon=0$, Theorems \ref{thm::boundwo_relax} and \ref{thm::boundwh_relax} degenerates to Theorems \ref{thm::boundwo} and \ref{thm::boundwh}.
. Moreover, as Corollary \ref{thm::boundremove} can be directly derived by Theorem \ref{thm::boundwo}, the generalized version becomes $\mathcal{E}_\mathrm{R} \leq \frac{4\delta}{1 - \frac{1-\alpha}{(1-\alpha)+(n-n_d)\alpha+(n-n_d)r\beta}-\frac{1}{(1-\alpha)+(n-n_d)(\alpha+r\beta)}x_0\cdot\epsilon}+8\delta$. Similarly, as the bounds in Theorems \ref{thm::boundmargin} and \ref{thm::boundtemp} are based on a modified similarity matrix, we have $\mathcal{E}_\mathrm{M} = \mathcal{E}_{\mathrm{w.o.}}$ (Theorem \ref{thm::boundmargin}) and $\mathcal{E}_{\mathrm{T}} \leq \frac{4[1-(n_d/n)^2+(\gamma/\beta)^2(n_d/n)^2] \delta}{1-\frac{1-\alpha}{(1-\alpha)+n\alpha+nr\beta}-\frac{1}{(1-\alpha)+n(\alpha+r\beta)}x_0\cdot\epsilon} + 8\delta$ (Theorem \ref{thm::boundtemp}), where the theoretical insights of these two theorems remain unchanged. That is, even under the generalized assumptions, we still have the conclusion that sample removal, margin tuning, and temperature scaling improve the error bound under the existence of difficult examples.

\begin{proof}
    Because $\mathbb{E}\boldsymbol{W}=\boldsymbol{0}$, when $n(r+1)$ is large enough, after normalization, we have $\bar{\boldsymbol{A}}' = \bar{\boldsymbol{A}} + \frac{1}{(1-\alpha)+n(\alpha+r\beta)}\cdot\epsilon\boldsymbol{W}$. By Equation 13 in Fulton (2000), when $r+1 \leq k < n_d+r+1$, we have the $k+1$-th largest eigenvalue of $\bar{\boldsymbol{A}}'$ satisfying $$ \lambda'_{k+1} \leq \min_{i+j=k+2} \lambda_i + \frac{1}{(1-\alpha)+n(\alpha+r\beta)}\cdot\epsilon \nu_j \leq \lambda_{r+2} + \frac{1}{(1-\alpha)+n(\alpha+r\beta)}\cdot\epsilon\nu_{n_d}, $$ where $\lambda_i$ is the $i$-th largest eigenvalue of $\bar{\boldsymbol{A}}$ and $\nu_j$ is the $j$-th largest eigenvalue of $\boldsymbol{W}$.

On the one hand, according to the proofs of Theorems \ref{thm::boundwo} and \ref{thm::boundwh}, we have $\lambda_{r+2} = \frac{1-\alpha}{(1-\alpha)+n(\alpha+r\beta)}$ (Theorem \ref{thm::boundwo}) and $\lambda_{r+2} \leq \frac{(1-\alpha)+r(\gamma-\beta)}{(1-\alpha)+n(\alpha+r\beta)+n_dr(\gamma-\beta)}$ (Theorem \ref{thm::boundwo}).

On the other hand, Because $\boldsymbol{W}$ is a Wigner matrix, we have its empirical spectral measure $\nu=\frac{1}{n(r+1)}\sum_{i=1}^{n(r+1)} \delta_{\nu_i}$ converging weakly almost surely to the quarter-circle distribution on $[0, 2]$, with density $f(\nu)=\frac{1}{2\pi}\sqrt{4-x^2}\boldsymbol{1}[|x|\leq 2]$. When $j \leq n(r+1)/2$ and $n(r+1)$ large enough, by symmetry of $f(\nu)$, we have 
\begin{align}\label{eq::kepler}
    \frac{1}{2}\Big[1-\frac{2j}{n(r+1)}\Big] = \int_{x=0}^{\nu_j} f(\nu) ,d\nu = \frac{1}{2\pi} \Big[\frac{1}{2}\nu_j\sqrt{4-\nu_j^2} + 2\arg\sin(\nu_j/2)\Big].
\end{align}

Then combine the above calculations, we have $\lambda'_{k+1} \leq \frac{1-\alpha}{(1-\alpha)+n(\alpha+r\beta)} + \nu_j$ for the generalized Theorem 3.1 and $\lambda'_{k+1} \leq \frac{(1-\alpha)+r(\gamma-\beta)}{(1-\alpha)+n(\alpha+r\beta)+n_dr(\gamma-\beta)} + \frac{1}{(1-\alpha)+n(\alpha+r\beta)}\cdot\epsilon \nu_j$, where $\nu_j$ is the solution to \eqref{eq::kepler}. Then we complete the proof by deriving the error bounds using the upper bounds of $\lambda'_{k+1}$.
\end{proof}

\subsection{Discussions on the Tightness of Bounds}

As a possible extension, we discuss that a potential lower bound related to the $\lambda_{k}$, which according to similar proofs of Theorem \ref{thm::boundwh}, also increases with $\gamma-\beta$ increasing. Besides, in Appendix \ref{ape::futherdis}, we empirically verify that the trending of the downstream classification error, which also verify the reliability of our upper bounds. Please see the following for more details.

To derive the potential lower bound, we are mainly required to modify Theorem B.6 in \citet{haochen2021provable} to an alternative "lower bound" form (because our analysis is based on Theorem B.3 in \citet{haochen2021provable} and a key step to its proof is Theorem B.6). Specifically, with the notations of Theorem B.6 (while replacing $\lambda_i$ with $1-\lambda_i$ to be consistent with our submission), we have 
\begin{align}
\|u - \sum_{i=1}^k b_iv_i\|_2^2 
&= \sum_{i=k+1}^N \zeta_i^2 
\geq \frac{1}{(1-\lambda_{N})} [\sum_{i=1}^N (1-\lambda_i) \zeta_i^2 - \sum_{i=1}^k (1-\lambda_i) \zeta_i^2]
\nonumber\\
&\geq \frac{1}{(1-\lambda_{N})} [R\|u\|_2^2 - (1-\lambda_k) \sum_{i=1}^k\|b_i\|^2].
\end{align}
According to \citet{fulton2000eigenvalues}, we have $\lambda_k \geq \max_{i+j=n+k} \lambda_{1,i} + \lambda_{2,j}$, that is, we have $\lambda_N \geq [(1-\alpha)-r(\gamma-\beta)]/c_1$ and $\lambda_k \geq [(1-\alpha)+r(\gamma-\beta)]/c_1$ for $k\leq n_d$. Additionally, by Claim B.7 and Lemma B.8 in \citet{haochen2021provable} and some additional assumption that $\sum_{i=1}^k\|b_i\|^2\leq C$, where $C>0$ is a constant, we obtain the lower bound as
\begin{align}
\mathcal{E}_{\mathrm{w.d.}} \geq \frac{1}{1-\frac{(1-\alpha)-r(\gamma-\beta)}{c_1}}\Big[\frac{1}{2}\phi^{\hat{y}} - 1 + \frac{(1-\alpha)+r(\gamma-\beta)}{c_1}C\Big].    
\end{align}
Note that $\frac{1}{1-\frac{(1-\alpha)-r(\gamma-\beta)}{c_1}}$ decreases and $\frac{1}{1-\frac{(1-\alpha)-r(\gamma-\beta)}{c_1}} \cdot \frac{(1-\alpha)+r(\gamma-\beta)}{c_1}$ increases as $\gamma-\beta$ increases. Then when the labeling error $\phi^{\hat{y}}$ is relatively small, the lower bound gets larger with $\gamma-\beta$ increasing, which coincides with the insight of Theorem \ref{thm::boundwh}.

In addition, in Table \ref{tab:mixratioanaly}, we empirically show the trending of our derived bounds, which also to some extend indicates the tightness of bounds. Specifically, on a Mixed CIFAR synthetic dataset, we show that as the difficult examples increases, the classification error and the bound share the same variation trend, thus validating Theorem \ref{thm::boundwh} that a larger $\gamma-\beta$ results in worse error.


\section{Usage of LLM}
We commit to using LLMs for text polishing based on prompts. All polished text are double-checked by authors to ensure accuracy, avoid over-claims, and prevent confusion.

\end{document}